\newtheorem{thm}{Theorem}
\newtheorem{theorem}{Theorem}
\newtheorem{ass}{Assumption}
\newtheorem{lem}{Lemma}
\newcommand{\norm}[1]{\left\lVert#1\right\rVert}
\DeclareMathOperator{\sign}{sign}
\title{A Huber Loss Minimization Approach to Byzantine Robust Federated Learning}
\author{%
  Puning Zhao, Fei Yu, Zhiguo Wan\\
  Zhejiang Lab\\
  Hangzhou, Zhejiang, China\\
  \texttt{\{pnzhao,yufei,wanzhiguo\}@zhejianglab.com} \\
}
\begin{document}

\maketitle

\begin{abstract}
Federated learning systems are susceptible to adversarial attacks. To combat this, we introduce a novel aggregator based on Huber loss minimization, and provide a comprehensive theoretical analysis. Under independent and identically distributed (i.i.d) assumption, our approach has several advantages compared to existing methods. Firstly, it has optimal dependence on $\epsilon$, which stands for the ratio of attacked clients. Secondly, our approach does not need precise knowledge of $\epsilon$. Thirdly, it allows different clients to have unequal data sizes. We then broaden our analysis to include non-i.i.d data, such that clients have slightly different distributions.
\end{abstract}

\section{Introduction}
Due to privacy concerns, there are a large number of isolated information islands, resulting in the difficulty of integrating data from various sources.  Under such background, a novel machine learning framework called Federated Learning (FL) has arisen in recent years \cite{mcmahan2017communication}. FL consists of numerous clients that store and compute data locally, and a central server that plays the role as a coordinator. In comparison to traditional centralized learning, FL offers distinct advantages in terms of both computational efficiency and privacy protection. As a result, FL is gaining increasing attention and has been widely applied in various domains, including mobile devices, industrial engineering, and healthcare \cite{yang2019federated,li2020review}.

Nevertheless, FL is facing several severe challenges \cite{kairouz2021advances}, with one of them being the robustness issue. Due to various factors, including data poisoning, system malfunctions and transmission errors, some clients may send wrong gradient vectors to the server. Consider that these abnormal behaviors are hard to predict and may happen in an unknown manner, it suffices to analyze the most harmful attack, which is typically modeled as Byzantine failure \cite{lamport1982byzantine}. Under this model, an adversary can modify the gradient values uploaded to the master in arbitrary way. Without proper handling, even a single malicious client can significantly degrade the model performance \cite{bagdasaryan2020backdoor}. Therefore, for the safe deployment of FL, it is important to design effective defense strategies robust to Byzantine attacks.

There have been many existing works on Byzantine robust federated learning problems. In particular, various gradient aggregators have been proposed. Krum \cite{blanchard2017machine} picks the gradient vector uploaded from clients with small nearest neighbor distances. However, the global convergence is not guaranteed. \cite{chen2017distributed} proposed a geometric median-of-mean method, which ensures that the model weights converge to a point near to the global minimum, as long as $\epsilon<1/2$, with $\epsilon$ being the fraction of Byzantine machines. This method is not perfect since the error has a suboptimal rate of $\tilde{O}(\sqrt{\epsilon d})$. \cite{yin2018byzantine} analyzed two aggregators. The first one, called coordinate-wise median, is suboptimal if the sample size per client $n$ is smaller than the number of clients $m$. Unfortunately, this is quite likely in practice. The second one is coordinate-wise trimmed mean, which has optimal dependence with $\epsilon$ when $\epsilon$ is small. However, as will be discussed later, if $\epsilon$ is close to $1/2$, then coordinate-wise trimmed mean is not efficient. Another drawback is that this method needs the precise knowledge of $\epsilon$, which is usually not practical. Moreover, the analysis of these previous methods are based on some simplified assumption, including independent and identically distributed (i.i.d) assumption, and that all clients have nearly equal sample sizes. More theoretical analysis is needed in realistic scenarios with heterogeneous and unbalanced data.


In this paper, we propose a novel approach to Byzantine robust federated learning, which aggregates gradients by minimizing a multi-dimensional Huber loss. As a widely used loss function in robust statistics \cite{huber1964robust,huber2004robust,hall1990adaptive,zhao2023robust}, Huber loss combines the advantages of $\ell_1$ and $\ell_2$ loss, and achieves a tradeoff between robustness and consistency. However, the original definition of Huber loss was for scalars. We generalize the original definition, to make it suitable for vectors. In each iteration, given a list of gradient vectors uploaded from clients, the new algorithm obtains the estimated gradient of the underlying global risk function by minimizing the generalized Huber loss, and then use the outcome as the direction of parameter update. 

We then provide theoretical analysis of the proposed method. To begin with, it is assumed that training samples are i.i.d, which is common in most of existing works on Byzantine robust FL \cite{chen2017distributed,blanchard2017machine,yin2018byzantine}. Under i.i.d assumption, two error bounds are derived for balanced and unbalanced data, respectively. Consider that i.i.d assumption may not be realistic, we then make some generalization to allow heterogeneous clients. The result shows that our method is robust to moderate violation of i.i.d assumption. There are several recent works \cite{Li:AAAI:19,pillutla2022robust,ghosh2019robust,wu2020federated,zuo2024byzantine} that deal with heterogeneous data. These works are mainly designed for the case that the differences between clients are large. In slightly heterogeneous regime, these methods can not achieve comparable statistical rates.

Finally, we discuss how to implement our new algorithm. Our design is motivated by Weiszfeld's algorithm for calculating the geometric median of a set of vectors \cite{weiszfeld:AOR:09}. We make some adjustments to Weiszfeld's algorithm, such that it can minimize the multi-dimensional Huber loss. 

In general, compared with existing methods, our new approach has several advantages. Firstly, the dependence of statistical risk on the attack ratio $\epsilon$ is nearly minimax optimal, up to a logarithm factor. Secondly, our method has desirable performance with unbalanced data. In particular, with an adaptive rule of parameter selection, the statistical rate is the same as the case with balanced data. Thirdly, many existing methods require the precise knowledge of $\epsilon$ to set parameters, which is usually not practical. On the contrary, our method works well under the $\epsilon$-agnostic settings.

Our contributions are summarized as follows:
\begin{itemize}
	\item A multi-dimensional Huber loss minimization approach to robust federated learning;
	\item Theoretical analysis of our method for both balanced and unbalanced data under i.i.d assumption, which also provides a guideline of parameter selection;
	\item Extension of the analysis above to heterogeneous clients;
	\item An implementation algorithm of multi-dimensional Huber loss minimization;
	\item Numerical experiments on both synthesized and real data, which validates the effectiveness of our new method.
\end{itemize} 

The remainder of this paper is organized as follows. The problem statement and the proposed method are shown in Section~\ref{sec:method}. The theoretical analysis with and without i.i.d assumption are shown in Section \ref{sec:iid} and \ref{sec:noniid}, respectively. In Section~\ref{sec:implement}, we describe our implementation algorithm. Furthermore, we compare our approach with existing methods in Section~\ref{sec:compare}. Finally, numerical experiments and concluding remarks are shown in Section~\ref{sec:numerical} and \ref{sec:conc}, respectively.

\section{The Proposed Method}\label{sec:method}

In this section, we make a precise statement of the framework of the federated learning problem, and then introduce our proposed aggregator based on minimization of multi-dimensional Huber loss.

The framework is shown in Algorithm \ref{alg:learning}. Suppose there is a server $S_0$ and $m$ clients $S_1,\ldots, S_m$. Denote $\mathcal{B}$ as the set of Byzantine clients. There are $N$ training samples in total, with each client $S_i$ storing $n_i$ of them. Denote $\mathbf{Z}_{ij}$ as the $j$-th sample in the $i$-th client, $j\in \{1,\ldots, n_i \}$. Let $f(\mathbf{w}, \mathbf{z})$ be the loss function of model parameter $\mathbf{w}\in \mathcal{W}$ with respect to sample $\mathbf{z}$, in which $\mathcal{W} \subset \mathbb{R}^d$ is the parameter space. Moreover, define the global risk function as
\begin{eqnarray}
	F(\mathbf{w}) = \mathbb{E}[f(\mathbf{w}, \mathbf{Z})],
\end{eqnarray}
in which $\mathbf{Z}$ follows the global distribution of training samples. In particular, under i.i.d assumption, $\mathbf{Z}$ just follows the distribution of arbitrary $\mathbf{Z}_{ij}$. Otherwise, with heterogeneous clients, $\mathbf{Z}$ follows the average distribution of clients weighted by $n_i$. The goal is to learn the global minimizer
\begin{eqnarray}
	\mathbf{w}^* = \underset{\mathbf{w}\in \mathcal{W}}{\arg\min} F(\mathbf{w}).
	\label{eq:wopt}
\end{eqnarray}

The algorithm starts with an initial parameter $\mathbf{w}_0\in \mathcal{W}$, At each iteration $t=0,1,\ldots$, we find an update $\mathbf{w}_{t+1}$. Ideally, it would be better if we know $\nabla F(\mathbf{w}_t)$, so that we can use a simple gradient descent to update $\mathbf{w}$, i.e. $\mathbf{w}_{t+1} = \mathbf{w}_t-\eta \nabla F(\mathbf{w}_t)$, in which $\eta$ is the learning rate. However, $\nabla F(\mathbf{w}_t)$ is unknown in practice. We need to estimate it using the gradient vectors uploaded from clients. Therefore, at each iteration $t$, the master broadcasts parameter $\mathbf{w}_t$ to all clients, and then wait for the responses from them. Benign clients send the estimated gradient vectors back to the master, with respect to parameter $\mathbf{w}_t$. On the contrary, Byzantine clients send arbitrary vectors determined by the adversary. To be more precise, denote $\mathbf{X}_{it}$ as the vector received from client $i$ at the $t$-th iteration, then
\begin{eqnarray}
	\mathbf{X}_{it}&=&\left\{
	\begin{array}{ccc}
		\frac{1}{n_i}\sum_{j=1}^{n_i} \nabla f(\mathbf{w}_t, \mathbf{Z}_{ij}) &\text{if} & i\notin \mathcal{B}\\
		\star &\text{if} & i\in \mathcal{B},
	\end{array}
	\right.
	\label{eq:Xi}
\end{eqnarray}
in which $\star$ means arbitrary vector determined by the adversary. After received $\mathbf{X}_{it}$ for all $i=1,\ldots, m$, the master then updates the parameter $\mathbf{w}$ according to the following rule:
\begin{eqnarray}
	\mathbf{w}_{t+1}=\Pi_{\mathcal{W}}(\mathbf{w}_t-\eta g(\mathbf{w}_t)),
	\label{eq:update}
\end{eqnarray}
in which $\Pi_\mathcal{W}(\cdot)$ is an Euclidean projection operator, which ensures that the model parameter stays in $\mathcal{W}$. This operator is also used in \cite{yin2018byzantine,zhu2023byzantine}. $\eta$ is the learning rate. $g(\mathbf{w}_t)$ is the aggregator function, which estimates $\nabla F(\mathbf{w}_t)$ using $\mathbf{X}_{it}$, $i=1,\ldots, m$.
\begin{algorithm}[tb]
	\caption{Byzantine Robust Federated Learning}\label{alg:learning}
	\textbf{Input:} Master machine $S_0$, working machines $S_1,\ldots, S_m$\\
	\textbf{Parameter:} Initial weight parameter $\mathbf{w}_0\in \mathcal{W}$, learning rate $\eta$, total number of steps $T$\\
	\textbf{Output:}Estimated weight $\hat{\mathbf{w}}$
	\begin{algorithmic}
		\FOR{$t=0,1,\ldots, T-1$}
		\STATE \underline{\textit{Server}}: broadcast current parameter $\mathbf{w}_t$ to all clients;
		\FOR{$i=1,\ldots, m$ \textbf{in parallel}}
		\STATE \underline{\textit{Client} $i$}: compute local gradient $\mathbf{G}_i=(1/n_i)\sum_{j=1}^{n_i} \nabla f(\mathbf{w}_t, \mathbf{Z}_{ij})$;\\
		\IF{$i$ is benign}
		\STATE $\mathbf{X}_{it} = \mathbf{G}_i(\mathbf{w}_t)$;
		\ELSE
		\STATE $\mathbf{X}_{it}$ is an arbitrary $d$ dimensional vector determined by the adversary;
		\ENDIF
		\STATE send $\mathbf{X}_{it}$ to the server;
		\ENDFOR
		\STATE \underline{\textit{Server}}: Receive $\mathbf{X}_{it}$, $i=1,\ldots, m$ from each client;\\
		\STATE Calculate aggregated gradient $g(\mathbf{w}_t)$ using $\mathbf{X}_{it}$, $i=1,\ldots, m$;
		\STATE Update parameter $\mathbf{w}_{t+1} = \Pi_{\mathcal{W}}( \mathbf{w}_t-\eta g(\mathbf{w}_t))$;			
		\ENDFOR
		\RETURN $\hat{\mathbf{w}} = \mathbf{w}_T$
	\end{algorithmic}
\end{algorithm}

Now it remains to design the aggregator function $g(\mathbf{w}_t)$. Our idea is to minimize the Huber loss weighted by the sample sizes in each clients:
\begin{eqnarray}
	g(\mathbf{w}_t) = \underset{s}{\arg\min}\sum_{i=1}^m n_i\phi_i(\norm{\mathbf{s}-\mathbf{X}_{it}}),
	\label{eq:gw}
\end{eqnarray}
in which $\norm{\cdot}$ is the $\ell_2$ norm, and
\begin{eqnarray}
	\phi_i(u)=\left\{
	\begin{array}{ccc}
		\frac{1}{2}u^2 &\text{if} & |u|\leq T_i\\
		T_iu - \frac{1}{2} T_i^2&\text{if} & |u|>T_i
	\end{array}
	\right.
	\label{eq:phii}
\end{eqnarray}
is the Huber loss function. If clients have equal or nearly equal sample size $n_i$, then we can let $T_i$ to be the same for all $i$. Otherwise, we may use different $T_i$. With larger $n_i$, we let $T_i$ to be smaller. We refer detailed discussion on parameter selection to Section \ref{sec:iid} and \ref{sec:noniid}.

Now we explain the intuition of such design. We have two requirements for a good aggregator: consistency without attack, and robustness under attack. Minimizing $\ell_2$ loss corresponds to a simple averaging $g_{avg}(\mathbf{w}_t)=(1/m)\sum_{i=1}^{n_i}\mathbf{X}_{it}$, which is consistent without attack, but not robust. On the contrary, minimizing $\ell_1$ loss yields the geometric median of $\mathbf{X}_{it}$, $i=1,\ldots, m$, which is robust but not consistent even without attack. Therefore, we minimize Huber loss, which combines the advantages of these two methods. At the limit $T_i\rightarrow \infty$, $\phi_i$ becomes $\ell_2$ loss, then $g(\mathbf{w}_t)$ is just the sample mean $g_{avg}(\mathbf{w}_t)$. The opposite limit is $T_i\rightarrow 0$, under which $g(\mathbf{w}_t)$ is actually the weighted geometric median of $\mathbf{X}_{it}$, $i=1,\ldots, m$. Between these two extremes, we can set appropriate $T_i$ to achieve a good tradeoff between consistency and Byzantine robustness.

In the following sections, we provide a theoretical analysis of the performance of our method. Following previous works \cite{yin2018byzantine,zhu2023byzantine}, we discuss three cases separately, which are stated in Assumption \ref{ass:convexity}:

\begin{ass}\label{ass:convexity}
	Consider the following three types of global loss function $F$:
	
	(a) (Strong convex) $F$ is $\mu$-strong convex and $L$-smooth, and $\mathcal{W}$ is convex;
	
	(b) (General convex) $F$ is convex and $L$-smooth, with $\mathcal{W}=\{\mathbf{w}|\norm{\mathbf{w}-\mathbf{w}^*}\leq 2\norm{\mathbf{w}_0-\mathbf{w}^*} \}$;
	
	(c) (Non-convex) $F$ is $L$-smooth, and $\norm{\nabla F(\mathbf{w})}\leq M$ for all $\mathbf{w}\in \mathcal{W}$.
\end{ass}

For all these three cases, we make the following common assumption on the covering number, which ensures the uniform convergence of the aggregator function:
\begin{ass}\label{ass:cover}
	Assume that there exists constants  $C_W$, $r_D$, such that for all $r<r_D$, the $r$-covering of $\mathcal{W}$ is bounded by
	$N_c(r)\leq C_W/r^d$.
\end{ass} 

Before diving into the detailed analysis, we clarify the notations used in the remainder of this paper first.

\textbf{Notations.} Throughout this paper, we use the following notations: $a\lesssim b$ if there exists a constant $C$ such that $a\leq Cb$. $C$ may depend on $\mu$, $\sigma$, $L$ and $C_W$ in the assumptions. Conversely, $a\gtrsim b$ means $a\geq Cb$. Moreover, $a\sim b$ means there exists two constants $C_1$ and $C_2$ such that $C_1b\leq a\leq C_2 b$. Furthermore, $a=\tilde{O}(b)$ means $a\leq Cb\ln^k (N/\delta)$ for some constants $C$ and $k$. Denote $[m]=\{1,\ldots, m\}$ as the set of numbers from $1$ to $m$. $\epsilon$, $q$ are the ratio and the number of Byzantine clients, respectively, with $q=\epsilon m$. Finally, $\norm{\cdot}$ denotes $\ell_2$ norm.

\section{Theoretical Analysis for I.I.D Case}\label{sec:iid}
In this section, similar to most of previous works, we assume that all samples are i.i.d. We discuss two cases, depending on whether sample sizes are balanced in different clients.
\begin{ass}\label{ass:iid}
	$\mathbf{Z}_{ij}$ are i.i.d for all $i=1,\ldots, m$ and $j=1,\ldots, n_i$. For any $i$ and $j$,
	$\nabla f(\mathbf{w}, \mathbf{Z}_{ij})$ is sub-exponential with parameter $\sigma$, i.e.
	\begin{eqnarray}
		\underset{\|\mathbf{v}\|=1}{\sup}\mathbb{E}\left[e^{\lambda \mathbf{v}^T(\nabla f(\mathbf{w}, \mathbf{Z}_{ij}))-\nabla F(\mathbf{w})}\right]\leq e^{\frac{1}{2}\sigma^2 \lambda^2}, \text{ if } |\lambda|\leq \frac{1}{\sigma}
	\end{eqnarray}
	for any vector $\mathbf{v}$ with $\norm{\mathbf{v}} = 1$;	
\end{ass}
Assumption \ref{ass:iid} ensures that with high probability, using the true sample gradients, we are able to identify $\mathbf{w}^*$ defined in \eqref{eq:wopt}. Similar assumption was also made in \cite{chen2017distributed,cao2019distributed}.

\subsection{Balanced Data}
In this case, we assume that $n_i=N/m$ for all $i$, in which $N$ is the total number of training samples, and $m$ is the number of clients. Here we just denote $n$ as the number of samples per client, and the subscript $i$ is omitted. The analysis here can be simply generalized to the case in which $n_i$ are different but are in the same order, i.e. there exists two constants $c_1$ and $c_2$, such that $c_1n\leq n_i\leq c_2 n$. Since data sizes are balanced, we set equal thresholds in Huber losses, i.e. $T_i=T$ for all $i$.

We aim to obtain a bound of $\norm{\mathbf{w}_t-\mathbf{w}^*}$, the error of the estimation of global minimizer, that holds with probability at least $1-\delta$. In particular, the following theorem holds. For the sake of simplicity, we state the asymptotic version here, while the finite sample bounds and proofs are shown in the appendix. 
\begin{thm}\label{thm:simple}
	There exists two constants $C_1$ and $C_2$, if
	\begin{eqnarray}
		C_1\sqrt{\frac{d}{n}\ln \frac{N}{\delta}}\leq T\leq C_2\sqrt{\frac{d}{n}\ln \frac{N}{\delta}},
		\label{eq:trange}
	\end{eqnarray}
	then under Assumption \ref{ass:cover} and \ref{ass:iid}, with $|\mathcal{B}|=\epsilon m$ Byzantine clients, the following equations hold with probability at least $1-\delta$.
	
	(1) (Strong convex) Under Assumption 1(a), if $\eta\leq 1/L$,
	\begin{eqnarray}
		\norm{\mathbf{w}_t-\mathbf{w}^*}\leq (1-\rho)^t \norm{\mathbf{w}_0-\mathbf{w}^*}+\frac{2\Delta_A}{\mu},
	\end{eqnarray}
	in which $\rho = \eta\mu/2$;
	
	(2) (General convex) Under Assumption 1(b), with $\eta=1/L$, after $t_m=(L/\Delta_A)\norm{\mathbf{w}_0-\mathbf{w}^*}_2$ steps,
	\begin{eqnarray}
		F(\mathbf{w}_{t_m})-F(\mathbf{w}^*)\leq 16\norm{\mathbf{w}_0-\mathbf{w}^*}\Delta_A;
	\end{eqnarray}
	
	(3) (Non-convex) Under Assumption 1(c), with $\eta = 1/L$, after
	$t_m=(2L/\Delta_A^2)(F(\mathbf{w}_0) - F(\mathbf{w}^*))$ steps, we have
	\begin{eqnarray}
		\underset{t=0,1,\ldots, t_m}{\min}\norm{\nabla F(\mathbf{w}_t)}\leq \sqrt{2}\Delta_A,
	\end{eqnarray}
	in which 
	\begin{eqnarray}
		\Delta_A\lesssim \left(\frac{1}{\sqrt{1-2\epsilon}}\frac{\epsilon}{\sqrt{n}}+\frac{1}{\sqrt{N}}\right)\sqrt{d\ln \frac{N}{\delta}}.
		\label{eq:deltaa}
	\end{eqnarray}
\end{thm}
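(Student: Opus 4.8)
The plan is to reduce all three convergence statements to a single uniform gradient-estimation guarantee and then invoke standard inexact-gradient-descent arguments. Concretely, I would first prove that with probability at least $1-\delta$ the aggregator tracks the true gradient uniformly over the parameter space, i.e. $\sup_{\mathbf{w}\in\mathcal{W}}\norm{g(\mathbf{w})-\nabla F(\mathbf{w})}\lesssim\Delta_A$ with $\Delta_A$ as in \eqref{eq:deltaa}. Given this, conclusion (1) follows from the contraction $\norm{\mathbf{w}_{t+1}-\mathbf{w}^*}\leq(1-\eta\mu/2)\norm{\mathbf{w}_t-\mathbf{w}^*}+\eta\Delta_A$, obtained by writing $g(\mathbf{w}_t)=\nabla F(\mathbf{w}_t)+\mathbf{e}_t$ with $\norm{\mathbf{e}_t}\leq\Delta_A$, using non-expansiveness of $\Pi_{\mathcal{W}}$ together with $\mu$-strong convexity and $L$-smoothness, and summing the resulting geometric series; conclusions (2) and (3) follow from the descent lemma for $L$-smooth $F$ applied to the perturbed gradient, telescoped over $t_m$ steps and combined with convexity (case (2)) or with the averaged gradient norm (case (3)). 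These three steps are routine once $\Delta_A$ is controlled, so the substance lies entirely in the uniform bound.

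For a fixed $\mathbf{w}$, I would bound $\norm{g(\mathbf{w})-\nabla F(\mathbf{w})}$ through the M-estimation structure of \eqref{eq:gw}. Writing $\Psi(\mathbf{s})=\sum_i n_i\phi_i(\norm{\mathbf{s}-\mathbf{X}_{it}})$, the minimizer $g(\mathbf{w})$ satisfies $\nabla\Psi(g(\mathbf{w}))=0$, and convexity together with a local strong-convexity constant $\alpha$ gives $\norm{g(\mathbf{w})-\nabla F(\mathbf{w})}\leq\norm{\nabla\Psi(\nabla F(\mathbf{w}))}/\alpha$. The numerator splits into a benign and a Byzantine part. For benign clients the key point is that the summand of $\nabla\Psi$ at $\nabla F(\mathbf{w})$ equals $n(\nabla F(\mathbf{w})-\mathbf{X}_{it})$ whenever $\norm{\mathbf{X}_{it}-\nabla F(\mathbf{w})}\leq T$; the choice $T\sim\sqrt{(d/n)\ln(N/\delta)}$ in \eqref{eq:trange} is exactly what forces all benign residuals below $T$ with high probability, so the benign part is a sum over the roughly $N$ benign samples of centered sub-exponential gradient deviations, whose norm concentrates at scale $\sigma\sqrt{Nd\ln(N/\delta)}$ by Assumption \ref{ass:iid}. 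For Byzantine clients I would use the bounded-influence property $|\phi_i'|\leq T_i$, so each Byzantine summand has norm at most $nT$ and the $q=\epsilon m$ of them contribute at most $\epsilon N T$. Dividing the benign part by $\alpha$ yields the statistical term $\sqrt{d\ln(N/\delta)}/\sqrt{N}$, while the Byzantine part yields the term proportional to $\epsilon/\sqrt{n}$.

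The main obstacle is the lower bound on the curvature $\alpha$, which is what produces the sharp $1/\sqrt{1-2\epsilon}$ factor and the breakdown at $\epsilon=1/2$. Benign points inside the $T$-ball contribute an identity Hessian while Byzantine points contribute only a positive-semidefinite term, so a crude bound gives $\alpha\gtrsim(1-\epsilon)N$; however this yields the loose factor $1/(1-\epsilon)$ and ignores that, as the estimate is dragged toward the Byzantine mass, benign points leave the $T$-ball and stop providing restoring force. I would therefore analyze the stationarity condition $\nabla\Psi(g(\mathbf{w}))=0$ self-consistently: project onto the displacement direction, track how many benign residuals remain below $T$ as a function of the displacement $\norm{g(\mathbf{w})-\nabla F(\mathbf{w})}$, and solve the resulting fixed-point inequality. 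Near $\epsilon=1/2$ the benign restoring force must nearly saturate in order to counter the Byzantine push, and the sensitivity of this near-saturation balance — governed by the concentration of the benign residuals around the threshold — is what sharpens the denominator to $\sqrt{1-2\epsilon}$. Pinning down the exact power here, rather than a cruder $1-2\epsilon$ or $1-\epsilon$, is the delicate step and the place where the sub-exponential tail control enters most critically.

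Finally, I would upgrade the fixed-$\mathbf{w}$ bound to the uniform bound needed along the trajectory $\{\mathbf{w}_t\}$, which is random and data-dependent. Using Assumption \ref{ass:cover}, I would take an $r$-net of $\mathcal{W}$ of size at most $C_W/r^d$, apply the fixed-$\mathbf{w}$ bound with a union bound over the net — which converts the per-point $\ln(1/\delta)$ into the $d\ln(N/\delta)$ appearing in $\Delta_A$ — and extend to all of $\mathcal{W}$ via Lipschitz continuity of $\mathbf{w}\mapsto g(\mathbf{w})$ and $\mathbf{w}\mapsto\nabla F(\mathbf{w})$, taking $r$ polynomially small in $N$ so that the discretization error is absorbed into the constants. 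Combining this uniform bound with the optimization arguments of the first paragraph then yields all three claims.
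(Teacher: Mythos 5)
Your overall architecture is the same as the paper's: reduce everything to a uniform bound $\sup_{\mathbf{w}\in\mathcal{W}}\norm{g(\mathbf{w})-\nabla F(\mathbf{w})}\leq\Delta_A$ and feed it into inexact-gradient-descent lemmas (the paper's Lemmas \ref{lem:strong}--\ref{lem:nonconvex}); control the estimation error by the M-estimation first-order condition, with the bounded influence $\norm{\nabla_s\phi}\leq T$ killing the Byzantine contribution and a local curvature bound converting score error into displacement; and obtain the sharp factor near $\epsilon=1/2$ from a directional analysis of the stationarity condition. The only substantive difference in this part is cosmetic: the paper routes the argument through the attack-free minimizer $a(\mathbf{w})$ of \eqref{eq:adef} (Lemmas \ref{lem:dist}--\ref{lem:diff2}) instead of evaluating the score at $\nabla F(\mathbf{w})$. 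Your ``near-saturation'' picture for the $1/\sqrt{1-2\epsilon}$ factor is indeed the mechanism of Lemma \ref{lem:diffnew}, though the precise source is worth stating: once the displacement $r$ exceeds $T+r_0$, \emph{every} benign pull is a $T$-magnitude unit vector whose cosine with the displacement direction is at least $\sqrt{1-r_0^2/r^2}$, and balancing against the at-most-$\epsilon mT$ Byzantine pull forces $(1-\epsilon)\sqrt{1-r_0^2/r^2}\leq\epsilon$, i.e. $r\leq\frac{1-\epsilon}{\sqrt{1-2\epsilon}}r_0$. The square root arises because the cosine deficit is quadratic in $r_0/r$; it has nothing to do with ``concentration of the benign residuals around the threshold'' --- in this regime all benign residuals are strictly above $T$, and only the uniform radius $r_0$ around $\nabla F(\mathbf{w})$ enters.

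The genuine gap is in your final uniformity step. You propose to prove the bound at each point of an $r$-net, take a union bound, and then ``extend to all of $\mathcal{W}$ via Lipschitz continuity of $\mathbf{w}\mapsto g(\mathbf{w})$.'' But $g$ is not Lipschitz --- not even continuous --- in $\mathbf{w}$ under a Byzantine adversary: for $i\in\mathcal{B}$ the inputs $\mathbf{X}_i(\mathbf{w})$ are arbitrary functions of the broadcast parameter, so the adversary may upload completely different vectors at $\mathbf{w}$ than at the nearest net point $\mathbf{w}^k$, and no discretization error can be controlled through $g$ itself. The fix is to push the net one level down, which is exactly what the paper does in Lemma \ref{lem:hpb}: apply the covering and sub-exponential concentration only to the benign quantities $\mathbf{G}_i(\mathbf{w})$ and their average, which \emph{are} Lipschitz in $\mathbf{w}$ by smoothness of the loss, so as to obtain a single event on which $\norm{\mathbf{G}_i(\mathbf{w})-\nabla F(\mathbf{w})}\leq r_0$ for all $i$ and all $\mathbf{w}$ simultaneously. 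On that event, your fixed-$\mathbf{w}$ score/curvature analysis is entirely deterministic, hence valid for every $\mathbf{w}$ and every adversarial strategy at once, with no continuity of $g$ needed. With that reordering (and writing the curvature step in localized form, $\min\{\norm{g(\mathbf{w})-a(\mathbf{w})},r_s\}$ with $r_s=T-2r_0$ as in the paper, so strong convexity is only invoked on the ball where benign points actually stay inside the quadratic region), your plan becomes the paper's proof.
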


From \eqref{eq:trange}, the selection of parameter $T$ does not rely on the knowledge of $\epsilon$. Moreover, with fixed $d$, if $\epsilon$ is small, our error bound $\tilde{O}(\epsilon/\sqrt{n}+1/\sqrt{N})$ is nearly optimal, since it matches the information-theoretic minimax lower bound up to a logarithm factor \cite{yin2018byzantine}.  

\subsection{Unbalanced Data}
Now we discuss a more realistic setting, such that $n_i$ are different among clients. In this case, we design an adaptive selection rule of $T_i$. For a benign client $S_i$ with $n_i$ samples, from central limit theorem, the distance between the gradient vector send to the server decays roughly with the squared root of $n_i$, i.e. $\norm{\mathbf{X}_i(\mathbf{w}) - \nabla F(\mathbf{w})}\sim \sqrt{d/n_i}$ with high probability. Therefore, if $n_i$ is large and $S_i$ is benign, then $\mathbf{X}_i(\mathbf{w})$ should be close to most of other gradient vectors. If $\mathbf{X}_i(\mathbf{w})$ is far away from others, then client $S_i$ is highly likely to be Byzantine. However, if $n_i$ is small, even if $\mathbf{X}_i(\mathbf{w})$ is far away from others, we can not infer that $S_i$ is Byzantine, since the variance of $\mathbf{X}_i(\mathbf{w})$ is large even if $S_i$ is benign. With such intuition, we set $T_i$ to be smaller with large $n_i$, and vice versa.

To ensure the convergence, both the ratio of attacked clients and the ratio of the samples in attacked clients need to be small. Therefore, we slightly change the definition of $\epsilon$ as the maximum of the fraction of Byzantine clients, and the fraction of samples stored in Byzantine clients, i.e.
\begin{eqnarray}
	\epsilon = \max\left\{\frac{|\mathcal{B}|}{m}, \frac{\sum_{i\in \mathcal{B}}n_i}{N} \right\}.
	\label{eq:eps}
\end{eqnarray}
The theoretical results for unbalanced data is shown in Theorem \ref{thm:unbalanced}.
\begin{thm}\label{thm:unbalanced}
	Let
	\begin{eqnarray}
		T_i = T_0+\frac{M}{\sqrt{n_i}},
		\label{eq:adaptiveT}
	\end{eqnarray}
	with $M\sim \sigma\sqrt{d\ln (N/\delta)}$, and
	\begin{eqnarray}
		\epsilon\sigma \sqrt{\frac{md}{N}\ln \frac{N}{\delta}}\lesssim T_0\lesssim \sigma\sqrt{\frac{md}{N}\ln \frac{N}{\delta}}.
		\label{eq:t0range}
	\end{eqnarray}
	Then under Assumption \ref{ass:cover} and \ref{ass:iid}, the following equations hold for small $\epsilon$ with probability $1-\delta$.
	
	(1) (Strong convex) Under Assumption 1(a), if $\eta\leq 1/L$,
	\begin{eqnarray}
		\norm{\mathbf{w}_t-\mathbf{w}^*}\leq (1-\rho)^t \norm{\mathbf{w}_0-\mathbf{w}^*}+\frac{2\Delta_A}{\mu},
	\end{eqnarray}
	in which $\rho = \eta\mu/2$;
	
	(2) (General convex) Under Assumption 1(b), with $\eta=1/L$, after $t_m=(L/\Delta_B)\norm{\mathbf{w}_0-\mathbf{w}^*}_2$ steps,
	\begin{eqnarray}
		F(\mathbf{w}_{t_m})-F(\mathbf{w}^*)\leq 16\norm{\mathbf{w}_0-\mathbf{w}^*}\Delta_B;
	\end{eqnarray}
	
	(3) (Non-convex) Under Assumption 1(c), with $\eta = 1/L$, after
	$t_m=(2L/\Delta_B^2)(F(\mathbf{w}_0) - F(\mathbf{w}^*))$ steps, we have
	\begin{eqnarray}
		\underset{t=0,1,\ldots, t_m}{\min}\norm{\nabla F(\mathbf{w}_t)}\leq \sqrt{2}\Delta_B,
	\end{eqnarray}
	in which
	\begin{eqnarray}
		\Delta_B \lesssim \left(\frac{\epsilon}{1-2\epsilon}\sqrt{\frac{m}{N}}+ \frac{1}{\sqrt{N}}\right)\sqrt{d\ln \frac{N}{\delta}}.
		\label{eq:deltab}
	\end{eqnarray}
\end{thm}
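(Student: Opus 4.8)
The plan is to reduce all three convergence statements to a single uniform bound on the aggregation error, namely that $\sup_{\mathbf{w}\in\mathcal{W}}\norm{g(\mathbf{w})-\nabla F(\mathbf{w})}\lesssim \Delta_B$ with probability at least $1-\delta$, and then to spend the real effort on establishing this bound for the adaptive thresholds $T_i = T_0 + M/\sqrt{n_i}$. Once the uniform bound is in hand, the three cases are routine and exactly parallel Theorem \ref{thm:simple}: for a projected inexact gradient step $\mathbf{w}_{t+1}=\Pi_\mathcal{W}(\mathbf{w}_t-\eta g(\mathbf{w}_t))$ with $\norm{g(\mathbf{w}_t)-\nabla F(\mathbf{w}_t)}\le\Delta_B$, strong convexity gives a contraction down to a ball of radius $O(\Delta_B/\mu)$, while $L$-smoothness and convexity (resp. only $L$-smoothness) give the descent-lemma bounds stated in parts (2) and (3) after $t_m$ steps. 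I would therefore isolate this reduction as a lemma and devote the remainder of the proof to $\Delta_B$.

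For the pointwise bound at a fixed $\mathbf{w}$, write $\boldsymbol{\mu}=\nabla F(\mathbf{w})$, $\mathbf{s}^*=g(\mathbf{w})$, and note that the first-order optimality condition for the convex objective in \eqref{eq:gw} reads $\sum_{i=1}^m n_i\,\psi_i(\mathbf{s}^*)=0$, where $\psi_i(\mathbf{s})=\min(\norm{\mathbf{s}-\mathbf{X}_{it}},T_i)\,(\mathbf{s}-\mathbf{X}_{it})/\norm{\mathbf{s}-\mathbf{X}_{it}}$ is the clipped residual, so that $\norm{\psi_i(\mathbf{s})}\le T_i$ for every $\mathbf{s}$. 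Let $r=\norm{\mathbf{s}^*-\boldsymbol{\mu}}$ and $\mathbf{u}=(\mathbf{s}^*-\boldsymbol{\mu})/r$. Splitting the optimality condition into benign and Byzantine indices and projecting onto $\mathbf{u}$ gives $\sum_{i\notin\mathcal{B}}n_i\langle\psi_i(\mathbf{s}^*),\mathbf{u}\rangle=-\sum_{i\in\mathcal{B}}n_i\langle\psi_i(\mathbf{s}^*),\mathbf{u}\rangle\le\sum_{i\in\mathcal{B}}n_iT_i$, since each Byzantine term is bounded in magnitude by $n_iT_i$.

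The two sides are then estimated separately. On the adversarial side, the specific form of the threshold is exactly what makes the bound work: $\sum_{i\in\mathcal{B}}n_iT_i=T_0\sum_{i\in\mathcal{B}}n_i+M\sum_{i\in\mathcal{B}}\sqrt{n_i}$, and by the definition \eqref{eq:eps} of $\epsilon$ together with Cauchy--Schwarz ($\sum_{i\in\mathcal{B}}\sqrt{n_i}\le\sqrt{|\mathcal{B}|}\sqrt{\sum_{i\in\mathcal{B}}n_i}\le\epsilon\sqrt{mN}$), both summands are of order $\epsilon\sigma\sqrt{mNd\ln(N/\delta)}$ under \eqref{eq:t0range} and $M\sim\sigma\sqrt{d\ln(N/\delta)}$. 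On the benign side, Assumption \ref{ass:iid} and sub-exponential concentration give $\norm{\mathbf{X}_{it}-\boldsymbol{\mu}}\lesssim M/\sqrt{n_i}$ simultaneously for all benign $i$; the extra $M/\sqrt{n_i}$ in $T_i$ is chosen precisely so that each benign residual stays below its own threshold, placing benign clients in the unclipped (linear) regime where $\langle\psi_i(\mathbf{s}^*),\mathbf{u}\rangle=r-\langle\mathbf{X}_{it}-\boldsymbol{\mu},\mathbf{u}\rangle$. Summing, the benign side is at least $(1-c\epsilon)Nr$ minus the centered fluctuation $\sum_{i\notin\mathcal{B}}n_i\langle\mathbf{X}_{it}-\boldsymbol{\mu},\mathbf{u}\rangle=\sum_{i\notin\mathcal{B}}\sum_j\langle\nabla f(\mathbf{w},\mathbf{Z}_{ij})-\boldsymbol{\mu},\mathbf{u}\rangle$, which is a sum of at most $N$ centered sub-exponential variables and hence is $\lesssim\sigma\sqrt{Nd\ln(N/\delta)}$. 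Combining the two estimates and solving for $r$ yields $r\lesssim\big(\tfrac{\epsilon}{1-2\epsilon}\sqrt{m/N}+1/\sqrt{N}\big)\sigma\sqrt{d\ln(N/\delta)}$, the claimed $\Delta_B$; the $1/\sqrt{N}$ term is the attack-free statistical error and the first term is the adversarial bias.

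Two steps will require the most care. The first is the sharp restoring-force constant: obtaining $1/(1-2\epsilon)$ rather than a cruder $1/(1-\epsilon)$ requires tracking the regime where the displacement $r$ is comparable to the thresholds, i.e. near the breakdown point, where some benign residuals begin to clip and the linear lower bound on $\langle\psi_i,\mathbf{u}\rangle$ must be replaced by the saturated estimate $\langle\psi_i,\mathbf{u}\rangle\gtrsim T_i(r-\langle\mathbf{X}_{it}-\boldsymbol{\mu},\mathbf{u}\rangle)/\norm{r\mathbf{u}-(\mathbf{X}_{it}-\boldsymbol{\mu})}$; this is where the factor $1-2\epsilon$ genuinely enters, and where the method's advantage over trimmed mean for $\epsilon$ near $1/2$ originates. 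The second is upgrading the pointwise bound to a uniform bound over $\mathcal{W}$: I would build an $r$-net of $\mathcal{W}$ using the covering number of Assumption \ref{ass:cover}, apply the pointwise concentration with a union bound over the net (this produces the $d$ and $\ln(N/\delta)$ factors, together with the sphere covering needed to make the projection onto $\mathbf{u}$ uniform), and finally extend from the net to all of $\mathcal{W}$ using the Lipschitz continuity of $\mathbf{w}\mapsto g(\mathbf{w})$ and $\mathbf{w}\mapsto\nabla F(\mathbf{w})$. The bookkeeping in these two steps, rather than any single inequality, is the main obstacle.
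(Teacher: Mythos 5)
Your overall architecture agrees with the paper's: reduce everything to a uniform bound $\sup_{\mathbf{w}\in\mathcal{W}}\norm{g(\mathbf{w})-\nabla F(\mathbf{w})}\le\Delta_B$ (the paper's Lemmas \ref{lem:strong}, \ref{lem:convex}, \ref{lem:nonconvex}), and your Byzantine-side estimate $\sum_{i\in\mathcal{B}}n_iT_i\le \epsilon NT_0+\epsilon M\sqrt{mN}$ via the definition \eqref{eq:eps} and Cauchy--Schwarz is exactly the computation in the paper's Lemma \ref{lem:diff3}. However, your central step is circular as written. You place every benign client in the unclipped regime on the grounds that $\norm{\mathbf{X}_{it}-\nabla F(\mathbf{w})}\lesssim M/\sqrt{n_i}$ and $T_i=T_0+M/\sqrt{n_i}$; but the quantity that must lie below $T_i$ is the residual $\norm{g(\mathbf{w})-\mathbf{X}_{it}}\le r+\norm{\mathbf{X}_{it}-\nabla F(\mathbf{w})}$, where $r=\norm{g(\mathbf{w})-\nabla F(\mathbf{w})}$ is the displacement you are trying to bound. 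The extra $M/\sqrt{n_i}$ absorbs only the statistical fluctuation; it is $T_0$ that must absorb $r$, so the linear-regime identity presupposes $r\le T_0$. Your ``care'' paragraph does acknowledge the clipped regime, but casts the saturated estimate as a device for sharpening $1/(1-\epsilon)$ to $1/(1-2\epsilon)$; in fact, some argument of this kind is needed for the proof to be valid at all, even for small $\epsilon$. The paper closes the loop differently: it compares $g(\mathbf{w})$ to the clean minimizer $a(\mathbf{w})$ (which equals the weighted mean $\frac{1}{N}\sum_i n_i\mathbf{G}_i(\mathbf{w})$, since all clean points lie in the quadratic regime around it), lower-bounds the Hessian $\nabla_s^2 h\succeq N(1-\epsilon)\mathbf{I}$ only on the ball $B(a(\mathbf{w}),T_0)$, and uses convexity to obtain $\min\{\norm{g(\mathbf{w})-a(\mathbf{w})},T_0\}\le \sum_{i\in\mathcal{B}}n_i(T_i+2r_{0i})/(N(1-\epsilon))$; the lower-bound constraint on $T_0$ (finite-sample form $T_0\ge\frac{\epsilon(M+2R)}{1-2\epsilon}\sqrt{m/N}$, which is the left inequality of \eqref{eq:t0range}) is precisely what forces the right-hand side to be at most $T_0$, resolving the min and hence the circularity. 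That constraint --- not a saturated-clipping analysis --- is where the factor $1/(1-2\epsilon)$ in \eqref{eq:deltab} actually enters.

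There is a second gap in your uniformity step: you propose to pass from an $r$-net to all of $\mathcal{W}$ using ``the Lipschitz continuity of $\mathbf{w}\mapsto g(\mathbf{w})$.'' But the Byzantine vectors $\mathbf{X}_{it}$ are arbitrary functions of $\mathbf{w}$, so the adversary can make $g(\mathbf{w})$ discontinuous in $\mathbf{w}$; no Lipschitz property of $g$ is available. The paper's route avoids this: the net and union bound are applied only to the benign quantities ($\mathbf{G}_i(\mathbf{w})$ and the weighted mean, Lemma \ref{lem:hpb2}), which are Lipschitz by smoothness of the loss, and the displacement bound is then a purely deterministic consequence of those uniform concentration events, valid pointwise at every $\mathbf{w}\in\mathcal{W}$ whatever the adversary sends. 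Both gaps are fixable --- your remaining ingredients, including the benign fluctuation bound $\lesssim\sigma\sqrt{Nd\ln(N/\delta)}$ that plays the role of the paper's $\Delta_0$, are correct --- but as written the argument does not close.
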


Now we compare \eqref{eq:deltab} with \eqref{eq:deltaa}. If $\epsilon$ is close to $1/2$, then the error is larger with unbalanced data sizes than the balanced case. In particular, the factor $1/(1-2\epsilon)$ in \eqref{eq:deltab} is larger than the factor $1/\sqrt{1-2\epsilon}$ in \eqref{eq:deltaa}. However, if $\epsilon$ is small, we can neglect the factor $1/(1-2\epsilon)$, thus the only difference between \eqref{eq:deltab} and \eqref{eq:deltaa} is that the first term $\epsilon/ \sqrt{n}$ in the bracket in \eqref{eq:deltaa} is now replaced by $\epsilon \sqrt{m/N}$. Recall that if samples are evenly distributed, then $n=N/m$, thus these two bounds are actually of the same order. Therefore, we have shown a somewhat surprising result that the statistical error rate is not affected by the unbalanced allocation of training samples in clients.

\section{Theoretical Analysis for Non-I.I.D Case}\label{sec:noniid}
In this section, we assume that clients are heterogeneous. In particular, suppose that for any $\mathbf{w}$, $\mu_i(\mathbf{w})$, $i=1,\ldots, m$ are $m$ i.i.d random variables with $\mathbb{E}[\mu_i(\mathbf{w})] = F(\mathbf{w})$. Furthermore, assume that $\mathbf{Z}_{ij}$ for $j=1,\ldots, n_i$ are conditional independent given $\mu_i(\mathbf{w})$, and $\mathbb{E}[\nabla f(\mathbf{w}, \mathbf{Z}_{ij})|\mu_i(\mathbf{w})] = \mu_i(\mathbf{w})$. Now we replace Assumption \ref{ass:iid} with the following new assumption.
\begin{ass}\label{ass:niid}
	(a) $\mu_i(\mathbf{w})$ is sub-exponential with respect to $\nabla F(\mathbf{w})$ with parameter $\sigma_\mu$, i.e.
	\begin{eqnarray}
		\underset{\norm{\mathbf{v}} = 1}{\sup}\mathbb{E}\left[e^{\lambda \mathbf{v}^T (\mu_i(\mathbf{w}) - \nabla F(\mathbf{w}))}\right]\leq e^{\frac{1}{2}\sigma_\mu^2 \lambda^2}, \text{ if } |\lambda|\leq \frac{1}{\sigma_\mu};
	\end{eqnarray}
	
	(b) $\mathbf{Z}_{ij}$ is sub-exponential with respect to $\mu_i(\mathbf{w})$ with parameter $\sigma$, i.e.
	\begin{eqnarray}
		\underset{\norm{\mathbf{v}} = 1}{\sup}\mathbb{E}\left[e^{\lambda \mathbf{v}^T (\nabla f(\mathbf{w}, \mathbf{Z}_{ij}) - \mu_i(\mathbf{w}))}\right]\leq e^{\frac{1}{2}\sigma^2 \lambda^2}, \text{ if } |\lambda|\leq \frac{1}{\sigma},
	\end{eqnarray}
	in which $\mathbf{Z}_{ij}$, $j=1,\ldots, n_i$ are conditional i.i.d given $\mu_i(\mathbf{w})$.
\end{ass}
Assumption \ref{ass:niid}(a) allows heterogeneous data. However, $\mu_i(\mathbf{w})$ follows a sub-exponential distribution with parameter $\sigma_\mu$, thus the distance can not be too large. (b) requires that within each client, $\mathbf{Z}_{ij}$ is sub-exponential with respect to $\mu_i(\mathbf{w})$ for client $i$. At the limit of $\sigma_\mu \rightarrow 0$, Assumption \ref{ass:niid} reduces to Assumption \ref{ass:iid}. Some existing works assume that the clients are completely different, such as \cite{ghosh2019robust}, which groups clients into some clusters with inherently different properties. However, we assume that training samples are collected from sources that are only moderately different. The theoretical result is shown as following.

\begin{thm}\label{thm:niid}
	Let $T_i = T_{0a}+T_{0b}+M/\sqrt{n_i}$, with $M\sim \sigma \sqrt{d\ln (N/\delta)}$, and
	\begin{eqnarray}
		\epsilon\sigma \sqrt{(md/N)\ln (N/\delta)}\lesssim T_{0a}\lesssim \sigma \sqrt{(md/N)\ln (N/\delta)},	
	\end{eqnarray}
	and
	\begin{eqnarray}
		T_{0b}\sim \sigma\sqrt{(d/N)\ln (N/\delta)},
	\end{eqnarray}
	then under Assumption \ref{ass:cover} and \ref{ass:niid}, the following equations holds with probability at least $1-2\delta$.
	
	(1) (Strong convex) Under Assumption 1(a), with $\eta\leq 1/L$,
	\begin{eqnarray}
		\norm{\mathbf{w}_t-\mathbf{w}^*}\leq (1-\rho)^t \norm{\mathbf{w}_0-\mathbf{w}^*}+\frac{2\Delta_C}{\mu},
	\end{eqnarray}
	in which $\rho = \eta\mu/2$;
	
	(2) (General convex) Under Assumption 1(b), with $\eta=1/L$, after $t_m=(L/\Delta_C)\norm{\mathbf{w}_0-\mathbf{w}^*}_2$ steps,
	\begin{eqnarray}
		F(\mathbf{w}_{t_m})-F(\mathbf{w}^*)\leq 16\norm{\mathbf{w}_0-\mathbf{w}^*}\Delta_C;
	\end{eqnarray}
	
	(3) (Non-convex) Under Assumption 1(c), with $\eta = 1/L$, after
	$t_m=(2L/\Delta_C^2)(F(\mathbf{w}_0) - F(\mathbf{w}^*))$ steps, we have
	\begin{eqnarray}
		\underset{t=0,1,\ldots, t_m}{\min}\norm{\nabla F(\mathbf{w}_t)}\leq \sqrt{2}\Delta_C,
	\end{eqnarray}
	in which
	\begin{eqnarray}
		\Delta_C \hspace{-3mm}&\lesssim&\hspace{-3mm} \left(\frac{\epsilon}{1-2\epsilon}\sqrt{\frac{m}{N}}+ \frac{1}{\sqrt{N}}+\frac{\sigma_\mu\sqrt{\sum_{i=1}^m n_i^2}}{N}\right)\sqrt{d\ln \frac{N}{\delta}}+\epsilon\sigma_\mu d\ln \frac{N}{\delta}.
	\end{eqnarray}
\end{thm}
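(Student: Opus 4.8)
The plan is to reduce all three conclusions to a single high-probability uniform bound on the aggregation error, and then reuse the optimization machinery already behind Theorems~\ref{thm:simple} and~\ref{thm:unbalanced}. Concretely, every conclusion follows once we establish that, with probability at least $1-2\delta$,
\begin{eqnarray}
	\sup_{\mathbf{w}\in\mathcal{W}}\norm{g(\mathbf{w})-\nabla F(\mathbf{w})}\leq \Delta_C .
\end{eqnarray}
Given this, $g(\mathbf{w}_t)$ is a descent direction for $F$ up to error $\Delta_C$, and the strong-convex, general-convex and non-convex parts are exactly the standard projected-gradient-descent contraction, function-averaging and stationarity arguments, with $\Delta_C$ playing the role that $\Delta_A,\Delta_B$ play in the i.i.d.\ theorems. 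So the entire non-i.i.d.-specific work is to prove the displayed uniform bound with the stated $\Delta_C$.

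First I would fix $\mathbf{w}$ and analyze the Huber minimizer $g=g(\mathbf{w})$ of~\eqref{eq:gw} through its first-order optimality condition. Writing $h_i(\mathbf{s})=\min(1,T_i/\norm{\mathbf{s}-\mathbf{X}_{it}})$, the objective is convex and its stationarity reads $\sum_{i=1}^m n_i h_i(g)(g-\mathbf{X}_{it})=0$. Using the curvature of the Huber objective near the target $\nabla F(\mathbf{w})$, I would bound $\norm{g-\nabla F(\mathbf{w})}$ by the ratio of the objective gradient evaluated at $\nabla F(\mathbf{w})$ to a lower bound on the local curvature. The denominator is controlled by the total weight of \emph{unclipped} benign clients, each of which contributes a full-rank term $n_i I$ to the Hessian while clipped terms contribute only tangentially; this weight is $\gtrsim(1-2\epsilon)N$ (the adversary can force at most an $\epsilon$ fraction of benign clients to the clipping boundary), which is the origin of the $1/(1-2\epsilon)$ factor.

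Next I would bound the numerator, splitting clients into benign $\mathcal{G}$ and Byzantine $\mathcal{B}$. For benign clients I use the two-level decomposition $\mathbf{X}_{it}-\nabla F(\mathbf{w})=(\mathbf{X}_{it}-\mu_i(\mathbf{w}))+(\mu_i(\mathbf{w})-\nabla F(\mathbf{w}))$: the first piece is within-client averaging noise, sub-exponential with scale $\sigma/\sqrt{n_i}$ by Assumption~\ref{ass:niid}(b), and the second is between-client heterogeneity, sub-exponential with scale $\sigma_\mu$ by Assumption~\ref{ass:niid}(a). Summing the $n_i$-weighted centered contributions and applying sub-exponential concentration yields the $\sqrt{d\ln(N/\delta)}/\sqrt{N}$ term (within-client noise over all $N$ samples) and the $\sigma_\mu\sqrt{\sum_i n_i^2}\,\sqrt{d\ln(N/\delta)}/N$ term (the $n_i$-weighted heterogeneity variance). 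The Byzantine contribution to the gradient is at most $\sum_{i\in\mathcal{B}}n_iT_i$ by clipping; with $T_i=T_{0a}+T_{0b}+M/\sqrt{n_i}$ and $\sum_{i\in\mathcal{B}}n_i\leq\epsilon N$ this produces the $(\epsilon/(1-2\epsilon))\sqrt{m/N}\sqrt{d\ln(N/\delta)}$ term after normalization. The coarser term $\epsilon\sigma_\mu d\ln(N/\delta)$ arises from the worst case where Byzantine clients impersonate the most heterogeneous benign client: the maximal deviation $\max_i\norm{\mu_i(\mathbf{w})-\nabla F(\mathbf{w})}$ is only sub-exponentially bounded, hence of order $\sigma_\mu d\ln(N/\delta)$, and an $\epsilon$ fraction of such mass survives. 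The choices $T_{0a}\sim\sigma\sqrt{(md/N)\ln(N/\delta)}$, $T_{0b}\sim\sigma\sqrt{(d/N)\ln(N/\delta)}$ and $M\sim\sigma\sqrt{d\ln(N/\delta)}$ are precisely what make $T_i$ large enough to leave benign clients mostly unclipped yet small enough to neutralize the Byzantine mass.

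Finally I would upgrade this pointwise estimate to the uniform bound over $\mathcal{W}$ using Assumption~\ref{ass:cover}: take an $r$-net with $r$ polynomially small, apply the pointwise bound with a union bound over the $N_c(r)\leq C_W/r^d$ centers (generating the $d\ln(N/\delta)$ factors), and control the oscillation of both $g(\cdot)$ and $\nabla F(\cdot)$ between net points via $L$-smoothness and the Lipschitz dependence of the Huber minimizer on $\mathbf{w}$. The hard part will be the benign numerator analysis: unlike the i.i.d.\ case, each benign client is itself offset by the heterogeneity term $\mu_i(\mathbf{w})-\nabla F(\mathbf{w})$, which inflates the spread of the ``good'' points and interacts nontrivially with the clipping nonlinearity. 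Disentangling the two noise levels — showing simultaneously that the threshold does not over-clip genuinely benign-but-heterogeneous clients while still clipping adversarial ones — and correctly tracking the additive heterogeneity contributions (especially the coarse $\epsilon\sigma_\mu d\ln(N/\delta)$ term, whose atypical $d$ and $\ln$ powers reflect a sub-exponential maximum rather than an average) is the delicate step of the argument.
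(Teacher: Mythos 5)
Your overall architecture matches the paper's proof almost exactly: reduce all three conclusions to the uniform bound $\sup_{\mathbf{w}\in\mathcal{W}}\norm{g(\mathbf{w})-\nabla F(\mathbf{w})}\leq\Delta_C$ plus the generic optimization lemmas; prove that bound via first-order optimality of the Huber objective combined with a local Hessian lower bound; bound the Byzantine contribution through the clipping property $\norm{\nabla_s\phi_i}\leq T_i$; and use two-level sub-exponential concentration (within-client scale $\sigma/\sqrt{n_i}$, between-client scale $\sigma_\mu$) for the benign terms. You also correctly identify the origin of all four terms in $\Delta_C$, including the atypical $\epsilon\sigma_\mu d\ln(N/\delta)$ term coming from the sub-exponential maximum of the heterogeneity offsets $\mu_i(\mathbf{w})-\nabla F(\mathbf{w})$ (the paper's $r_\mu$). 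The one structural deviation is that you compare $g(\mathbf{w})$ directly to $\nabla F(\mathbf{w})$, whereas the paper inserts the attack-free minimizer $a(\mathbf{w})$, shows it collapses to the weighted mean $\frac{1}{N}\sum_i n_i\mathbf{G}_i(\mathbf{w})$ (all benign points lie in the quadratic region under the stated thresholds), and splits $\norm{g-a}+\norm{a-\nabla F}$; your one-step variant is workable and not a problem.

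There is, however, a genuine flaw in your uniformity step. You propose to pass from a pointwise to a uniform bound by taking a net over $\mathcal{W}$ and controlling ``the oscillation of $g(\cdot)$ between net points via \dots{} the Lipschitz dependence of the Huber minimizer on $\mathbf{w}$.'' No such Lipschitz dependence exists: for $i\in\mathcal{B}$ the inputs $\mathbf{X}_i(\mathbf{w})$ are arbitrary, possibly discontinuous, functions of $\mathbf{w}$ chosen by the adversary, so $g(\cdot)$ can jump between net points and no oscillation bound is available. The paper's Lemma~\ref{lem:hpb3} shows the correct fix: the covering/union-bound argument is applied only to the benign random quantities ($\mathbf{G}_i(\mathbf{w})$, $\mu_i(\mathbf{w})$, and their weighted means), yielding a high-probability event that is uniform over $\mathcal{W}$; on that event the optimality-plus-curvature argument (Lemmas~\ref{lem:diff5} and~\ref{lem:diff6}) is purely deterministic and is run pointwise at every $\mathbf{w}$, simultaneously for every adversarial strategy. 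Relatedly, your curvature claim is imprecise: you attribute the $1/(1-2\epsilon)$ factor to the unclipped benign mass being $(1-2\epsilon)N$ because ``the adversary can force an $\epsilon$ fraction of benign clients to the clipping boundary''---but the adversary cannot affect benign clients at all. The paper's Hessian bound is $(1-\epsilon)N\mathbf{I}$, valid on the ball of radius $r_s=T_0-2r_\mu$ around $a(\mathbf{w})$, and the $1/(1-2\epsilon)$ enters instead through the lower bound required of $T_0$ so that the localization closes, i.e.\ so that the resulting displacement bound does not exceed $r_s$ and the minimum in $\min\{\norm{g-a},r_s\}$ is attained by $\norm{g-a}$. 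These two points need repair; the rest of your plan goes through as in the paper.
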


\section{Implementation}\label{sec:implement}

Our design follows the Weiszfeld's algorithm for geometric median \cite{weiszfeld:AOR:09}. Suppose there are $m$ vectors, $\mathbf{X}_1,\ldots, \mathbf{X}_m$. Define
\begin{eqnarray}
	\mathbf{c}=\underset{s}{\arg\min}\sum_{i=1}^m \phi_i(\norm{\mathbf{s}-\mathbf{X}_i})
\end{eqnarray}
as the center that minimizes the multi-dimensional Huber loss, in which $\phi_i$ is defined in \eqref{eq:phii}. Suppose that the algorithm starts from $\mathbf{c}_0$. Then the update rule is
\begin{eqnarray}
	\mathbf{c}_{k+1} = \frac{\sum_{i=1}^m \min\left\{1, \frac{T_i}{\norm{\mathbf{c}_k-\mathbf{X}_i}} \right\}\mathbf{X}_i}{\sum_{i=1}^m \min\left\{1, \frac{T_i}{\norm{\mathbf{c}_k-\mathbf{X}_i}} \right\}}.
	\label{eq:implement}
\end{eqnarray}
Our algorithm repeats \eqref{eq:implement} until convergence. Similar to Weiszfeld's algorithm, despite the fast convergence in almost all practical cases, it is not theoretically guaranteed in general. In section 8 in \cite{beck2015weiszfeld}, it is shown that under some assumptions, an estimate of geometric median with Weiszfeld's algorithm with error tolerance $\tau$ needs $O(1/\tau)$ steps. With minor modification, the analysis in \cite{beck2015weiszfeld} also holds for our new algorithm \eqref{eq:implement}. Moreover, from \eqref{eq:implement}, each step requires $O(md)$ time, thus the overall time complexity is $O(md/\tau)$.

In the future, it is possible to extend some recent works on geometric median, such as \cite{feldman2011unified,cohen2016geometric} to improve the update rule \eqref{eq:implement} for multi-dimensional Huber loss minimization.
\section{Comparison with Related Work}\label{sec:compare}
Now we compare our results with several existing popular approaches, including Krum \cite{blanchard2017machine}, geometric median-of-means \cite{chen2017distributed} (GMM), coordinate-wise median \cite{yin2018byzantine} (CWM), coordinate-wise trimmed mean \cite{yin2018byzantine} (CWTM), and recent methods based on high dimensional robust statistics \cite{shejwalkar2021manipulating,zhu2023byzantine} (HDRS). To begin with, we consider the i.i.d case. In particular, we compare the following five aspects listed as following:
\begin{itemize}
	\item Whether the method relies on precise knowledge of $\epsilon$;
	\item Under sub-exponential and strong convex assumption, with sufficiently large number of iterations $t$, whether the statistical error rate of $\norm{\hat{\mathbf{w}}-\mathbf{w}^*}$ is optimal (or nearly optimal up to a logarithm factor) in $\epsilon$. Here the optimal rate is $\epsilon/\sqrt{n}+\sqrt{d/N}$ \cite{Hopkins:COLT:19,yin2018byzantine};
	\item Whether the performance is good if $\epsilon$ is close to $1/2$. In particular, whether error blows up by a factor of no more than $1/\sqrt{1-2\epsilon}$, as is shown in \eqref{eq:deltaa};
	\item Whether the time complexity of aggregator is linear or nearly linear, i.e. $O(m)$ or $O(m\log m)$;
	\item Whether the error rate is optimal in dimensionality $d$;
\end{itemize}
The results are shown in Table \ref{tab:compare}, in which the five aspects mentioned above correspond to its five columns.

\begin{table}[h!]
	\centering
		\begin{tabular}{l|c|c|c|c|c}
			\hline
			Method & $\epsilon$-agnostic  & Optimal in $\epsilon$  &Good performance & linear time & Optimal in $d$\\
			&&&with $\epsilon$ near $1/2$ & complexity &\\
			\hline
			Krum &No &No  &No &No &No\\
			GMM &No  &No  &Yes &Yes &No\\
			CWM &Yes  &No  &Yes &Yes &No\\
			CWTM &No  &Yes  &No &Yes &No\\
			HDRS &No  &No  &No &No &Yes\\
			\textbf{Ours} & \textbf{Yes}  & \textbf{Yes}  &\textbf{Yes} & \textbf{Yes} &\textbf{No} \\
			\hline
		\end{tabular}
		\caption{Comparison of our method with existing aggregators under i.i.d assumption in five aspects. "$\epsilon$-agnostic" means that the method does not rely on precise knowledge of $\epsilon$.  "$\epsilon$-opt." refers to optimal dependence of error rate on $\epsilon$. Besides, "near $1/2$" means good performance with $\epsilon$ close to $1/2$. Moreover, "linear" stands for linear time complexity in $m$. Finally, "$d$-opt." means optimality of error rate in $d$.}
		\label{tab:compare}
	\end{table}
	
	For the first aspect, only coordinate-wise median and our method do not rely on precise knowledge of $\epsilon$. For other methods, $\epsilon$ significantly affects the parameter selection. For example, coordinate-wise trimmed mean need to set the trim threshold to be $\epsilon$ in both sides. However, it is quite unlikely to have exact knowledge of $\epsilon$ in practice. If an upper bound $\alpha$ is known, such that $\epsilon<\alpha$, then $\alpha$ can be used to set the parameter, but the accuracy will be sacrificed. Secondly, for optimal dependence in $\epsilon$, Krum is not guaranteed to converge globally. Coordinate-wise median is not optimal if $n\lesssim m$. Geometric median-of-mean only has a $\tilde{O}(\sqrt{\epsilon d/n})$ dependence. Thirdly, for the performance with $\epsilon$ close to $1/2$, coordinate-wise trimmed mean is not optimal. In particular, the error rate $\Delta_{CWTM}$, which plays the same role as $\Delta_A$ in Theorem \ref{thm:simple}, is
	\begin{eqnarray}
		\Delta_{CWTM} = \tilde{O}\left(\frac{d}{1-2\epsilon}\left(\frac{\epsilon}{\sqrt{n}}+\frac{1}{\sqrt{N}}\right)\right),
		\label{eq:cwtm}
	\end{eqnarray}  
	which blows up by a factor of $1/(1-2\epsilon)$, higher than $1/\sqrt{1-2\epsilon}$. Intuitively, with $\epsilon$ close to $1/2$, coordinate-wise trimmed mean is not efficient because it removes most of samples, thus somewhat wastes the data. For the fourth column in Table \ref{tab:compare}, the time complexity has been discussed in previous section.
	
	The only drawback of our method is that the dependence on $d$ is not optimal. In recent years, there are some new proposed methods for high dimensional robust mean estimation \cite{diakonikolas2016robust,diakonikolas2017being,diakonikolas2021list}. These methods can be used as the aggregator function in federated learning \cite{su2018securing,shejwalkar2021manipulating,zhu2023byzantine,zhao2023high}, so that the dependence on $d$ becomes optimal. However, the $\epsilon$ dependence is $\sqrt{\epsilon}$, which is optimal only if we merely require the covariance matrix to have bounded operator norms. Under a more restrictive sub-exponential assumption, the convergence can not be further improved. Moreover, the time complexity is much higher.
	
	One may wonder if it is possible to design an aggregator that satisfy all of the five properties listed in Table \ref{tab:compare}. However, \cite{Hopkins:COLT:19} has shown that for robust mean estimation problem, as long as $P\neq NP$, under sub-exponential assumption, polynomial time complexity, optimal dependence in $\epsilon$, and constant factor in $d$ are three goals that can not be achieved together. Since FL relies on robust mean estimation of gradient vectors as the aggregator function, we conjecture that it is impossible to satisfy all five properties in Table \ref{tab:compare}.
	
	Right now we have compared our results with existing methods under i.i.d assumptions. There are also some related work focusing on non-i.i.d cases, but the assumptions are crucially different. For example, \cite{Li:AAAI:19} proposed Robust Stochastic Aggregation (RSA). Instead of gradient aggregation, \cite{Li:AAAI:19} a new approach called model aggregation, which tries to reach a consensus between different models after training. \cite{pillutla2022robust} shows that a geometric median method can also be used here, named Robust Federated Aggregation (RFA). Furthermore, \cite{zuo2024byzantine} proposed Robust Average Gradient Algorithm (RAGA), which improves the geometric median aggregation and allows arbitrary round number for local updates. These methods are suitable for strongly heterogeneous cases, which means that the distribution of samples in different clients are significantly different. However, for slightly heterogeneous clients, the methods in \cite{Li:AAAI:19,pillutla2022robust,zuo2024byzantine} are suboptimal. In other words, under i.i.d limit, the error bounds are not comparable to existing methods designed for i.i.d cases. 
	\section{Numerical Results}\label{sec:numerical}
	This section shows numerical experiments. Despite the fact that our method has the advantage of not relying on knowledge of $\epsilon$, in this section, we just assume that all baseline methods know $\epsilon$ exactly, including Krum, GMM, CWM and CWTM mentioned in the previous section. Parameters of these methods are all optimally selected with the precise knowledge of $\epsilon$. If $\epsilon$ is unknown, then the advantage of our method should be larger than what is described in this section. The detailed algorithms and parameter selection rules are:
	
	\textbf{Krum \cite{blanchard2017machine}}: Let $k=m-q-2$, with $q$ being the number of Byzantine clients, then Krum picks $i^*$ as
	\begin{eqnarray}
		i^* = \underset{i\in [m]}{\arg\min}\sum_{j=1}^k \norm{\mathbf{X}_i-\mathbf{X}_i^{(j)}}^2,
		\label{eq:krum}
	\end{eqnarray}
	in which $\mathbf{X}_i^{(j)}$ is the $j$-th nearest neighbor of $\mathbf{X}_i$ among $\{\mathbf{X}_1,\ldots, \mathbf{X}_m \}$. Then
	\begin{eqnarray}
		g_{Krum}(\mathbf{w}) = X_{i*}.
		\label{eq:krum2}
	\end{eqnarray} 
	
	\textbf{Geometric median-of-mean (GMM) \cite{chen2017distributed}}. It randomly divides $m$ vectors from clients into $b$ batches, and then calculate geometric median of the mean gradients of these batches:
	\begin{eqnarray}
		g_{GMM}(\mathbf{w}) = \underset{s}{\arg\min}\sum_{j=1}^b\norm{\mathbf{s}-\frac{1}{|A_j|}\sum_{i\in A_j}\mathbf{X}_i}.
		\label{eq:gmm}
	\end{eqnarray}
	In our experiments, following \cite{chen2017distributed}, we set $b=2q+1$, in which $q$ is the number of Byzantine clients.
	
	\textbf{Coordinate-wise median (CWM)\cite{yin2018byzantine}}. This is an aggregation rule that calculates the element-wise median of vectors $\mathbf{X}_1,\ldots, \mathbf{X}_m$.
	
	\textbf{Coordinate-wise trimmed mean (CWTM)\cite{yin2018byzantine}}. We assume that $\epsilon$ is known. In each dimension, CWTM removes the largest and smallest $\epsilon$ fraction of values and then calculate the element-wise mean.

	Ideally, robustness against Byzantine failure needs to be tested using optimal attack strategies. However, the optimization problems are hard to solve. In this section, we use four attack strategies. One of them is a simple sign-flip attack, which flips the sign of gradient vectors. Other three are some approximate strategies that are tailored to specific aggregators, including Krum Attack (KA) and Trimmed Mean Attack (TMA) described in \cite{fang2020local} that are nearly optimal for Krum and CWTM, respectively. Moreover, for a fair comparison, we have designed an attack strategy for our new proposed method, called Huber Loss Minimization Attack (HLMA). Here we provide the details of all these attacks.
	
	Denote
	\begin{eqnarray}
		g_0(\mathbf{w}_t) = \frac{1}{m}\sum_{i=1}^m \mathbf{G}_i(\mathbf{w}_t),
	\end{eqnarray}
	with
	\begin{eqnarray}
		\mathbf{G}_i(\mathbf{w}_t)=\frac{1}{n_i}\sum_{j=1}^{n_i}\nabla f(\mathbf{w}_t, \mathbf{Z}_{ij}).
	\end{eqnarray}
	Let $\mathbf{s}_t=\sign(g_0(\mathbf{w}_t))$ to be the element-wise sign of $g_0(\mathbf{w}_t)$. We solve the following optimization problem:
	\begin{eqnarray}
		\underset{\pi_A}{\max} \mathbf{s}^T (g_0(\mathbf{w}_t) - g(\mathbf{w}_t)),
	\end{eqnarray}
	in which $\pi_A$ is the attack strategy. Such formulation follows \cite{fang2020local}. The intuition is to make the aggregated gradient should be as far away from the ground truth $\nabla F(\mathbf{w}_t)$ (approximated by $g_0(\mathbf{w}_t))$) as possible. Moreover, in each coordinate, it would be better if $g(\mathbf{w}_t)$ is moved towards the opposite direction to $g_0(\mathbf{w}_t)$, such that the model updates along a wrong direction. Our design of Krum attack and Trimmed mean attack follows \cite{fang2020local}.
	
	\textbf{Krum Attack (KA)}. KA maximizes $\lambda$, such that with all attacked samples placed at $g_0(\mathbf{w}_t)-\lambda \mathbf{s}$, Krum selects one of these attacked samples as the aggregated gradient according to \eqref{eq:krum}. Practically, we set $\lambda = 1$ first, and then repeat with $\lambda\leftarrow \lambda / 2$ until Krum selects one of attacked samples, or $\lambda<10^{-4}$.
	
	\textbf{Trimmed Mean Attack (TMA)}. Denote $X_{it,j}$ as the $j$-th element of $\mathbf{X}_{ij}$, and
	\begin{eqnarray}
		G_{ij}(\mathbf{w}_t) = \frac{1}{n_i}\sum_{j=1}^{n_i}\partial_j f(\mathbf{w}_t, \mathbf{Z}_{ij}).
	\end{eqnarray}
	Then the attack strategy is
	\begin{eqnarray}
		X_{it, j}=\left\{
		\begin{array}{ccc}
			\underset{i\in [m]}{\max} G_{ij}(\mathbf{w}_t) &\text{if} & s=-1\\
			\underset{i\in [m]}{\min} G_{ij}(\mathbf{w}_t) & \text{if} & s=1.
		\end{array}
		\right.
	\end{eqnarray}
	It has been discussed in \cite{fang2020local} that the TMA is also optimal for coordinate-wise median.
	
	\textbf{Huber Loss Minimization Attack (HLMA).} We let
	\begin{eqnarray}
		X_{it, j}=\left\{
		\begin{array}{ccc}
			G_{ij} (\mathbf{w}_t)+\frac{T}{\sqrt{d}} &\text{if} & s=-1\\
			G_{ij}(\mathbf{w}_t)-\frac{T}{\sqrt{d}} &\text{if}& s=1.
		\end{array}
		\right.
	\end{eqnarray}
	
	In Section \ref{sec:synthesized} and \ref{sec:real}, we show simple experiments on some synthesized data and MNIST data, respectively. In these two sections, samples are i.i.d and data allocation is balanced.
	
	\subsection{Synthesized Data}\label{sec:synthesized}
	
	To begin with, we run experiments with distributed linear regression. The model is
	\begin{eqnarray}
		V_j = \langle \mathbf{U}_j, \mathbf{w}^*\rangle + W_j,
		\label{eq:model}
	\end{eqnarray}
	in which $\mathbf{U}_j, \mathbf{w}^*\in \mathbb{R}^d$, with $\mathbf{w}^*$ being the true parameter, and $W_j$ is the noise following standard normal distribution. In this experiment, we set $d=50$. Firstly, we generate all elements of $\mathbf{w}^*$ from distribution $\mathcal{N}(0,1)$. We then obtain $N=10,000$ samples $(\mathbf{U}_j, V_j)$, $j=1,\ldots, N$. These samples are evenly divided into $m=500$ clients. For baseline methods including Krum, GMM, CWTM, the parameters are all set optimally, according to the statements at the beginning of Section \ref{sec:numerical}. Our new Huber loss minimization approach uses $T=1$ for all clients. We run $200$ iterations in total, with learning rate $\eta = 0.02$. The results are shown in Figure \ref{fig:synthesized} for all four attack strategies mentioned above, in which we plot the square root of the $\ell_2$ regression loss against the number of iterations.
	\begin{figure}[h!]
		\centering	
		\begin{subfigure}{0.24\linewidth}
			\includegraphics[width=1.05\textwidth,height=0.9\textwidth]{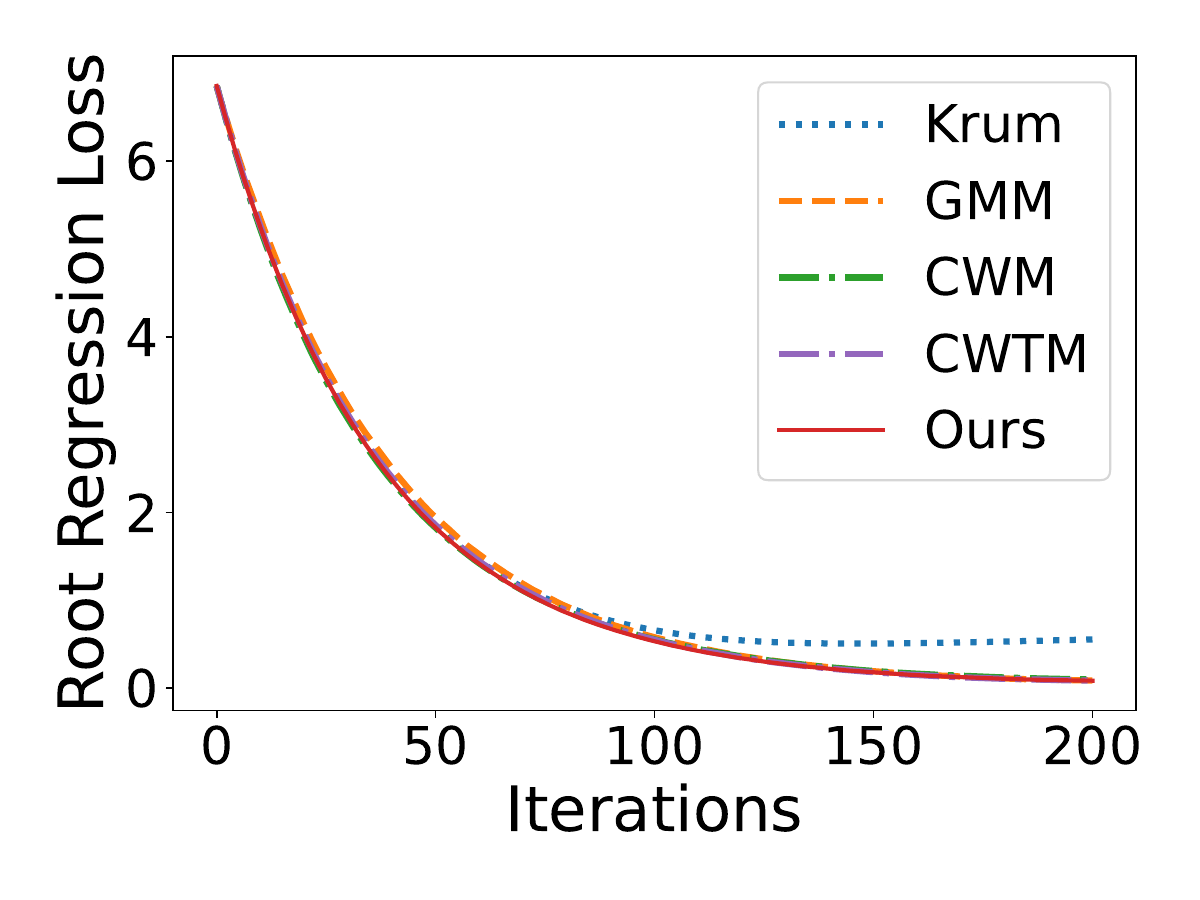}
			\caption{Sign-flip with $\epsilon = 0.2$.}
		\end{subfigure}	
		\begin{subfigure}{0.24\linewidth}
			\includegraphics[width=1.05\textwidth,height=0.9\textwidth]{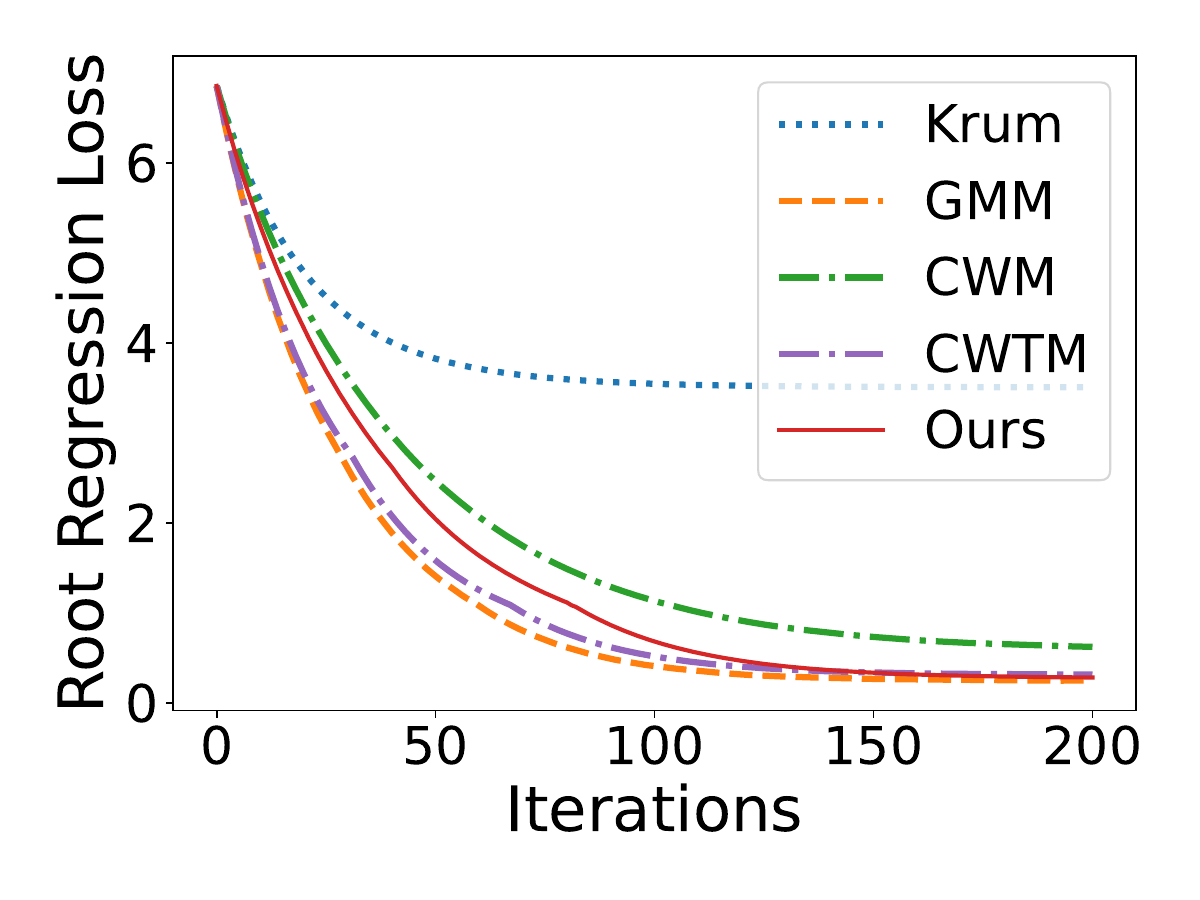}
			\caption{KA with $\epsilon = 0.2$.}
		\end{subfigure}
		\begin{subfigure}{0.24\linewidth}
			\includegraphics[width=1.05\textwidth,height=0.9\textwidth]{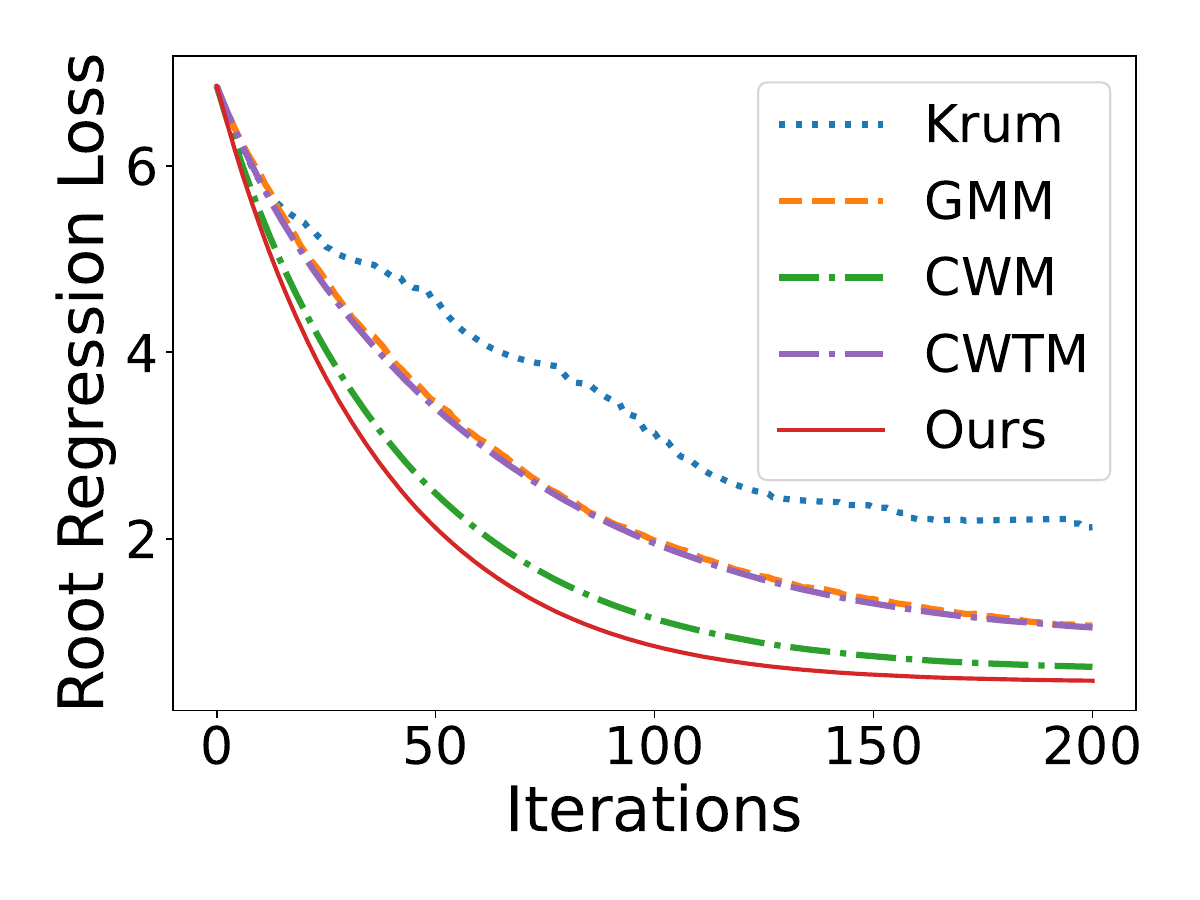}
			\caption{TMA with $\epsilon = 0.2$.}
		\end{subfigure}
		\begin{subfigure}{0.24\linewidth}
			\includegraphics[width=1.05\textwidth,height=0.9\textwidth]{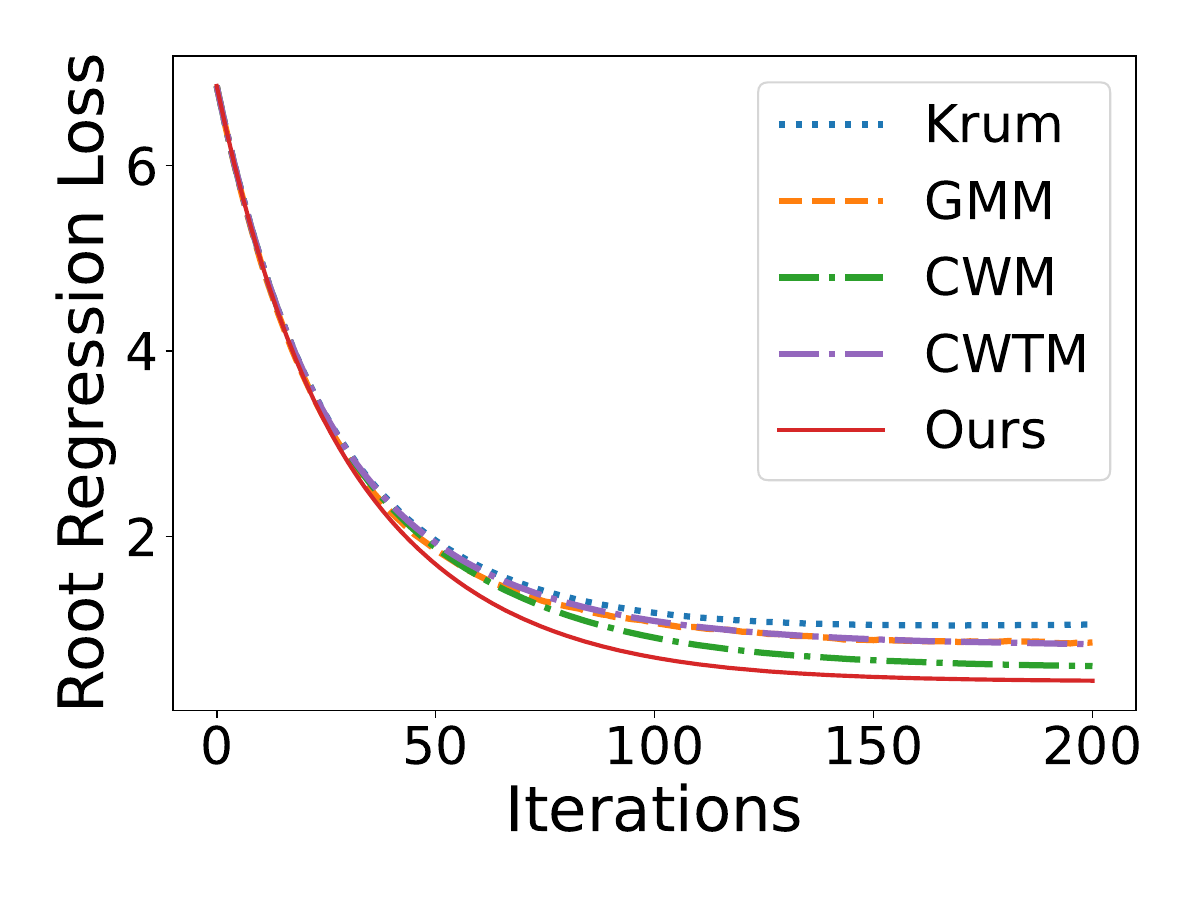}
			\caption{HLMA with $\epsilon = 0.2$.}
		\end{subfigure}	
		\caption{Comparison of our new method and several baselines against Krum Attack and Trimmed Mean Attack for synthesized data with $\epsilon=0.2$.}\label{fig:synthesized}
	\end{figure}
	
	From Figure \ref{fig:synthesized}, it can be observed that our new method (red solid curve) works well for all four types of attacks, even with HLMA that is specifically designed for our method. On the contrary, from Figure \ref{fig:synthesized} (b), Krum aggregator (blue dotted curve) fails under KA that is tailored to Krum. Moreover, (c) shows that under TMA which is designed for coordinate-wise trimmed mean, GMM and CWTM (dashed curves with orange and purple colors, respectively) perform significantly worse than our method. CWM appears to be robust, but the performance is not as good as our approach in general.

	\subsection{Real data}\label{sec:real}
	Now we use MNIST dataset \cite{lecun1998mnist}, which has $60,000$ images of handwritten digits for training, and $12,000$ images for testing. The sizes of these images are $28\times 28$. In this experiment, we use a neural network with one hidden layer between input and output. The size of hidden layer is $32$. Training samples are evenly allocated into $m=500$ clients. We set $T=0.2$ here. The results with $\epsilon=0.2$ is shown in Figure \ref{fig:mnist}, in which we plot the curve of accuracy on the test data against the number of iterations.
	
	\begin{figure}[h!]
		\centering	
		\begin{subfigure}{0.24\linewidth}
			\includegraphics[width=1.05\textwidth,height=0.9\textwidth]{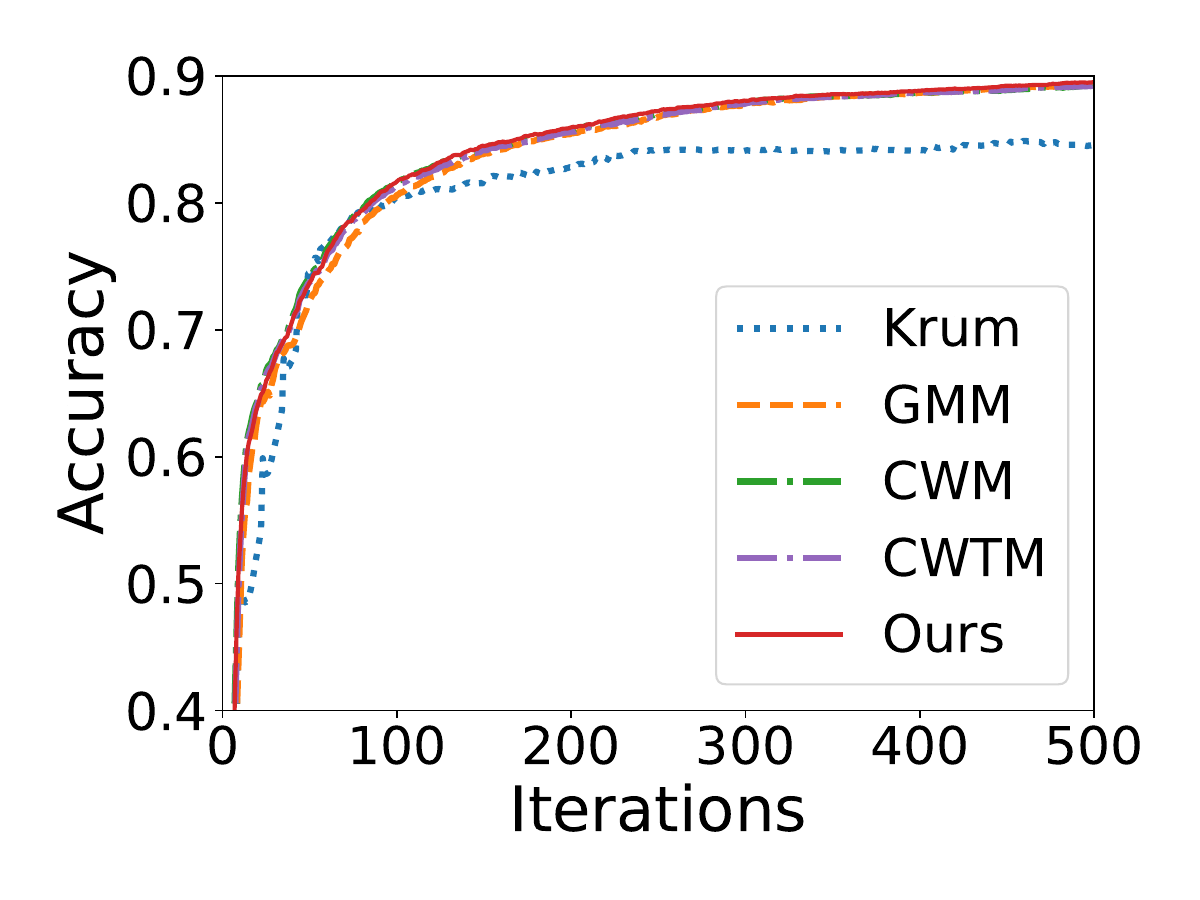}
			\caption{SignFlip with $\epsilon = 0.2$.}
		\end{subfigure}	
		\begin{subfigure}{0.24\linewidth}
			\includegraphics[width=1.05\textwidth,height=0.9\textwidth]{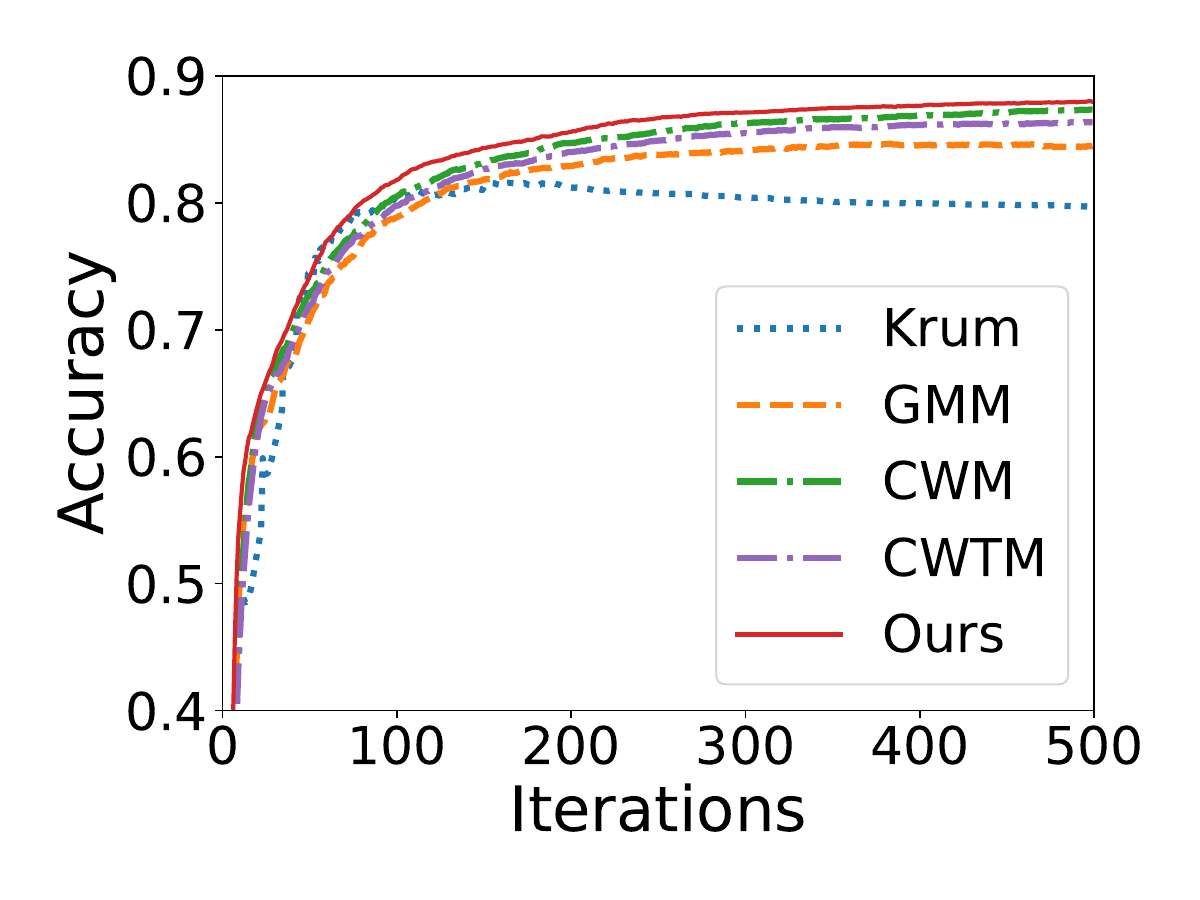}
			\caption{KA with $\epsilon = 0.2$.}
		\end{subfigure}
		\begin{subfigure}{0.24\linewidth}
			\includegraphics[width=1.05\textwidth,height=0.9\textwidth]{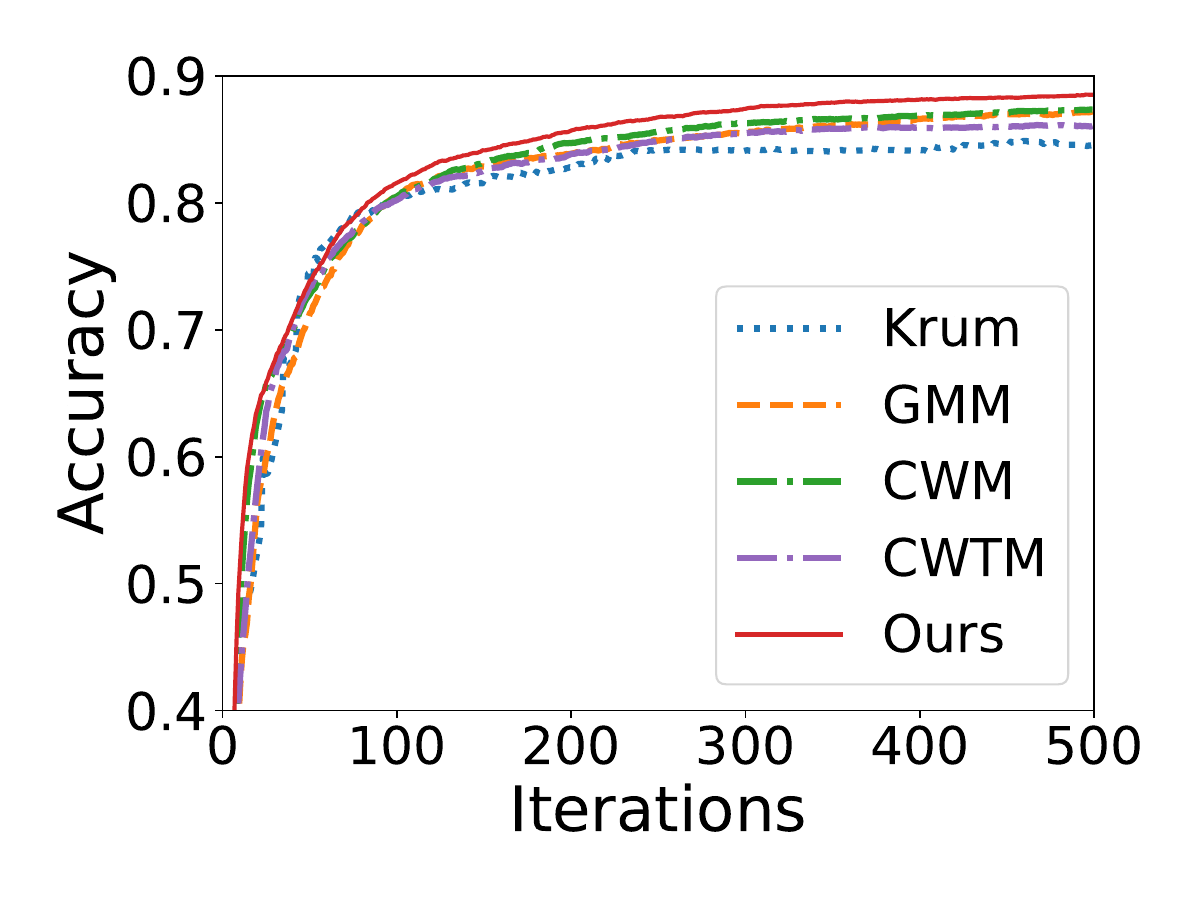}
			\caption{TMA with $\epsilon = 0.2$.}
		\end{subfigure}
		\begin{subfigure}{0.24\linewidth}
			\includegraphics[width=1.05\textwidth,height=0.9\textwidth]{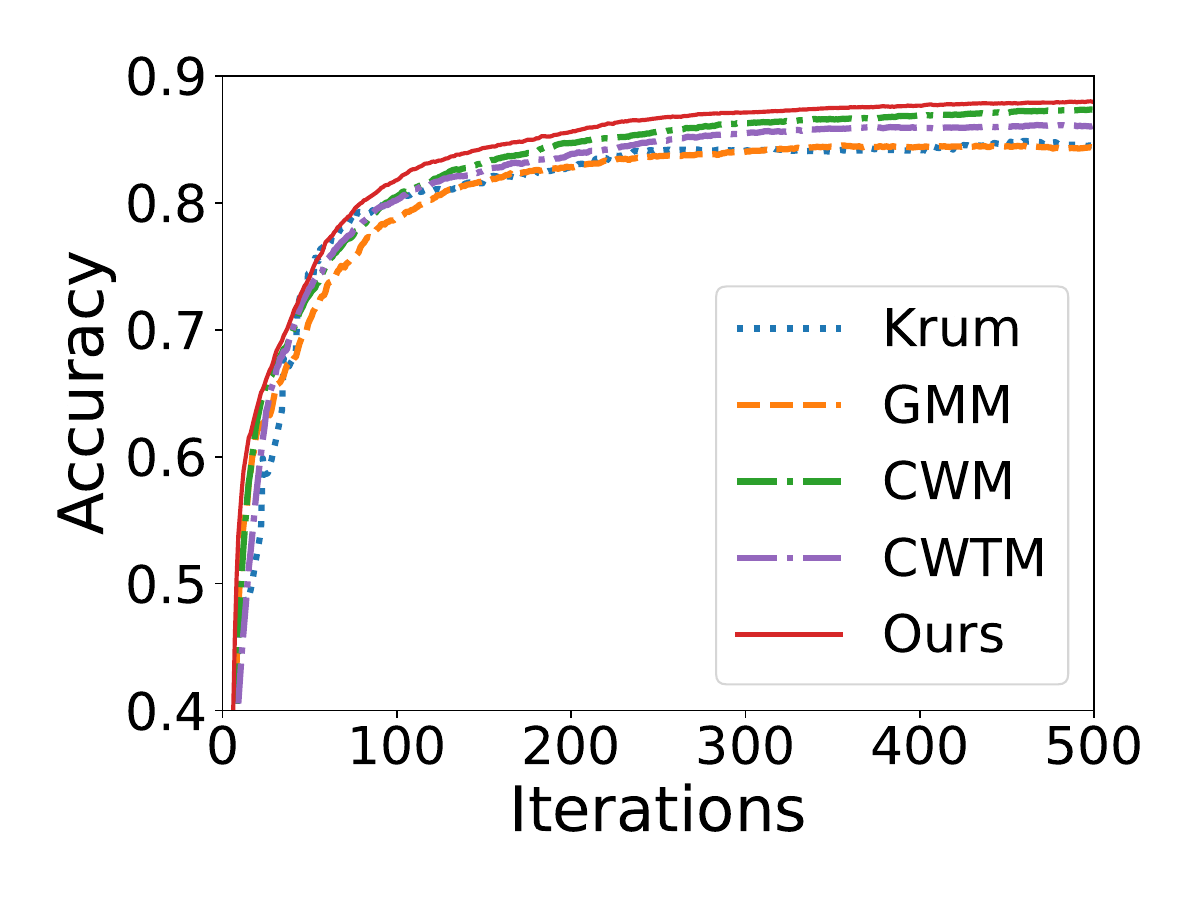}
			\caption{HLMA with $\epsilon = 0.2$.}
		\end{subfigure}	
		\caption{Comparison of our new method and several baselines against sign-flip, KA, TMA and HLMA for MNIST data, with $\epsilon=0.2$.}\label{fig:mnist}
	\end{figure}
	Similar to synthesized data, experiments on MNIST show that our new method outperforms other methods. Krum is still highly susceptible to KA. CWM and CWTM appear to be only slightly worse than our method.
	
	Now we increase $\epsilon$ to $0.4$. Other settings remain unchanged. This experiment shows the performance when $\epsilon$ is close to $1/2$. The results are shown in Figure \ref{fig:largeeps}.
	\begin{figure}[h!]
		\centering	
		\begin{subfigure}{0.24\linewidth}
			\includegraphics[width=1.05\textwidth,height=0.9\textwidth]{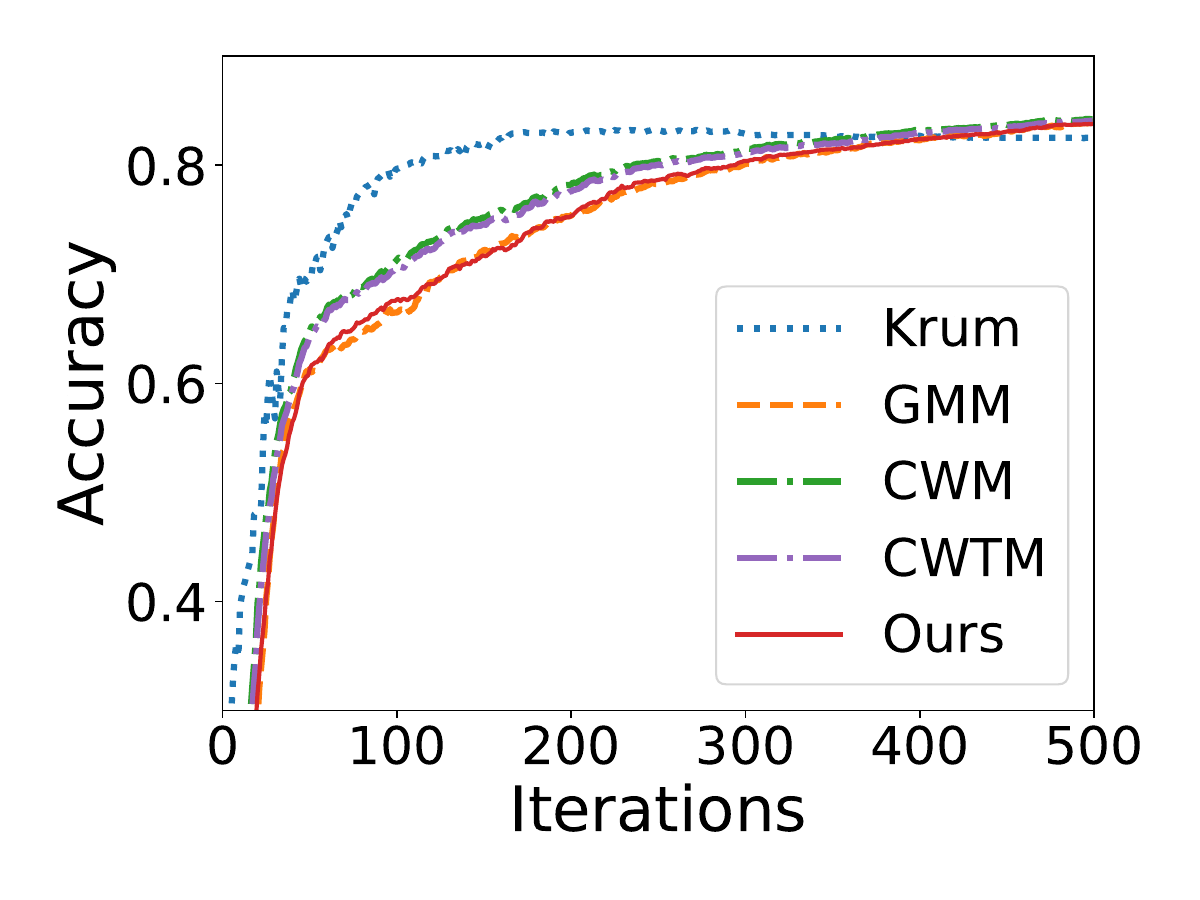}
			\caption{SignFlip with $\epsilon = 0.4$.}
		\end{subfigure}	
		\begin{subfigure}{0.24\linewidth}
			\includegraphics[width=1.05\textwidth,height=0.9\textwidth]{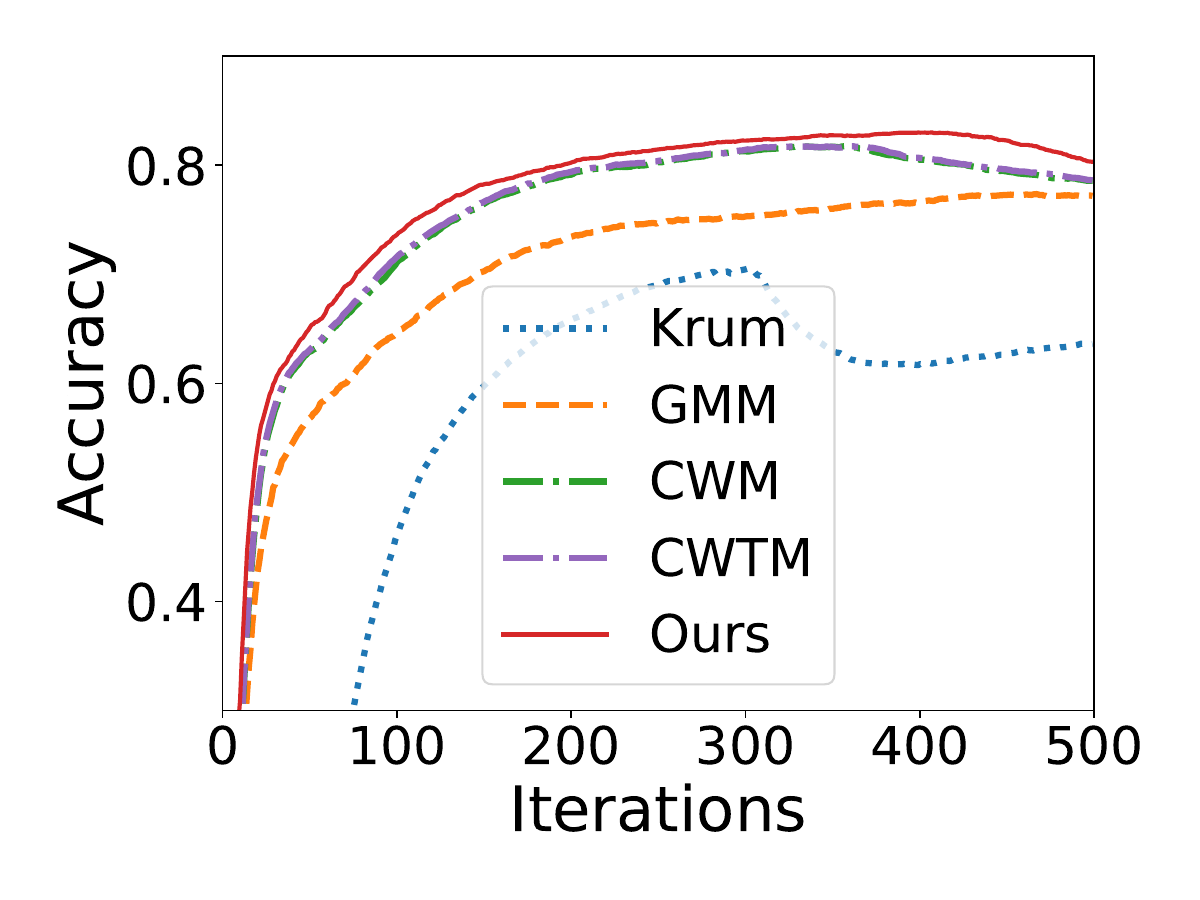}
			\caption{KA with $\epsilon = 0.4$.}
		\end{subfigure}
		\begin{subfigure}{0.24\linewidth}
			\includegraphics[width=1.05\textwidth,height=0.9\textwidth]{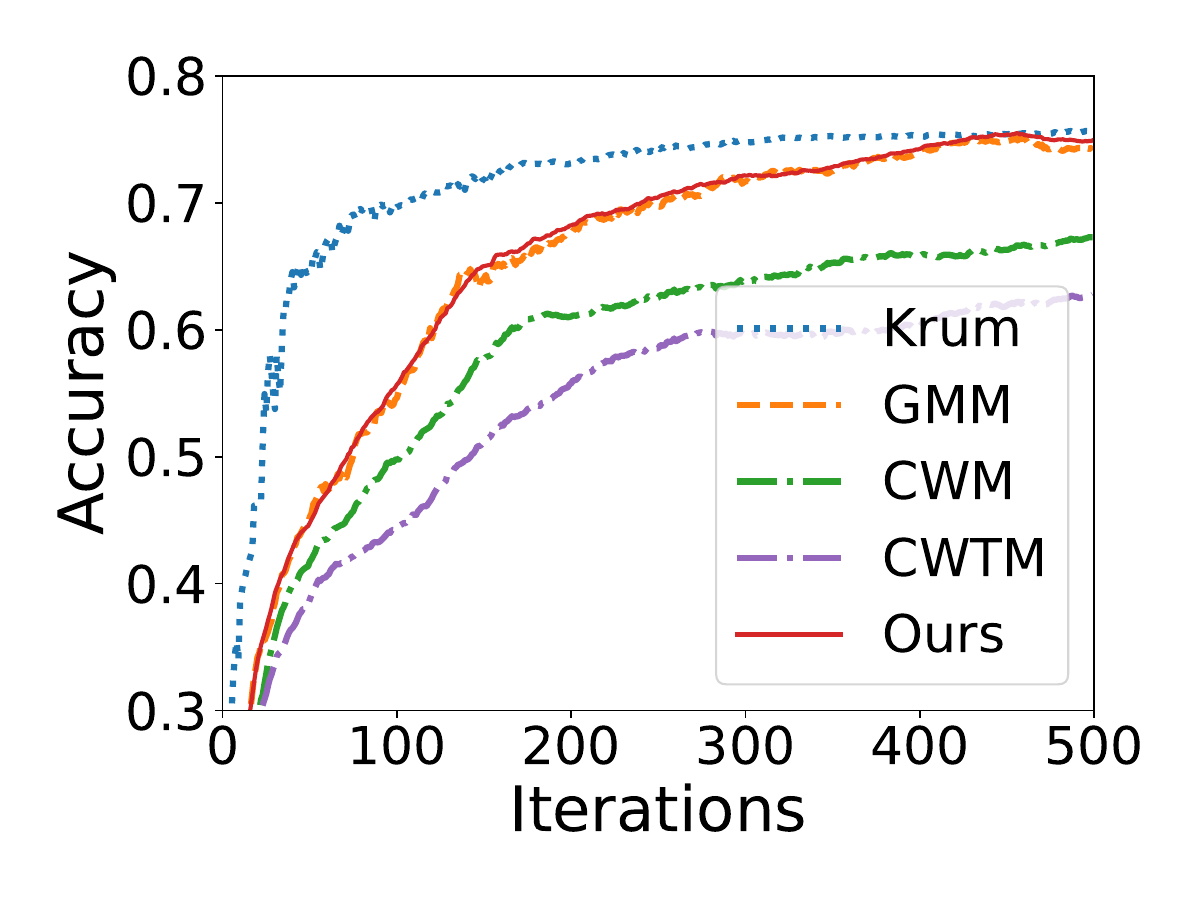}
			\caption{TMA with $\epsilon = 0.4$.}
		\end{subfigure}
		\begin{subfigure}{0.24\linewidth}
			\includegraphics[width=1.05\textwidth,height=0.9\textwidth]{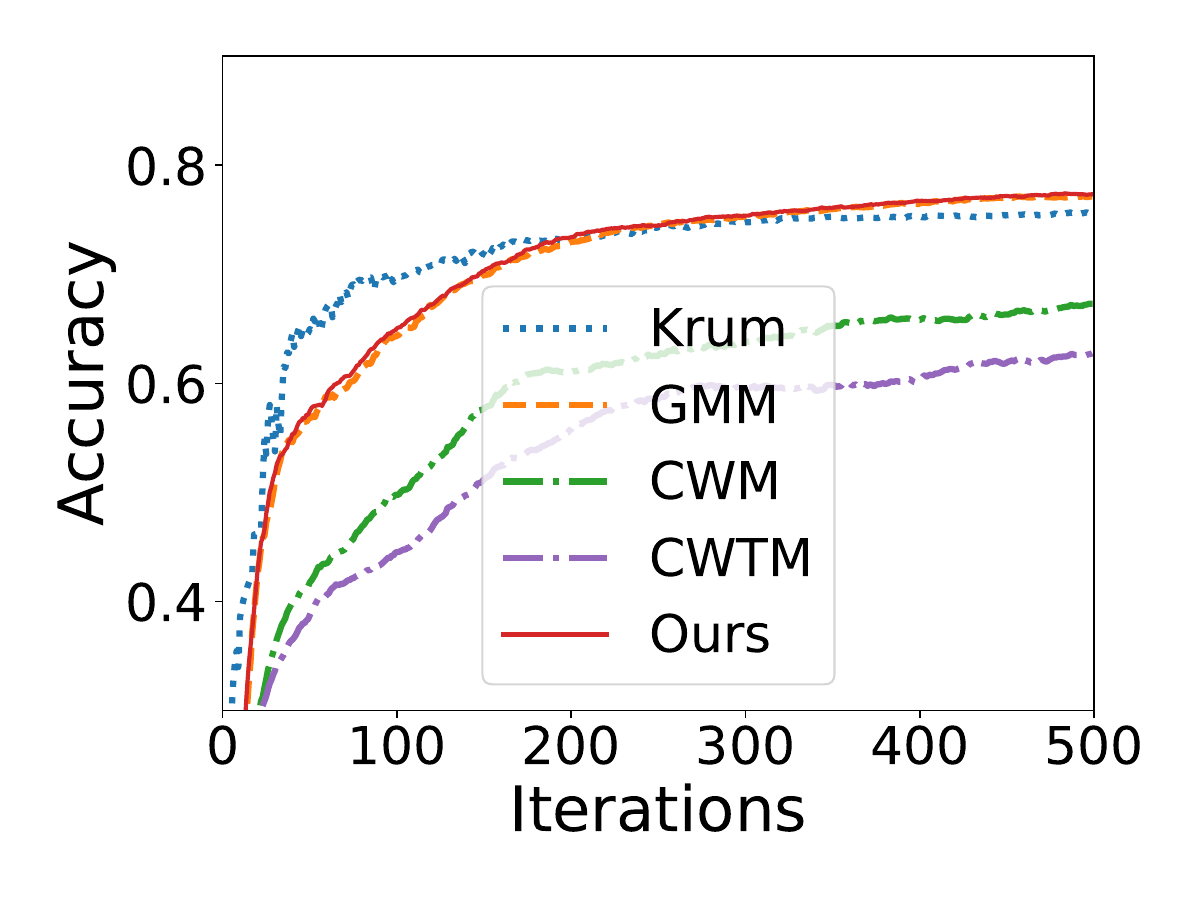}
			\caption{HLMA with $\epsilon = 0.4$.}
		\end{subfigure}	
		\caption{Comparison of our new method and several baselines against sign-flip, KA, TMA and HLMA for MNIST data, with $\epsilon=0.4$.}\label{fig:largeeps}
	\end{figure}
	
	Our method still exhibits desirable performance, even under HLMA designed specifically for ourselves. Krum works relatively well under sign-flip, TMA and HLMA, but fails under KA. The performance of GMM is relatively well for $\epsilon = 0.4$. As has been discussed earlier, the drawback of GMM is the suboptimal statistical rate on $\epsilon$, which is serious for small $\epsilon$. However, when $\epsilon$ is close to $1/2$, this disadvantage becomes much less obvious. Moreover, under TMA and HLMA, the performance of coordinate-wise methods, including CWM and CWTM become significantly worse, especially the latter. This phenomenon agrees with the theoretical analysis, since in section \ref{sec:compare}, we have discussed that CWTM has suboptimal rate with $\epsilon$ close to $1/2$.

	\subsection{Unbalanced sample allocation}
	We then evaluate the adaptive selection rule \eqref{eq:adaptiveT} for unbalanced data. In this experiment, samples are still allocated in different clients randomly, but the sample sizes in different clients are no longer equal. 
	
	In this experiment, we use the following allocation rule.
	
	(1) Randomly shuffle training samples. Denote the indices of samples after shuffling as $0,1,\ldots, N-1$;
	
	(2) Generate $m-1$ indices randomly from $\{0,\ldots, N-1 \}$ without replacement;
	
	(3) Sort these $m-1$ indices in ascending order. Denote the results as $b_1,\ldots, b_{m-1}$, such that $b_1<b_2<\ldots<b_{m-1}$;
	
	(4) Let $b_0=0$, $b_m=N$. Allocate samples with $b_i\leq j\leq b_{i+1}$ to client $i$.
	
	It can be shown that with increase of $N$, the ratio of samples in each client follows Beta distribution $\mathbb{B}(1,m)$.
	
	We use $T_i=2/\sqrt{n_i}$ as the threshold of each clients. Other settings remain the same as previous experiments. For simplicity, we only show the result with HLMA in Figure \ref{fig:unbalance}.
	\begin{figure}[h!]
		\centering	
		\begin{subfigure}{0.33\linewidth}
			\includegraphics[width=1.05\textwidth,height=0.9\textwidth]{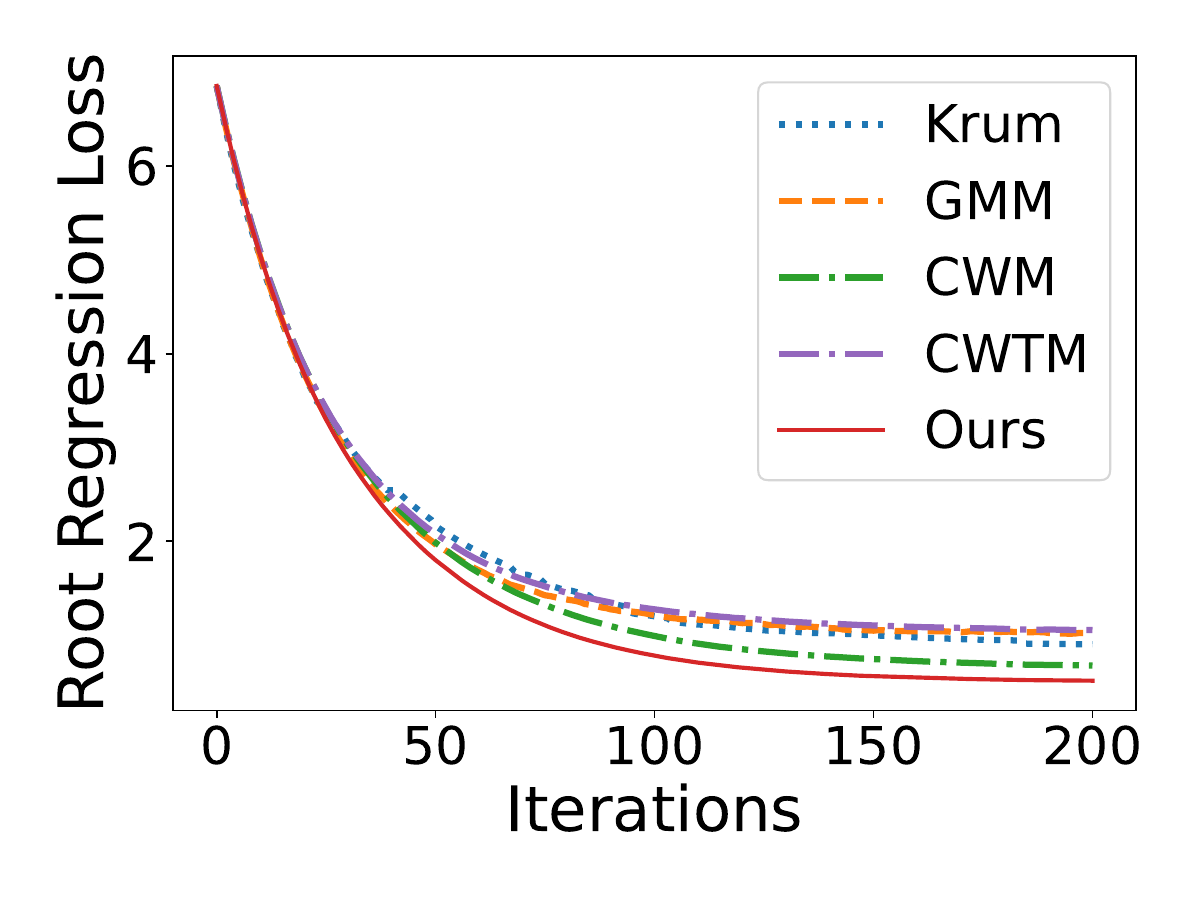}
			\caption{Synthesized data.}
		\end{subfigure}	
		\begin{subfigure}{0.33\linewidth}
			\includegraphics[width=1.05\textwidth,height=0.9\textwidth]{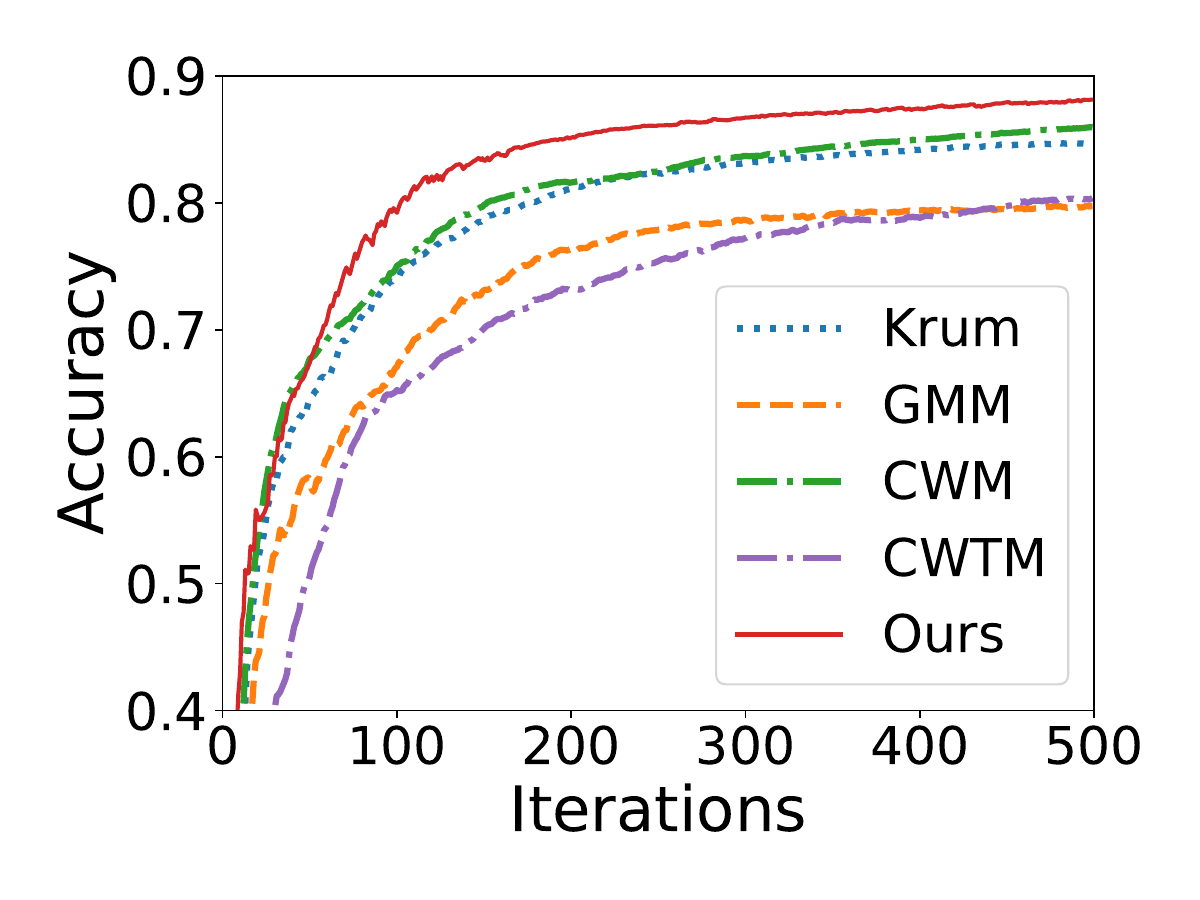}
			\caption{MNIST data.}
		\end{subfigure}	
		\caption{Experiments on unbalanced data for HLMA, with $\epsilon=0.2$.}\label{fig:unbalance}
	\end{figure}
	
	The result shows that with unbalanced data and our adaptive choice of $T_i$, the advantage of our method becomes more obvious, especially for MNIST data. The comparison of Figure \ref{fig:unbalance}(b) with Figure  \ref{fig:mnist}(d) shows that our method achieves nearly the same performance as the balanced case, while other methods are somewhat negatively affected by the unbalanced data allocation.
	

\subsection{Heterogeneous Data}
Finally, we show the robustness of our proposed method for non-i.i.d data. In particular, we make clients to be heterogeneous. We make some adjustment to the original linear model \eqref{eq:model}, such that in different machines, $\mathbf{w}^*$ are slightly different:
\begin{eqnarray}
	V_j = \langle \mathbf{U}_j, \mathbf{w}^*+\mathbf{\Delta}_\mathbf{w}(i)\rangle+W_j
	\label{eq:modelnew}
\end{eqnarray}
for all sample $j$ stored in client $i$. $\Delta_\mathbf{w}(i)$ is the additional weight for client $i$. We let each component of $\Delta_\mathbf{w}(i)$ follows distribution $\mathcal{N}(0,0.2)$. Under this setting, conditional on $\Delta_\mathbf{w}(i)$, samples within each clients are i.i.d. However, they are no longer i.i.d between different clients. The results are shown in Figure \ref{fig:niid}, in which we fix $\epsilon=0.2$ here.

\begin{figure}[h!]
	\centering	
	\begin{subfigure}{0.33\linewidth}
		\includegraphics[width=\textwidth,height=0.8\textwidth]{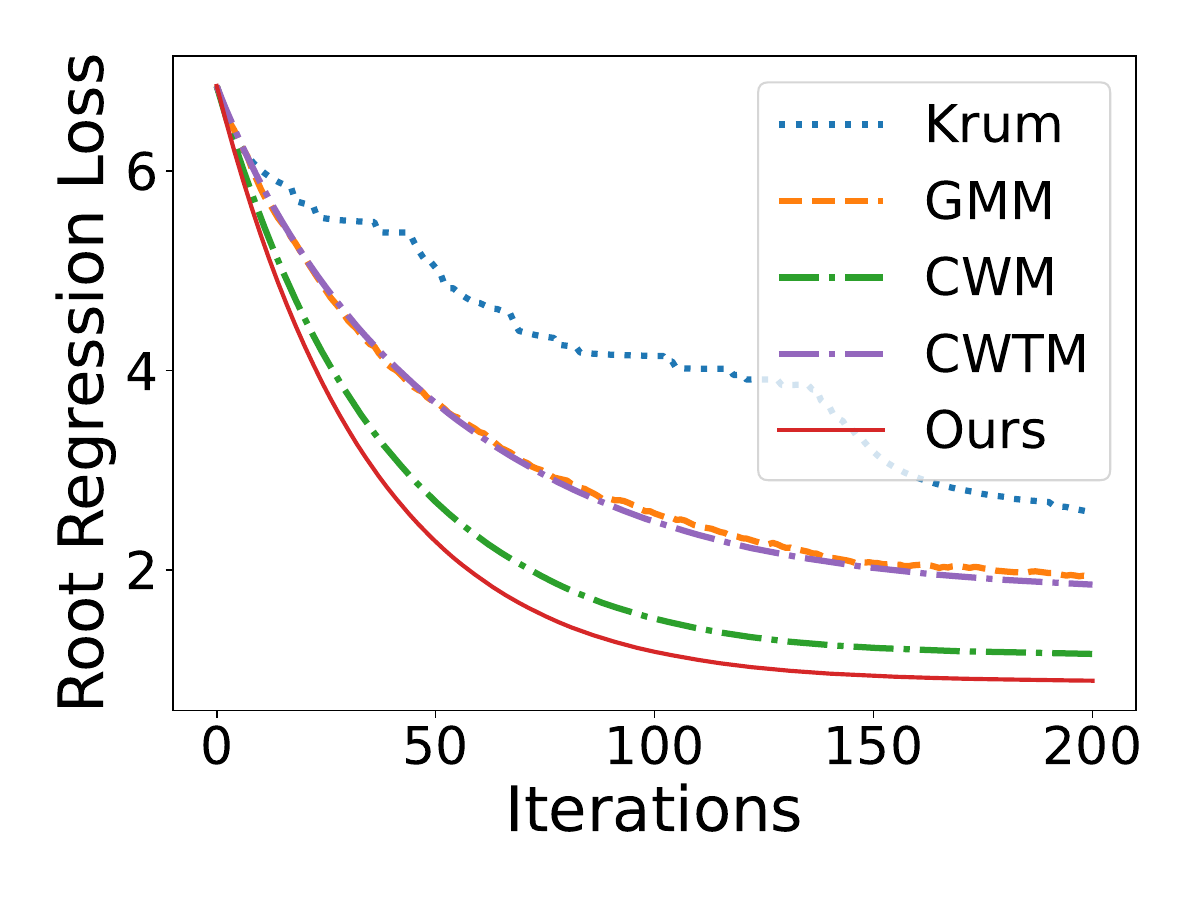}
		\caption{KA.}
	\end{subfigure}
	\begin{subfigure}{0.33\linewidth}
		\includegraphics[width=\textwidth,height=0.8\textwidth]{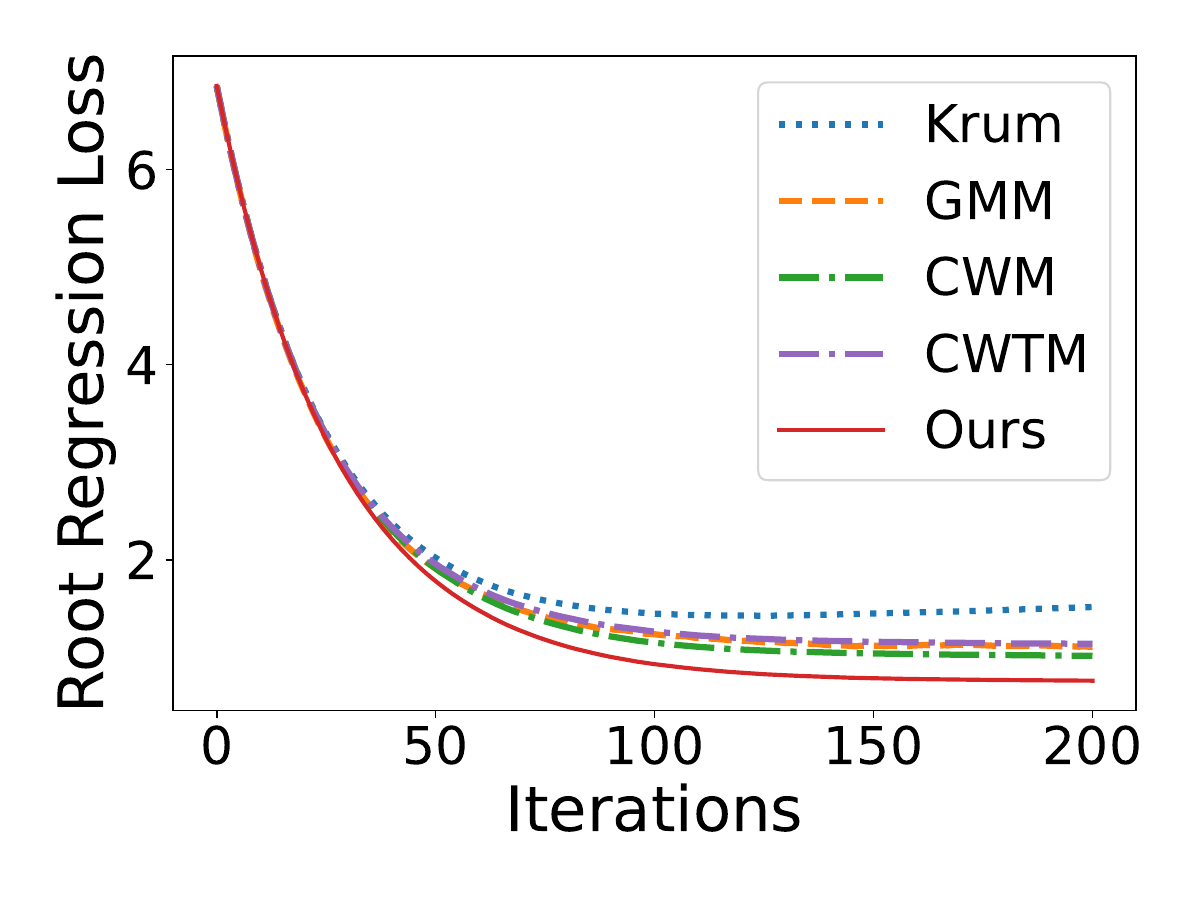}
		\caption{TMA.}
	\end{subfigure}
	\caption{Experiments on non-i.i.d case under model \eqref{eq:modelnew}, with $\epsilon=0.2$. }\label{fig:niid}
\end{figure}

According to Figure \ref{fig:niid}, with heterogeneous clients, our method still exhibits advantage compared to existing ones.
	\section{Conclusion}\label{sec:conc}
	In this paper, we have proposed a novel approach for Byzantine robust federated learning based on Huber loss minimization. Theoretical analyses have been conducted under the initial i.i.d assumption with balanced data, and subsequently extended to unbalanced and heterogeneous scenarios. Our method offers several advantages over existing approaches, including optimal statistical rate with fixed dimension, convenient selection of parameters without knowledge of Byzantine fraction $\epsilon$, and suitability for clients with unbalanced data. Furthermore, we have presented an algorithm to implement the multi-dimensional Huber loss minimization. The effectiveness of our approach is validated by numerical experiments. There are several possible directions for further improvements in the future. For example, pre-processing by filtering some obvious outliers may help to reduce the effect of Byzantine clients. Moreover, it is also worthwhile to consider one-round or multi-round robust algorithms to reduce the communication cost.

\bibliographystyle{ieeetr}
\bibliography{distributed}

\appendix

\section{Common Lemmas on Convergence}

Recall that the aggregator function is defined as 
\begin{eqnarray}
	g(\mathbf{w}_t) = \underset{s}{\arg\min}\sum_{i=1}^m n_i\phi_i(\norm{\mathbf{s}-\mathbf{X}_{it}}).
\end{eqnarray}
Now we generalize the definition above to arbitrary $\mathbf{w}\in \mathcal{W}$:
\begin{eqnarray}
	g(\mathbf{w}) = \underset{s}{\arg\min}\sum_{i=1}^m n_i\phi_i(\norm{\mathbf{s}-\mathbf{X}_i(\mathbf{w})}),
	\label{eq:agg}
\end{eqnarray}
in which $\mathbf{X}_i(\mathbf{w})$ denotes the signal from client $i$, if the master broadcasts $\mathbf{w}$. In particular,
\begin{eqnarray}
	\mathbf{X}_i(\mathbf{w})=\left\{
	\begin{array}{ccc}
		\mathbf{G}_i(\mathbf{w}) &\text{if} & i\notin \mathcal{B}\\
		\star &\text{if} & i\in \mathcal{B},
	\end{array}
	\right.
	\label{eq:xi}
\end{eqnarray}
with
\begin{eqnarray}
	\mathbf{G}_i(\mathbf{w}) = \frac{1}{n_i}\sum_{j=1}^{n_i}\nabla f(\mathbf{w}, \mathbf{Z}_{ij}).
	\label{eq:gi}
\end{eqnarray}
Define
\begin{eqnarray}
	\Delta = \underset{\pi_A}{\sup}\underset{\mathbf{w}\in \mathcal{W}}{\sup}\norm{g(\mathbf{w}) - \nabla F(\mathbf{w})}
	\label{eq:delta}
\end{eqnarray}
as the supremum error of gradient aggregator, in which $\pi_A$ stands for the strategy of the adversary.

Then we show the following lemmas.

\begin{lem}\label{lem:strong}
	(Strong convex $F$) If $F$ is $\mu$-strong convex and $L$-smooth, $\mathcal{W}$ is a convex set, and $\mathbf{w}^*\in \mathcal{W}$, then with $\eta\leq 1/L$,
	\begin{eqnarray}
		\norm{\mathbf{w}_t-\mathbf{w}^*}\leq (1-\rho)^t \norm{\mathbf{w}_0-\mathbf{w}^*}+\frac{2\Delta}{\mu},
	\end{eqnarray}
	with 
	\begin{eqnarray}
		\rho =\frac{1}{2}\eta\mu.
	\end{eqnarray}
\end{lem}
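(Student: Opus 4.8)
The plan is to read the update \eqref{eq:update} as an \emph{inexact} projected gradient step: the aggregator $g(\mathbf{w}_t)$ stands in for the true gradient $\nabla F(\mathbf{w}_t)$, and the discrepancy between them is controlled uniformly by $\Delta$ from \eqref{eq:delta}. Concretely, I would first prove a one-step recursion of the form $\norm{\mathbf{w}_{t+1}-\mathbf{w}^*}\leq(1-\rho)\norm{\mathbf{w}_t-\mathbf{w}^*}+\eta\Delta$, and then unroll it into the stated bound by summing a geometric series.

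The first step peels off the projection. Since $\mathbf{w}^*\in\mathcal{W}$ we have $\Pi_{\mathcal{W}}(\mathbf{w}^*)=\mathbf{w}^*$, so by the non-expansiveness of the Euclidean projection
\[
\norm{\mathbf{w}_{t+1}-\mathbf{w}^*}\leq\norm{\mathbf{w}_t-\eta g(\mathbf{w}_t)-\mathbf{w}^*}.
\]
Decomposing $g(\mathbf{w}_t)=\nabla F(\mathbf{w}_t)+\mathbf{e}_t$ with $\norm{\mathbf{e}_t}\leq\Delta$ (immediate from \eqref{eq:delta}) and applying the triangle inequality gives
\[
\norm{\mathbf{w}_{t+1}-\mathbf{w}^*}\leq\norm{\mathbf{w}_t-\mathbf{w}^*-\eta\nabla F(\mathbf{w}_t)}+\eta\Delta,
\]
which isolates the \emph{noiseless} gradient step as the only quantity left to control.

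The second step bounds that noiseless step. Using $\nabla F(\mathbf{w}^*)=0$ (valid whenever the global minimizer lies in the interior of $\mathcal{W}$, as is standard in the strongly convex case), I would expand $\norm{\mathbf{w}_t-\mathbf{w}^*-\eta\nabla F(\mathbf{w}_t)}^2$ and invoke two consequences of $\mu$-strong convexity and $L$-smoothness: the strong-convexity bound $\langle\nabla F(\mathbf{w}_t),\mathbf{w}_t-\mathbf{w}^*\rangle\geq\mu\norm{\mathbf{w}_t-\mathbf{w}^*}^2$, and the co-coercivity bound $\norm{\nabla F(\mathbf{w}_t)}^2\leq L\langle\nabla F(\mathbf{w}_t),\mathbf{w}_t-\mathbf{w}^*\rangle$. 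Because $\eta\leq1/L$, the cross and quadratic terms combine as $(-2\eta+\eta^2L)\langle\nabla F(\mathbf{w}_t),\mathbf{w}_t-\mathbf{w}^*\rangle\leq-\eta\langle\nabla F(\mathbf{w}_t),\mathbf{w}_t-\mathbf{w}^*\rangle$, so that $\norm{\mathbf{w}_t-\mathbf{w}^*-\eta\nabla F(\mathbf{w}_t)}^2\leq(1-\eta\mu)\norm{\mathbf{w}_t-\mathbf{w}^*}^2$; taking square roots and using $\sqrt{1-\eta\mu}\leq1-\eta\mu/2=1-\rho$ yields the contraction factor.

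Finally I would iterate the recursion and sum $\eta\Delta\sum_{k\geq0}(1-\rho)^k=\eta\Delta/\rho$; substituting $\rho=\eta\mu/2$ turns the additive term into exactly $2\Delta/\mu$, matching the claim. I expect the genuine obstacle to be the second step: the naive bound $\norm{\nabla F(\mathbf{w}_t)}\leq L\norm{\mathbf{w}_t-\mathbf{w}^*}$ alone forces $\eta$ to shrink to order $\mu/L^2$ before one recovers the rate $\rho=\eta\mu/2$. Securing that rate across the full admissible range $\eta\leq1/L$ requires the co-coercivity inequality to absorb the $\eta^2\norm{\nabla F(\mathbf{w}_t)}^2$ term, together with the stationarity $\nabla F(\mathbf{w}^*)=0$; tracking the constants carefully so that the residual error is precisely $2\Delta/\mu$ rather than a looser multiple is the delicate part.
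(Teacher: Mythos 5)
Your proof is correct and follows the same skeleton as the paper's: non-expansiveness of the projection (using convexity of $\mathcal{W}$ and $\mathbf{w}^*\in\mathcal{W}$), the triangle inequality to split off the aggregation error $\eta\Delta$, a one-step contraction $\norm{\mathbf{w}_t-\eta\nabla F(\mathbf{w}_t)-\mathbf{w}^*}^2\leq(1-\eta\mu)\norm{\mathbf{w}_t-\mathbf{w}^*}^2$, the bound $\sqrt{1-\eta\mu}\leq 1-\eta\mu/2$, and unrolling the recursion into the geometric-series residual $\eta\Delta/\rho = 2\Delta/\mu$. The one place you diverge is inside the contraction step: the paper routes through function values, combining the strong-convexity inequality $F(\mathbf{w}^*)\geq F(\mathbf{w}_t)+\langle\nabla F(\mathbf{w}_t),\mathbf{w}^*-\mathbf{w}_t\rangle+\frac{\mu}{2}\norm{\mathbf{w}_t-\mathbf{w}^*}^2$ with $\norm{\nabla F(\mathbf{w}_t)}^2\leq 2L\left(F(\mathbf{w}_t)-F(\mathbf{w}^*)\right)$ so that the function-gap terms cancel when $\eta\leq 1/L$, whereas you stay at the level of gradients, pairing the monotonicity bound $\langle\nabla F(\mathbf{w}_t),\mathbf{w}_t-\mathbf{w}^*\rangle\geq\mu\norm{\mathbf{w}_t-\mathbf{w}^*}^2$ with co-coercivity $\norm{\nabla F(\mathbf{w}_t)}^2\leq L\langle\nabla F(\mathbf{w}_t),\mathbf{w}_t-\mathbf{w}^*\rangle$. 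The two routes are equally tight: both yield the factor $1-\eta\mu$ over the full admissible range $\eta\leq 1/L$ and the same residual $2\Delta/\mu$, so neither buys anything quantitative over the other. Note also that the stationarity assumption $\nabla F(\mathbf{w}^*)=\mathbf{0}$ you flag for co-coercivity is not an extra cost relative to the paper: the paper's inequality $\norm{\nabla F(\mathbf{w}_t)}^2\leq 2L\left(F(\mathbf{w}_t)-F(\mathbf{w}^*)\right)$ likewise requires $\mathbf{w}^*$ to be an unconstrained minimizer of $F$ (its standard proof evaluates $F$ at $\mathbf{w}_t-\frac{1}{L}\nabla F(\mathbf{w}_t)$, a point that may leave $\mathcal{W}$), so both arguments carry the same implicit requirement; you are simply explicit about it.
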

\begin{proof}
	The proof is shown in Section \ref{sec:strong}.
\end{proof}
\begin{lem}\label{lem:convex}
	(General convex $F$) If $F$ is convex and $L$-smooth, and
	\begin{eqnarray}
		\mathcal{W}=\{\mathbf{w}|\norm{\mathbf{w}-\mathbf{w}^*}\leq 2\norm{\mathbf{w}_0-\mathbf{w}^*} \},	
		\label{eq:space}
	\end{eqnarray}  then with $\eta = 1/L$, after $t_m=(L/\Delta)\norm{\mathbf{w}_0-\mathbf{w}^*}_2$ steps,
	\begin{eqnarray}
		F(\mathbf{w}_{t_m})-F(\mathbf{w}^*)\leq 16\norm{\mathbf{w}_0-\mathbf{w}^*}\Delta.
	\end{eqnarray}
\end{lem}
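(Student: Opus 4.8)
The plan is to read the update $\mathbf{w}_{t+1} = \Pi_{\mathcal{W}}(\mathbf{w}_t - \eta g(\mathbf{w}_t))$ as an \emph{inexact} projected gradient descent, in which the aggregator acts as a gradient oracle with worst-case error $\norm{g(\mathbf{w}_t) - \nabla F(\mathbf{w}_t)}\leq \Delta$ by the definition in \eqref{eq:delta}. The crucial preliminary observation is that $\mathcal{W}=\{\mathbf{w}\mid\norm{\mathbf{w}-\mathbf{w}^*}\leq 2\norm{\mathbf{w}_0-\mathbf{w}^*}\}$ contains $\mathbf{w}^*$ and all iterates (the projection keeps them in $\mathcal{W}$, and $\mathbf{w}_0\in\mathcal{W}$ trivially), so we have the uniform bound $\norm{\mathbf{w}_t - \mathbf{w}^*}\leq 2D$ where I write $D=\norm{\mathbf{w}_0-\mathbf{w}^*}$. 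This boundedness is what turns the gradient error $\Delta$ into a controllable additive term.

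First I would derive a one-step progress inequality toward $\mathbf{w}^*$. Combining the variational inequality of the projection at $\mathbf{y}=\mathbf{w}^*$, the convexity bound $\langle \nabla F(\mathbf{w}_t),\mathbf{w}_t-\mathbf{w}^*\rangle\geq F(\mathbf{w}_t)-F(\mathbf{w}^*)$, and the $L$-smoothness descent inequality, then applying the identity $\langle a-b,b-c\rangle=\frac12(\norm{a-c}^2-\norm{a-b}^2-\norm{b-c}^2)$, the choice $\eta=1/L$ cancels the $\norm{\mathbf{w}_{t+1}-\mathbf{w}_t}^2$ terms and yields
\begin{equation}
	\eta\big(F(\mathbf{w}_{t+1})-F(\mathbf{w}^*)\big)\leq \tfrac12\norm{\mathbf{w}_t-\mathbf{w}^*}^2-\tfrac12\norm{\mathbf{w}_{t+1}-\mathbf{w}^*}^2+2\eta\Delta D,
\end{equation}
where the last term is the bound $-\eta\langle g(\mathbf{w}_t)-\nabla F(\mathbf{w}_t),\mathbf{w}_{t+1}-\mathbf{w}^*\rangle\leq \eta\Delta\cdot 2D$. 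Telescoping over $t=0,\ldots,t_m-1$ and discarding the negative terminal term gives the average-iterate bound $\frac{1}{t_m}\sum_{t=1}^{t_m}(F(\mathbf{w}_t)-F(\mathbf{w}^*))\leq \frac{D^2}{2\eta t_m}+2\Delta D$; substituting $\eta=1/L$ and $t_m=(L/\Delta)D$ makes the first term equal $\Delta D/2$, so some index $t^*\leq t_m$ satisfies $F(\mathbf{w}_{t^*})-F(\mathbf{w}^*)\leq \frac52\Delta D$.

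The hard part is that telescoping only controls the \emph{best} (equivalently average) iterate, whereas the claim concerns the \emph{final} iterate $\mathbf{w}_{t_m}$, and with an inexact gradient the objective need not decrease monotonically. To bridge this gap I would separately prove a per-step increase bound: using the projection property at $\mathbf{y}=\mathbf{w}_t$ to get $\langle g(\mathbf{w}_t),\mathbf{w}_{t+1}-\mathbf{w}_t\rangle\leq -\frac1\eta\norm{\mathbf{w}_{t+1}-\mathbf{w}_t}^2$, feeding this into the descent lemma with $\eta=1/L$, and maximizing the resulting quadratic in $\norm{\mathbf{w}_{t+1}-\mathbf{w}_t}$, one finds $F(\mathbf{w}_{t+1})-F(\mathbf{w}_t)\leq \Delta^2/(2L)$, i.e. the objective rises by at most $\Delta^2/(2L)$ per step. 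Chaining this from $t^*$ up to $t_m$ costs at most $t_m\cdot\Delta^2/(2L)=\Delta D/2$, so $F(\mathbf{w}_{t_m})-F(\mathbf{w}^*)\leq \frac52\Delta D+\frac12\Delta D=3\Delta D$, which sits comfortably inside the stated $16\norm{\mathbf{w}_0-\mathbf{w}^*}\Delta$. The loose constant confirms that the crude estimates (notably $\norm{\mathbf{w}_{t+1}-\mathbf{w}^*}\leq 2D$) are deliberate and need no sharpening.
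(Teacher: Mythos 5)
Your proof is correct, and it takes a genuinely different route from the paper's. The paper never touches the projection variational inequality: it first shows by induction that the projection is never active, using co-coercivity of the gradient (i.e. $\langle \nabla F(\mathbf{w}_t),\mathbf{w}_t-\mathbf{w}^*\rangle \geq \frac{1}{L}\norm{\nabla F(\mathbf{w}_t)}^2$) to get $\norm{\mathbf{w}_t-\eta g(\mathbf{w}_t)-\mathbf{w}^*}\leq \norm{\mathbf{w}_0-\mathbf{w}^*}+t\Delta/L\leq 2\norm{\mathbf{w}_0-\mathbf{w}^*}$ for $t<t_m$, so the dynamics reduce to unconstrained inexact gradient descent. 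It then applies the descent inequality $F(\mathbf{w}_{t+1})\leq F(\mathbf{w}_t)-\frac{1}{2L}\norm{\nabla F(\mathbf{w}_t)}^2+\frac{1}{2L}\Delta^2$ and splits into two cases: if $\norm{\nabla F(\mathbf{w}_t)}\geq\sqrt{2}\Delta$ for all $t$, the recursion $e_{t+1}\leq e_t-e_t^2/(16LD^2)$ on $e_t=F(\mathbf{w}_t)-F(\mathbf{w}^*)$ gives $e_{t_m}\leq 16LD^2/t_m=16D\Delta$; otherwise, once the gradient drops below $\sqrt{2}\Delta$ at some step $k$, an induction keeps $e_t\leq 2\sqrt{2}D\Delta$ for all $t\geq k$. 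Your argument instead keeps the projection throughout, telescopes the classical inexact-projected-gradient progress inequality to bound the best iterate by $\frac{5}{2}D\Delta$, and then converts best-iterate into last-iterate control via the per-step increase bound $\Delta^2/(2L)$; this last ingredient plays exactly the role of the paper's Case-2 induction (both handle the non-monotonicity of $F$ caused by the gradient error), but does so quantitatively rather than by case analysis. What each buys: the paper's route dispenses with projection machinery entirely but pays with the case split and a looser constant; yours is more modular, gives the sharper bound $3D\Delta\leq 16D\Delta$, and does not need the projection to be inactive. Both arguments lean on the same structural fact, namely that $\mathcal{W}$ has radius $2D$, which converts the worst-case gradient error $\Delta$ into an additive term of order $\Delta D$.
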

\begin{proof}
	The proof is shown in Section \ref{sec:convex}.
\end{proof}
\begin{lem}\label{lem:nonconvex}
	(Non-convex $F$) If $F$ is $L$-smooth, $\norm{\nabla F(\mathbf{w})}\leq M$ for all $\mathbf{w}\in \mathcal{W}$, and
	\begin{eqnarray}
		\mathcal{W}=\left\{\mathbf{w}|\norm{\mathbf{w}-\mathbf{w}^*}\leq \frac{2}{\Delta^2}(M+\Delta)(F(\mathbf{w}_0)-F(\mathbf{w}^*)) \right\},
	\end{eqnarray}
	then with $\eta=1/L$ and
	\begin{eqnarray}
		t_m=\frac{2L}{\Delta^2}(F(\mathbf{w}_0) - F(\mathbf{w}^*)),
		\label{eq:Tnonconvex}
	\end{eqnarray}
	we have
	\begin{eqnarray}
		\underset{t=0,1,\ldots, t_m}{\min}\norm{\nabla F(\mathbf{w}_t)}\leq \sqrt{2}\Delta.
	\end{eqnarray}
\end{lem}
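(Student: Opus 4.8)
The plan is to run the standard ``descent lemma'' argument for smooth non-convex optimization, treating the aggregator error as a bounded perturbation of the true gradient. I would write $g(\mathbf{w}_t) = \nabla F(\mathbf{w}_t) + \mathbf{e}_t$, where by the definition of $\Delta$ in \eqref{eq:delta} we have $\norm{\mathbf{e}_t}\le \Delta$ uniformly over $\mathbf{w}_t$ and over adversary strategies. As long as the Euclidean projection is inactive, the update is plain gradient descent $\mathbf{w}_{t+1} = \mathbf{w}_t - \eta g(\mathbf{w}_t)$, so applying $L$-smoothness with $\eta = 1/L$ gives $F(\mathbf{w}_{t+1}) \le F(\mathbf{w}_t) - \frac{1}{L}\langle \nabla F(\mathbf{w}_t), g(\mathbf{w}_t)\rangle + \frac{1}{2L}\norm{g(\mathbf{w}_t)}^2$. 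Substituting $g = \nabla F + \mathbf{e}_t$, the cross terms $\langle \nabla F, \mathbf{e}_t\rangle$ cancel exactly, leaving the clean per-step inequality $F(\mathbf{w}_{t+1}) \le F(\mathbf{w}_t) - \frac{1}{2L}\norm{\nabla F(\mathbf{w}_t)}^2 + \frac{\Delta^2}{2L}$.

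Second, I would conclude by a telescoping/contradiction argument. Suppose $\norm{\nabla F(\mathbf{w}_t)}^2 > 2\Delta^2$ for every $t\in\{0,\ldots,t_m\}$. Then each step strictly decreases $F$ by more than $\frac{1}{2L}(2\Delta^2 - \Delta^2) = \frac{\Delta^2}{2L}$. Summing the per-step bound over $t = 0, \ldots, t_m$ yields $F(\mathbf{w}_0) - F(\mathbf{w}_{t_m+1}) > (t_m+1)\frac{\Delta^2}{2L} > t_m\frac{\Delta^2}{2L} = F(\mathbf{w}_0) - F(\mathbf{w}^*)$, where the last equality uses the definition \eqref{eq:Tnonconvex} of $t_m$. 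Since $\mathbf{w}^*$ is the global minimizer, $F(\mathbf{w}_{t_m+1})\ge F(\mathbf{w}^*)$, so the left-hand side is at most $F(\mathbf{w}_0) - F(\mathbf{w}^*)$, a contradiction. Hence at least one iterate satisfies $\norm{\nabla F(\mathbf{w}_t)}\le\sqrt{2}\Delta$, which is exactly the claim.

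The main obstacle is justifying that $\Pi_\mathcal{W}$ can be ignored, which is precisely why the radius of $\mathcal{W}$ is set to $\frac{2}{\Delta^2}(M+\Delta)(F(\mathbf{w}_0)-F(\mathbf{w}^*))$. The point is that every per-step move is short: since all iterates lie in $\mathcal{W}$, the bound $\norm{\nabla F(\mathbf{w}_t)}\le M$ holds there, so $\norm{\mathbf{w}_{t+1}-\mathbf{w}_t}\le \eta\norm{g(\mathbf{w}_t)} \le (M+\Delta)/L$. Over $t_m$ steps this accumulates to a total path length of at most $t_m(M+\Delta)/L$, which equals the radius of $\mathcal{W}$ by the choice of $t_m$. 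I would argue, by induction along the trajectory, that this keeps the unprojected iterate inside $\mathcal{W}$, so the projection acts as the identity and the plain-gradient-descent computation above is valid throughout; the delicate bookkeeping is keeping the displacement estimate and the region $\mathcal{W}$ consistent. Should the projection nevertheless become active near the boundary, the fallback is the projection-aware descent inequality $F(\mathbf{w}_{t+1})\le F(\mathbf{w}_t) - \frac{L}{2}\norm{\mathbf{w}_{t+1}-\mathbf{w}_t}^2 + \Delta\norm{\mathbf{w}_{t+1}-\mathbf{w}_t}$, obtained from the variational characterization of $\Pi_\mathcal{W}$ (using $\norm{\mathbf{w}_t-\mathbf{w}_{t+1}}^2\le \eta\langle g(\mathbf{w}_t),\mathbf{w}_t-\mathbf{w}_{t+1}\rangle$), which can replace the clean bound at a modest loss.
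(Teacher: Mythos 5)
Your proposal is correct and follows essentially the same route as the paper: the per-step inequality $F(\mathbf{w}_{t+1})\le F(\mathbf{w}_t)-\frac{1}{2L}\norm{\nabla F(\mathbf{w}_t)}^2+\frac{\Delta^2}{2L}$ with the exact cross-term cancellation at $\eta=1/L$ is precisely the paper's \eqref{eq:trans}, and your projection-inactivity argument (each step moves at most $(M+\Delta)/L$, accumulating to the radius of $\mathcal{W}$ by the choice of $t_m$) is the paper's opening computation. Your contradiction-based finish is just a rephrasing of the paper's telescoping/averaging step (and is in fact stated more carefully than the paper's version, which contains typos in the summed bound), so the two proofs coincide in substance.
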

\begin{proof}
	The proof is shown in Section \ref{sec:nonconv}.
\end{proof}

\subsection{Proof of Lemma \ref{lem:strong}}\label{sec:strong}

\begin{eqnarray}
	\norm{\mathbf{w}_{t+1}-\mathbf{w}^*} &=& \norm{\Pi_{\mathcal{W}}(\mathbf{w}_t-\eta g(\mathbf{w}_t))-\mathbf{w}^*}\nonumber\\
	&\overset{(a)}{\leq} & \norm{\mathbf{w}_t-\eta g(\mathbf{w}_t) - \mathbf{w}^*}\nonumber\\
	&\leq & \norm{\mathbf{w}_t-\eta\nabla F(\mathbf{w}_t) - \mathbf{w}^*}+\eta \norm{\nabla F(\mathbf{w}_t) - g(\mathbf{w}_t)}\nonumber\\
	&\overset{(b)}{\leq} & \norm{\mathbf{w}_t-\eta\nabla F(\mathbf{w}_t)-\mathbf{w}^*}+\eta \Delta,
	\label{eq:iteration}
\end{eqnarray}
in which (a) uses the assumption that $\mathcal{W}$ is a convex set, which was made in the statement of Lemma \ref{lem:strong}. (b) uses \eqref{eq:delta}. For the first term,
\begin{eqnarray}
	&&\norm{\mathbf{w}_t-\eta\nabla F(\mathbf{w}_t) - \mathbf{w}^*}^2\nonumber\\
	&=& \norm{\mathbf{w}_t-\mathbf{w}^*}^2 - 2\eta \langle \mathbf{w}_t-\mathbf{w}^*, \nabla F(\mathbf{w}_t)\rangle + \eta^2 \norm{\nabla F(\mathbf{w}_t)}^2\nonumber\\
	&\overset{(a)}{\leq} & \norm{\mathbf{w}_t-\mathbf{w}^*}^2 - 2\eta \left(F(\mathbf{w}_t) - F(\mathbf{w}^*)+\frac{\mu}{2}\norm{\mathbf{w}_t-\mathbf{w}^*}^2\right) + \eta^2 \norm{\nabla F(\mathbf{w}_t)}^2\nonumber\\
	&\overset{(b)}{\leq}& (1-\eta\mu)\norm{\mathbf{w}_t-\mathbf{w}^*}^2 - 2\eta (F(\mathbf{w}_t) - F(\mathbf{w}^*)) + 2\eta^2 L (F(\mathbf{w}_t) - F(\mathbf{w}^*))\nonumber\\
	&\overset{(c)}{\leq} & (1-\eta \mu)\norm{\mathbf{w}_t-\mathbf{w}^*}^2.
	\label{eq:iterfirst}
\end{eqnarray}
(a) comes from the $\mu$-strong convexity of $F$, which ensures that
\begin{eqnarray}
	F(\mathbf{w}^*)\geq F(\mathbf{w}_t) + \langle \nabla F(\mathbf{w}_t), \mathbf{w}^*-\mathbf{w}_t\rangle + \frac{\mu}{2}\norm{\mathbf{w}_t-\mathbf{w}^*}^2.
\end{eqnarray}
(b) comes from the $L$-smoothness of $F$, which ensures that
\begin{eqnarray}
	\norm{\nabla F(\mathbf{w}_t)}^2\leq 2L(F(\mathbf{w}_t) - F(\mathbf{w}^*)).
\end{eqnarray}
(c) is from the requirement $\eta \leq 1/L$. Therefore, from \eqref{eq:iteration} and \eqref{eq:iterfirst},
\begin{eqnarray}
	\norm{\mathbf{w}_{t+1}-\mathbf{w}^*}\leq \sqrt{1-\eta\mu}\norm{\mathbf{w}_t-\mathbf{w}^*}+\eta\Delta.
\end{eqnarray}
By induction, it can be shown that
\begin{eqnarray}
	\norm{\mathbf{w}_t-\mathbf{w}^*}\leq (1-\rho)^t\norm{\mathbf{w}_0-\mathbf{w}^*}+\frac{2\Delta}{\mu},
	\label{eq:final}
\end{eqnarray}
with $\rho=\eta\mu/2$.

\subsection{Proof of Lemma \ref{lem:convex}}\label{sec:convex}
Our proof follows \cite{yin2018byzantine} and \cite{zhu2023byzantine} with some slight improvements.

To begin with, we prove that $\mathbf{w}_t-\eta g(\mathbf{w}_t)\in \mathcal{W}$ for $t=1,\ldots, t_m-1$, such that the projection has no effect, i.e. $\Pi_\mathcal{W}(\mathbf{w}_t-\eta g(\mathbf{w}_t)) = \mathbf{w}_t-\eta g(\mathbf{w}_t)$. Note that
\begin{eqnarray}
	\norm{\mathbf{w}_t-\eta g(\mathbf{w}_t) - \mathbf{w}^*}\leq \norm{\mathbf{w}_t-\eta\nabla F(\mathbf{w}_t)-\mathbf{w}^*}+\eta\Delta.
\end{eqnarray}
Recall that $\eta=1/L$,
\begin{eqnarray}
	\norm{\mathbf{w}_t-\eta\nabla F(\mathbf{w}_t)-\mathbf{w}^*}^2 &=&\norm{\mathbf{w}_t-\mathbf{w}^*}^2 - 2\eta\langle \nabla F(\mathbf{w}_t), \mathbf{w}_t-\mathbf{w}^*\rangle + \eta^2 \norm{\nabla F(\mathbf{w}_t)}^2\nonumber\\
	&\leq & \norm{\mathbf{w}_t-\mathbf{w}^*}^2 - \frac{2\eta}{L}\norm{\nabla F(\mathbf{w}_t)}^2+\eta^2 \norm{\nabla F(\mathbf{w}_t)}^2\nonumber\\
	&=&\norm{\mathbf{w}_t-\mathbf{w}^*}^2 -\frac{1}{L^2}\norm{\nabla F(\mathbf{w}_t)}^2\nonumber\\
	&\leq & \norm{\mathbf{w}_t-\mathbf{w}^*}^2.
\end{eqnarray}
Therefore
\begin{eqnarray}
	\norm{\mathbf{w}_t-\eta g(\mathbf{w}_t)-\mathbf{w}^*}\leq \norm{\mathbf{w}_t-\mathbf{w}^*}+\frac{\Delta}{L}.
\end{eqnarray}
Now prove that for $t=0,1,\ldots, t_m-1$,
\begin{eqnarray}
	\norm{\mathbf{w}_t-g(\mathbf{w}_t)-\mathbf{w}^*}\leq \norm{\mathbf{w}_0-\mathbf{w}^*}+\frac{t\Delta}{L}
	\label{eq:iter}
\end{eqnarray}
by induction. \eqref{eq:iter} for $t=0$ is trivial. If \eqref{eq:iter} holds for $t$, then
\begin{eqnarray}
	\norm{\mathbf{w}_t-\eta g(\mathbf{w}_t)-\mathbf{w}^*}\leq\norm{\mathbf{w}_0-\mathbf{w}^*} +\frac{t_m\Delta}{L}\leq 2\norm{\mathbf{w}_0-\mathbf{w}^*},
\end{eqnarray}
hence $\mathbf{w}_t-g(\mathbf{w}_t)\in \mathcal{W}$, and $\mathbf{w}_{t+1}=\Pi_\mathcal{W}(\mathbf{w}_t-\eta g(\mathbf{w}_t))=\mathbf{w}_t-\eta g(\mathbf{w}_t)$. Hence, at time step $t+1$,
\begin{eqnarray}
	\norm{\mathbf{w}_{t+1} - \eta g(\mathbf{w}_{t+1})-\mathbf{w}^*}\leq \norm{\mathbf{w}_{t+1}-\mathbf{w}^*}+\frac{\Delta}{L}\leq \norm{\mathbf{w}_0-\mathbf{w}^*}+\frac{(t+1)\Delta}{L}.
\end{eqnarray}
Proof of \eqref{eq:iter} is complete. With \eqref{eq:iter}, it can be shown that $\mathbf{w}_t-\eta g(\mathbf{w}_t)\in \mathcal{W}$ for all $t=0,1,\ldots, t_m-1$. We can then study the algorithm without projection.

Since $F$ is $L$-smooth,
\begin{eqnarray}
	F(\mathbf{w}_{t+1})&\leq& F(\mathbf{w}_t) + \langle \nabla F(\mathbf{w}_t), \mathbf{w}_{t+1}-\mathbf{w}_t\rangle+\frac{L}{2}\norm{\mathbf{w}_{t+1}-\mathbf{w}_t}^2\nonumber\\
	&=& F(\mathbf{w}_t)-\eta \langle \nabla F(\mathbf{w}_t), g(\mathbf{w}_t)\rangle + \frac{1}{2}L\eta^2 \norm{g(\mathbf{w}_t)}^2\nonumber\\
	&=& F(\mathbf{w}_t)-\eta\langle \nabla F(\mathbf{w}_t), g(\mathbf{w}_t) - \nabla F(\mathbf{w}_t)\rangle - \eta\norm{\nabla F(\mathbf{w}_t)}^2\nonumber\\
	&&+\frac{1}{2}L\eta^2\left(\norm{\nabla F(\mathbf{w}_t)}^2 + 2\langle \nabla F(\mathbf{w}_t), g(\mathbf{w}_t)-\nabla F(\mathbf{w}_t)\rangle + \norm{g(\mathbf{w}_t) - \nabla F(\mathbf{w}_t)}^2\right)\nonumber\\
	&=& F(\mathbf{w}_t)-\frac{1}{2L}\norm{\nabla F(\mathbf{w}_t)}^2 + \frac{1}{2L}\Delta^2.
	\label{eq:trans}
\end{eqnarray}
We discuss two cases.

\textbf{Case 1.} $\norm{\nabla F(\mathbf{w}_t)}\geq \sqrt{2}\Delta$ for all $t=0,1,\ldots, t_m-1$. Then from \eqref{eq:trans}, define $D=\norm{\mathbf{w}_0-\mathbf{w}^*}$, then
\begin{eqnarray}
	F(\mathbf{w}_{t+1}) - F(\mathbf{w}^*)&\leq & F(\mathbf{w}_t) - F(\mathbf{w}^*)-\frac{1}{4L}\norm{\nabla F(\mathbf{w}_t)}^2\nonumber\\
	&\leq & F(\mathbf{w}_t)-F(\mathbf{w}^*)-\frac{1}{4L\norm{\mathbf{w}-\mathbf{w}^*}^2} (F(\mathbf{w}_t) - F(\mathbf{w}^*))^2\nonumber\\
	&\leq & F(\mathbf{w}_t)-F(\mathbf{w}^*)-\frac{1}{16LD^2}(F(\mathbf{w}_t) - F(\mathbf{w}^*))^2,
\end{eqnarray}
in which the last step comes from \eqref{eq:space}. Denote 
\begin{eqnarray}
	e_t:=F(\mathbf{w}_t)-F(\mathbf{w}^*),
\end{eqnarray}
then
\begin{eqnarray}
	e_{t+1}\leq e_t-\frac{1}{16LD^2}e_t^2,
\end{eqnarray}
and
\begin{eqnarray}
	\frac{1}{e_{t+1}} = \frac{1}{e_t\left(1-\frac{1}{16LD^2}e_t\right)}\geq \frac{1}{e_t}\left(1+\frac{1}{16LD^2}e_t\right)=\frac{1}{e_t} + \frac{1}{16LD^2},
\end{eqnarray}
hence
\begin{eqnarray}
	\frac{1}{e_{t_m}}\geq \frac{t_m}{16LD^2},
\end{eqnarray}
thus
\begin{eqnarray}
	F(\mathbf{w}_{t_m})-F(\mathbf{w}^*)\leq \frac{16LD^2}{t_m}=16D\Delta.
	\label{eq:case1}
\end{eqnarray}
\textbf{Case 2.} $\norm{\nabla F(\mathbf{w}_k)}<\sqrt{2}\Delta$ for some $k\in \{0,1,\ldots, t_m-1\}$. Then for all $k\in \{0,1,\ldots, t_m-1\}$, from the convexity of $F$, 
\begin{eqnarray}
	F(\mathbf{w}_k)-F(\mathbf{w}^*)\leq \norm{\nabla F(\mathbf{w}_k)}\norm{\mathbf{w}_k-\mathbf{w}^*}\leq 2\sqrt{2}D\Delta.
	\label{eq:errk}
\end{eqnarray}
Now prove
\begin{eqnarray}
	F(\mathbf{w}_t) - F(\mathbf{w}^*)\leq 2\sqrt{2}D\Delta,
	\label{eq:errk2}
\end{eqnarray}
for $t=k,k+1,\ldots, t_m$ by induction. \eqref{eq:errk2} with $t=k$ obviously hold. If \eqref{eq:errk2} holds for $t$, at $t+1$, if $\norm{\nabla F(\mathbf{w}_{t+1})}<\sqrt{2}\Delta$, then \eqref{eq:errk2} holds. Otherwise, by \eqref{eq:trans},
\begin{eqnarray}
	F(\mathbf{w}_{t+1})-F(\mathbf{w}^*)\leq F(\mathbf{w}_t)-F(\mathbf{w}^*)- \frac{1}{4L}\norm{\nabla F(\mathbf{w}_t)}^2\leq F(\mathbf{w}_t)-F(\mathbf{w}^*)\leq 2\sqrt{2}D\Delta.
\end{eqnarray}
Therefore \eqref{eq:errk2} is proved. Hence
\begin{eqnarray}
	F(\mathbf{w}_{t_m}) - F(\mathbf{w}^*)\leq 2\sqrt{2}D\Delta.
	\label{eq:case2}	
\end{eqnarray}
Combine \eqref{eq:case1} for Case 1 and \eqref{eq:case2} for Case 2, and recall that $D=\norm{\mathbf{w}_0-\mathbf{w}^*}$,
\begin{eqnarray}
	F(\mathbf{w}_{t_m}) - F(\mathbf{w}^*)\leq 16\norm{\mathbf{w}_0-\mathbf{w}^*}\Delta.	
\end{eqnarray}
The proof of Lemma \ref{lem:convex} is complete.

\subsection{Proof of Lemma \ref{lem:nonconvex}}\label{sec:nonconv}
\begin{eqnarray}
	\norm{\mathbf{w}_t-\eta g(\mathbf{w}_t) - \mathbf{w}^*}&\leq& \norm{\mathbf{w}_t-\mathbf{w}^*}+\eta (\norm{\nabla F(\mathbf{w})} +\norm{g(\mathbf{w}) - \nabla F(\mathbf{w})})\nonumber\\
	&\leq & \norm{\mathbf{w}_t-\mathbf{w}^*}+\frac{1}{L}(M+\Delta).
\end{eqnarray}
Run $T$ steps with $T$ determined in \eqref{eq:Tnonconvex}, 
\begin{eqnarray}
	\norm{\mathbf{w}_t-\eta g(\mathbf{w}_t) - \mathbf{w}^*}\leq \frac{2}{\Delta^2}(M+\Delta)(F(\mathbf{w}_0) - F(\mathbf{w}^*)).
\end{eqnarray}
Hence we can study the algorithm without projection. From \eqref{eq:trans},
\begin{eqnarray}
	F(\mathbf{w}_{t+1})\leq  F(\mathbf{w}_0) - \frac{1}{2L}\norm{\nabla F(\mathbf{w}_t)}+\frac{1}{2L}\Delta^2.
\end{eqnarray}
Sum up over $t$,
\begin{eqnarray}
	F(\mathbf{w}_{t_m})\leq F(\mathbf{w}_0) - \frac{1}{2L}\sum_{t=0}^{t_m-1}\norm{\nabla F(\mathbf{w}_t)}^2+\frac{1}{2L}\Delta^2.
\end{eqnarray}
Hence
\begin{eqnarray}
	\sum_{t=0}^{t_m-1}\norm{\nabla F(\mathbf{w}_t)}^2\leq 2L(F(\mathbf{w}_0) - F(\mathbf{w}^*))+\Delta^2\leq 2\Delta^2,
\end{eqnarray}
and
\begin{eqnarray}
	\underset{t=0,1,\ldots, t_m}{\min}\norm{\nabla F(\mathbf{w}_t)}\leq \sqrt{2}\Delta.
\end{eqnarray}
The proof of Lemma \ref{lem:nonconvex} is complete.

\section{Proof of Theorem 1}\label{sec:simple}
Define
\begin{eqnarray}
	r_0 = \max\left\{\sqrt{\frac{8\sigma^2}{n}\ln \frac{2\times 6^dmC_W(NL)^d}{\delta}}, \frac{4\sigma}{n}\ln \frac{2\times 6^dmC_W(NL)^d}{\delta} \right\}+\frac{2}{N},
	\label{eq:r0}
\end{eqnarray}
and
\begin{eqnarray}
	\Delta_0=\max\left\{\sqrt{\frac{8\sigma^2}{N}\ln \frac{2\times 6^dC_W(NL)^d}{\delta}}, \frac{4\sigma}{N}\ln \frac{2\times 6^dC_W(NL)^d}{\delta} \right\}+\frac{2}{N}.
	\label{eq:delta0}
\end{eqnarray}
We provide a finite sample statement of Theorem \ref{thm:simple} as following. 
\begin{theorem}\label{thm:simple_app}
	If $T\geq 3r_0$, then under Assumption 2 and 3, the following equations hold with probability at least $1-\delta$.
	
	(1) (Strong convex $F$) Under Assumption 1(a), with $\eta\leq 1/L$,
	\begin{eqnarray}
		\norm{\mathbf{w}_t-\mathbf{w}^*}\leq (1-\rho)^t \norm{\mathbf{w}_0-\mathbf{w}^*}+\frac{2\Delta_A}{\mu},
	\end{eqnarray}
	in which $\rho = \eta\mu/2$;
	
	(2) (General convex $F$) Under Assumption 1(b), with $\eta=1/L$, after $t_m=(L/\Delta_A)\norm{\mathbf{w}_0-\mathbf{w}^*}_2$ steps,
	\begin{eqnarray}
		F(\mathbf{w}_{t_m})-F(\mathbf{w}^*)\leq 16\norm{\mathbf{w}_0-\mathbf{w}^*}\Delta_A;
	\end{eqnarray}
	
	(3) (Non-convex $F$) Under Assumption 1(c), with $\eta = 1/L$, after
	$t_m=(2L/\Delta_A^2)(F(\mathbf{w}_0) - F(\mathbf{w}^*))$ steps, we have
	\begin{eqnarray}
		\underset{t=0,1,\ldots, t_m}{\min}\norm{\nabla F(\mathbf{w}_t)}\leq \sqrt{2}\Delta_A,
	\end{eqnarray}
	in which 
	\begin{eqnarray}
		\Delta_A=\left\{
		\begin{array}{ccc}
			2\epsilon T+\Delta_0&\text{if} & \epsilon\leq \frac{1}{4}\nonumber\\
			\frac{1-\epsilon}{\sqrt{1-2\epsilon}}r_0+T &\text{if} & \frac{1}{4}<\epsilon<\frac{1}{2}.	
		\end{array}
		\right.
	\end{eqnarray}
\end{theorem}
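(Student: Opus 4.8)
The plan is to observe that Lemmas \ref{lem:strong}, \ref{lem:convex} and \ref{lem:nonconvex} already reduce all three cases of the theorem to a single quantity: the uniform aggregation error $\Delta = \sup_{\pi_A}\sup_{\mathbf{w}\in\mathcal{W}}\norm{g(\mathbf{w}) - \nabla F(\mathbf{w})}$ defined in \eqref{eq:delta}. Indeed, each conclusion of Theorem \ref{thm:simple_app} is exactly the corresponding lemma with $\Delta$ replaced by $\Delta_A$. Hence it suffices to prove that, on an event of probability at least $1-\delta$, one has $\Delta\leq\Delta_A$. I would split this into a probabilistic step (controlling the benign gradients uniformly in $\mathbf{w}$) and a deterministic step (bounding the minimizer of the Huber objective given good benign gradients, for an arbitrary adversary).

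For the probabilistic step, fix the good event on which, uniformly over $\mathbf{w}\in\mathcal{W}$, every benign client satisfies $\norm{\mathbf{X}_i(\mathbf{w}) - \nabla F(\mathbf{w})}\leq r_0$ and the benign average satisfies $\norm{(1/|\mathcal{G}|)\sum_{i\in\mathcal{G}}\mathbf{X}_i(\mathbf{w}) - \nabla F(\mathbf{w})}\leq \Delta_0$, where $\mathcal{G}=[m]\setminus\mathcal{B}$. These are precisely the definitions \eqref{eq:r0} and \eqref{eq:delta0}. I would establish them by a covering argument: each benign $\mathbf{X}_i(\mathbf{w})$ is an average of $n$ i.i.d.\ sub-exponential vectors with mean $\nabla F(\mathbf{w})$, so a Bernstein bound (producing the two-branch $\max\{\sqrt{\cdot},\cdot\}$ form) together with a net of the unit sphere (the $6^d$ factor) controls the deviation at a fixed $\mathbf{w}$; a union bound over an $r$-net of $\mathcal{W}$ of size $N_c(r)\leq C_W/r^d$ (Assumption \ref{ass:cover}) and over the $m$ clients, extended to all $\mathbf{w}$ by $L$-Lipschitzness of the empirical and population gradients with $r\sim 1/(NL)$, yields the uniform bounds, the residual discretization error being the additive $2/N$. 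Note that $\Delta_0$ uses all $\approx N$ benign samples whereas $r_0$ uses only the $n$ samples of one client and carries the extra factor $m$ inside the logarithm, so that $\Delta_0\ll r_0$; this ordering is used crucially below.

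For the deterministic step I would fix $\mathbf{w}$ and write $\hat{\mathbf{s}}=g(\mathbf{w})$, $\boldsymbol{\mu}=\nabla F(\mathbf{w})$, $a=\norm{\hat{\mathbf{s}}-\boldsymbol{\mu}}$. Since $\phi_i(\norm{\cdot})$ is convex and $C^1$, the minimizer satisfies $\sum_{i=1}^m \psi(\hat{\mathbf{s}}-\mathbf{X}_i)=0$, where $\psi(\mathbf{v})=\min\{1,T/\norm{\mathbf{v}}\}\mathbf{v}$ is the clipped residual with $\norm{\psi(\mathbf{v})}\leq T$. Isolating the benign clients gives $\norm{\sum_{i\in\mathcal{G}}\psi(\hat{\mathbf{s}}-\mathbf{X}_i)}\leq|\mathcal{B}|T=\epsilon m T$, since every Byzantine term has norm at most $T$ regardless of the adversary. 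Decomposing $\psi(\hat{\mathbf{s}}-\mathbf{X}_i)=(\hat{\mathbf{s}}-\mathbf{X}_i)-\mathbf{r}_i$ with the clipping correction $\mathbf{r}_i$ supported on clients with $\norm{\hat{\mathbf{s}}-\mathbf{X}_i}>T$ and $\norm{\mathbf{r}_i}=\norm{\hat{\mathbf{s}}-\mathbf{X}_i}-T$, I obtain $|\mathcal{G}|\norm{\hat{\mathbf{s}}-\bar{\mathbf{X}}_{\mathcal{G}}}\leq\epsilon m T+\sum_{i\in\mathcal{G}}\norm{\mathbf{r}_i}$, with $\bar{\mathbf{X}}_{\mathcal{G}}$ the benign mean. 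For $\epsilon\leq 1/4$ the key claim is that no benign client is clipped: if some were, then $\norm{\mathbf{r}_i}\leq a+r_0-T$ and the displayed inequality combined with $\norm{\hat{\mathbf{s}}-\bar{\mathbf{X}}_{\mathcal{G}}}\geq a-\Delta_0$ would force $(1-2\epsilon)T\leq(1-\epsilon)(r_0+\Delta_0)$, contradicting $T\geq 4r_0>2(r_0+\Delta_0)$ (using $\Delta_0<r_0$). With $\mathbf{r}_i\equiv 0$ the bound collapses to $a\leq\Delta_0+\tfrac{\epsilon}{1-\epsilon}T\leq\Delta_0+3\epsilon T$, the first branch of $\Delta_A$.

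The hard part is the regime $1/4<\epsilon<1/2$, where $1-2\epsilon$ may be tiny and the no-clipping argument fails. Here I would instead project the stationarity identity onto the error direction $\hat{\mathbf{u}}=(\hat{\mathbf{s}}-\boldsymbol{\mu})/a$ and exploit the geometry of the clipped residual. In the relevant regime $a\gtrsim T$ essentially all benign clients are clipped, and writing $\mathbf{e}_i=\mathbf{X}_i-\boldsymbol{\mu}$ with parallel/perpendicular parts $e_i^\parallel,\mathbf{e}_i^\perp$ gives $\langle\psi(\hat{\mathbf{s}}-\mathbf{X}_i),\hat{\mathbf{u}}\rangle=T(a-e_i^\parallel)/\sqrt{(a-e_i^\parallel)^2+\norm{\mathbf{e}_i^\perp}^2}$, which is close to $T$ but deficient by a term of order $T\norm{\mathbf{e}_i^\perp}^2/a^2$. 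Summing the benign projections, bounding the Byzantine projections by $\epsilon m T$, and using $\norm{\mathbf{e}_i^\perp}\leq r_0$ turns $0=\sum_i\langle\psi,\hat{\mathbf{u}}\rangle$ into an inequality of the form $(1-\epsilon)(a-r_0)\leq\epsilon\sqrt{(a-r_0)^2+r_0^2}$, whose solution after squaring is of order $r_0/\sqrt{1-2\epsilon}$; combined with the elementary bound $a\lesssim T$ covering the un-clipped regime, this yields the second branch $\frac{1-\epsilon}{\sqrt{1-2\epsilon}}r_0+T$. I expect this second-moment control of the perpendicular benign errors — obtaining the sharp $1/\sqrt{1-2\epsilon}$ rather than the naive $1/(1-2\epsilon)$ that a first-order Lipschitz bound would give — to be the main technical obstacle, since it requires carefully tracking the quadratic deficiency of the clipped influence function rather than the crude bound used in the small-$\epsilon$ case.
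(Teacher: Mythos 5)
Your proposal is sound and, at the skeleton level, it is the paper's proof: the same covering-plus-Bernstein argument yields the two uniform events (the paper's Lemma \ref{lem:hpb}), the same case split at $\epsilon=1/4$ appears, the conclusion is assembled by plugging $\Delta=\Delta_A$ into Lemmas \ref{lem:strong}, \ref{lem:convex} and \ref{lem:nonconvex}, and your large-$\epsilon$ argument coincides with the paper's Lemma \ref{lem:diffnew}: your parallel/perpendicular decomposition of the clipped influence is exactly the paper's triangle/cosine bound, which after conditioning on $a>T+r_0$ (so that every benign client is clipped) gives $(1-\epsilon)\sqrt{1-r_0^2/a^2}\leq\epsilon$ and hence the $\frac{1-\epsilon}{\sqrt{1-2\epsilon}}r_0$ term. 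So this part is much less of an obstacle than you anticipate; it is a short geometric computation, not a delicate second-moment analysis. Where you genuinely diverge is the small-$\epsilon$ branch. The paper routes through the auxiliary \emph{clean} Huber minimizer $a(\mathbf{w})=\arg\min_{\mathbf{s}}\sum_i\phi(\norm{\mathbf{s}-\mathbf{G}_i(\mathbf{w})})$: it lies in the convex hull of the clean gradients (Lemma \ref{lem:dist}), hence no clean gradient is clipped and $a(\mathbf{w})$ equals the all-clients clean mean, within $\Delta_0$ of $\nabla F$ (Lemma \ref{lem:diff2}); then a local Hessian lower bound $\nabla_s^2 h\succeq(1-\epsilon)m\mathbf{I}$ converts the gradient perturbation $\epsilon m(T+2r_0)$ into $\norm{g-a}\leq\frac{\epsilon(T+2r_0)}{1-\epsilon}$ (Lemma \ref{lem:diff1}). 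You dispense with $a(\mathbf{w})$ and with the second-order step entirely, proving no-clipping of benign clients at $g(\mathbf{w})$ itself by contradiction and reading the bound off the first-order condition; this is more direct and equally valid.

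The one real flaw is your reference point. You assert that the benign-only mean satisfies the $\Delta_0$ bound ``precisely'' by \eqref{eq:delta0}, but \eqref{eq:delta0} is the deviation of the average over \emph{all} $m$ clean gradients ($N$ samples). The benign mean involves only $(1-\epsilon)N$ samples and, more seriously, $\mathcal{B}$ may be chosen by an adversary who sees the data, so $\mathcal{G}$ is a data-dependent index set; a direct concentration bound for it would require a union over the $\binom{m}{\lfloor\epsilon m\rfloor}$ candidate sets, which inflates the logarithm enough to damage the rate. The repair is cheap and keeps your argument intact: write
\begin{eqnarray}
	\bar{\mathbf{X}}_{\mathcal{G}}-\nabla F(\mathbf{w})=\frac{1}{(1-\epsilon)m}\left(\sum_{i=1}^m(\mathbf{G}_i(\mathbf{w})-\nabla F(\mathbf{w}))-\sum_{i\in \mathcal{B}}(\mathbf{G}_i(\mathbf{w})-\nabla F(\mathbf{w}))\right),\nonumber
\end{eqnarray}
so that on your good event $\norm{\bar{\mathbf{X}}_{\mathcal{G}}-\nabla F(\mathbf{w})}\leq\frac{\Delta_0+\epsilon r_0}{1-\epsilon}$ for every admissible $\mathcal{B}$; since $r_0\leq T/4$ and $\epsilon\leq 1/4$, the extra $\frac{\epsilon r_0}{1-\epsilon}$ is absorbed by the slack between $\frac{\epsilon}{1-\epsilon}T$ and $3\epsilon T$, and you still land on $\Delta_A=3\epsilon T+\Delta_0$. (Avoiding exactly this bookkeeping is what the paper's all-clients aggregate $a(\mathbf{w})$ buys.) Finally, note that your no-clipping contradiction uses $\Delta_0\leq r_0$; this does hold by comparing \eqref{eq:r0} and \eqref{eq:delta0}, but it should be stated rather than assumed.
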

We would like to remark that with $T\sim \sigma\sqrt{(d/n)\ln (N/\delta)}$, we have the following asymptotic bound
\begin{eqnarray}
	\Delta_A\lesssim \left(\frac{1}{\sqrt{1-2\epsilon}}\frac{\epsilon}{\sqrt{n}}+\frac{1}{\sqrt{N}}\right)\sqrt{d\ln \frac{N}{\delta}}.
\end{eqnarray}

Now we prove this theorem. Recall that $\mathbf{X}_i(\mathbf{w})$ and $\mathbf{G}_i(\mathbf{w})$ are defined in \eqref{eq:xi} and \eqref{eq:gi}.

\begin{lem}\label{lem:hpb}
	With probability at least $1-\delta$, the following inequalities hold: (1) For all $i\in [m]$ and $\mathbf{w}\in \mathcal{W}$,
	\begin{eqnarray}
		\norm{\mathbf{G}_i(\mathbf{w})-\nabla F(\mathbf{w})}\leq r_0;
		\label{eq:b1}
	\end{eqnarray}
	
	(2) For all $\mathbf{w}\in \mathcal{W}$,
	\begin{eqnarray}
		\norm{\frac{1}{m}\sum_{i=1}^m \mathbf{G}_i(\mathbf{w}) - \nabla F(\mathbf{w})}\leq \Delta_0.
		\label{eq:b2}
	\end{eqnarray}
\end{lem}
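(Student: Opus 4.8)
The plan is to prove both bounds by the standard recipe for uniform concentration of high-dimensional empirical means: a Bernstein-type tail bound for a single fixed parameter and a single fixed projection direction, followed by two covering (union-bound) layers — one over the unit sphere to turn control of projections into control of the $\ell_2$ norm, and one over $\mathcal{W}$ (via Assumption \ref{ass:cover}) to obtain uniformity in $\mathbf{w}$ — capped off by a Lipschitz extension from net points to all of $\mathcal{W}$.

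First I would fix $\mathbf{w}\in\mathcal{W}$, a client $i$, and a unit vector $\mathbf{v}$. The scalar $\mathbf{v}^T(\mathbf{G}_i(\mathbf{w})-\nabla F(\mathbf{w}))=\frac{1}{n_i}\sum_{j=1}^{n_i}\mathbf{v}^T(\nabla f(\mathbf{w},\mathbf{Z}_{ij})-\nabla F(\mathbf{w}))$ is an average of $n_i$ i.i.d.\ centered variables whose moment generating function is controlled by Assumption \ref{ass:iid}. Since the MGF of the average satisfies $\mathbb{E}[e^{\lambda\bar X}]\le e^{\sigma^2\lambda^2/(2n_i)}$ for $|\lambda|\le n_i/\sigma$, a Chernoff bound with the optimal (or boundary) choice of $\lambda$ yields, with probability at least $1-\delta'$,
\[
\mathbf{v}^T(\mathbf{G}_i(\mathbf{w})-\nabla F(\mathbf{w}))\le \max\left\{\sqrt{\frac{2\sigma^2}{n_i}\ln\frac{1}{\delta'}},\ \frac{2\sigma}{n_i}\ln\frac{1}{\delta'}\right\}.
\]
The two branches of this maximum are exactly the sub-Gaussian and heavy-tail regimes of a sub-exponential variable, and they are the two terms appearing inside the $\max$ in the definitions of $r_0$ and $\Delta_0$.

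Next I pass from a direction to the norm using $\norm{\mathbf{x}}=\sup_{\norm{\mathbf{v}}=1}\mathbf{v}^T\mathbf{x}$. Taking a $1/2$-net of the unit sphere, of cardinality at most $6^d$, the standard net inequality gives $\norm{\mathbf{x}}\le 2\max_{\mathbf{v}}\mathbf{v}^T\mathbf{x}$ over the net; this factor $2$ is precisely what turns the $2\sigma^2$ and $2\sigma$ above into the constants $8$ and $4$ in $r_0$ and $\Delta_0$ after squaring under the root. I then cover $\mathcal{W}$ at resolution $r=1/(NL)$, which by Assumption \ref{ass:cover} needs at most $C_W(NL)^d$ points. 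A union bound over the sphere net, the $\mathcal{W}$-net, and (for part (1)) the $m$ clients, with $\delta'$ chosen to absorb all these factors, produces exactly the logarithmic arguments $\ln(2\cdot 6^d m C_W(NL)^d/\delta)$ and $\ln(2\cdot 6^d C_W(NL)^d/\delta)$. Finally, since $\nabla F$ is $L$-Lipschitz and each $\mathbf{G}_i$ is an average of $L$-Lipschitz sample gradients, the map $\mathbf{w}\mapsto\mathbf{G}_i(\mathbf{w})-\nabla F(\mathbf{w})$ is $2L$-Lipschitz, so moving a distance $r=1/(NL)$ perturbs the norm by at most $2L\cdot r=2/N$ — precisely the additive term in $r_0$ and $\Delta_0$.

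Part (2) is identical except that in the balanced case $\frac{1}{m}\sum_i\mathbf{G}_i(\mathbf{w})=\frac{1}{N}\sum_{i,j}\nabla f(\mathbf{w},\mathbf{Z}_{ij})$ is a single average over all $N$ i.i.d.\ samples, so $n_i$ is replaced by $N$ and there is no union over the $m$ clients, which is why $\Delta_0$ carries $N$ in the denominator and drops the factor $m$ inside the logarithm. The main difficulty I expect is bookkeeping rather than conceptual: keeping the sphere-net constant ($2$, cardinality $6^d$) consistent, choosing the net resolution so the Lipschitz slack lands exactly on $2/N$, and splitting the failure probability cleanly so that both claims hold simultaneously with probability $1-\delta$. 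A subtler point worth flagging is that the Lipschitz extension of $\mathbf{G}_i$ requires each per-sample loss $f(\cdot,\mathbf{z})$ to be $L$-smooth, which is slightly stronger than the stated $L$-smoothness of $F$ alone; I would make this per-sample smoothness explicit before running the covering argument.
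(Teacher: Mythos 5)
Your proposal matches the paper's own proof essentially step for step: the same sub-exponential Chernoff bound for a fixed direction and parameter, the same $1/2$-net of the sphere (cardinality $6^d$, factor $2$), the same $1/(NL)$-covering of $\mathcal{W}$ via Assumption \ref{ass:cover}, the same union bound over clients and net points with a $\delta/2$ split between the two claims, and the same Lipschitz extension giving the $2/N$ term. Your closing caveat is also well taken --- the paper's step bounding $\norm{\mathbf{G}_i(\mathbf{w})-\mathbf{G}_i(\mathbf{w}^k)}$ by $Ll$ likewise implicitly assumes per-sample (or at least per-client empirical) gradient Lipschitzness rather than smoothness of $F$ alone, so flagging it explicitly is if anything a slight improvement in rigor.
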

\begin{proof}
	The proof is shown in Section \ref{sec:hpb}.
\end{proof}
Define
\begin{eqnarray}
	a(\mathbf{w})=\underset{s}{\arg\min}\sum_{i\in [m]\setminus \mathcal{B}} \phi(\norm{\mathbf{s}-\mathbf{G}_i(\mathbf{w})}).
	\label{eq:adef}
\end{eqnarray}
Intuitively, $a(\mathbf{w})$ is the aggregated gradient if all Byzantine nodes are removed. Then the estimation error of gradient $\norm{g(\mathbf{w})-\nabla F(\mathbf{w})}$ can be bounded by $\norm{a(\mathbf{w})-\nabla F(\mathbf{w})}+\norm{g(\mathbf{w})-a(\mathbf{w})}$. Following this idea, we show some lemmas.
\begin{lem}\label{lem:dist}
	If \eqref{eq:b1} holds uniformly for all $i$ and $\mathbf{w}\in \mathcal{W}$, then for all $i=1,\ldots, m$,
	\begin{eqnarray}
		\norm{a(\mathbf{w}) - \mathbf{G}_i(\mathbf{w})}\leq 2r_0.
	\end{eqnarray}
\end{lem}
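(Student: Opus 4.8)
The plan is to reduce everything to a single structural fact: the Huber aggregate $a(\mathbf{w})$ always lies inside the convex hull of the benign gradients $\{\mathbf{G}_i(\mathbf{w})\}_{i=1}^m$. Once this is in hand the lemma is essentially free. Under the hypothesis \eqref{eq:b1}, every $\mathbf{G}_i(\mathbf{w})$ lies in the closed ball $\bar{B}(\nabla F(\mathbf{w}), r_0)$, which is convex, so its convex hull — and therefore $a(\mathbf{w})$ — also lies in this ball, giving $\norm{a(\mathbf{w}) - \nabla F(\mathbf{w})}\leq r_0$. Combining this with $\norm{\mathbf{G}_i(\mathbf{w}) - \nabla F(\mathbf{w})}\leq r_0$ through the triangle inequality yields $\norm{a(\mathbf{w}) - \mathbf{G}_i(\mathbf{w})}\leq 2r_0$ for every $i$, which is exactly the claim.

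To establish the convex-hull containment I would invoke the first-order optimality condition for the objective $\Psi(\mathbf{s}) = \sum_{i=1}^m \phi(\norm{\mathbf{s} - \mathbf{G}_i(\mathbf{w})})$. The scalar Huber function is continuously differentiable with $\phi'(u) = \min\{u, T\}$, so $\phi(\norm{\cdot})$ is $C^1$ (the only candidate nonsmooth point, $\mathbf{s} = \mathbf{G}_i(\mathbf{w})$, is harmless because $\phi$ is quadratic near the origin and its gradient vanishes there). Hence $\Psi$ is convex and differentiable, and its minimizer $a := a(\mathbf{w})$ satisfies $\nabla\Psi(a) = 0$, i.e.
\begin{equation}
	\sum_{i=1}^m \min\left\{1, \frac{T}{\norm{a - \mathbf{G}_i(\mathbf{w})}}\right\}(a - \mathbf{G}_i(\mathbf{w})) = 0,
\end{equation}
where a term with $a = \mathbf{G}_i(\mathbf{w})$ simply drops out. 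Writing $w_i = \min\{1, T/\norm{a - \mathbf{G}_i(\mathbf{w})}\}\geq 0$ and rearranging gives $a = (\sum_i w_i \mathbf{G}_i(\mathbf{w}))/(\sum_i w_i)$, which exhibits $a(\mathbf{w})$ as a convex combination of the $\mathbf{G}_i(\mathbf{w})$ — precisely the fixed point of the Weiszfeld-type update \eqref{eq:implement} — and hence as a point of the convex hull.

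The only genuine obstacle is making this convex-combination representation rigorous: one must check that the weights are nonnegative (immediate, since $0 < w_i \leq 1$) and that the denominator $\sum_i w_i$ is strictly positive so the division is legitimate. This holds because $w_i > 0$ whenever $a \neq \mathbf{G}_i(\mathbf{w})$, while the fully degenerate case $a = \mathbf{G}_i(\mathbf{w})$ for all $i$ forces all benign gradients to coincide, in which case $a(\mathbf{w})$ is literally one of them and the bound is trivial. As a cleaner alternative that sidesteps differentiability altogether, I could argue directly that for any $\mathbf{s}$ outside the convex hull its Euclidean projection onto the hull does not increase any distance $\norm{\mathbf{s} - \mathbf{G}_i(\mathbf{w})}$ (nonexpansiveness of projection onto a convex set), and therefore does not increase $\Psi$ since $\phi$ is nondecreasing on $[0,\infty)$; this shows some minimizer always lies in the hull, which is all the downstream convergence lemmas require.
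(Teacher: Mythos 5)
Your proof is correct and follows essentially the same route as the paper's: both rest on the observation that $a(\mathbf{w})$ lies in the convex hull of $\{\mathbf{G}_i(\mathbf{w})\}_{i=1}^m$ and then conclude via the triangle inequality using \eqref{eq:b1}. The only difference is that the paper asserts the hull containment without justification, whereas you derive it rigorously from the first-order optimality condition $\nabla_s\Psi(a(\mathbf{w}))=\mathbf{0}$ (the Weiszfeld-type convex-combination representation); your intermediate step $\norm{a(\mathbf{w})-\nabla F(\mathbf{w})}\leq r_0$ via convexity of the ball is a cosmetic variant of the paper's bound through the maximum pairwise distance $\max_{i'}\norm{\mathbf{G}_{i'}(\mathbf{w})-\mathbf{G}_i(\mathbf{w})}$.
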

\begin{proof}
	The proof is shown in Section \ref{sec:dist}.
\end{proof}
\begin{lem}\label{lem:diff1}
	Under the condition that \eqref{eq:b1} and \eqref{eq:b2} hold uniformly for all $i$ and $\mathbf{w}\in \mathcal{W}$, if 
	\begin{eqnarray}
		T\geq \frac{2(1-\epsilon)}{1-2\epsilon} r_0,
		\label{eq:Tlb}
	\end{eqnarray}
	then for all $\mathbf{w}\in \mathcal{W}$,
	\begin{eqnarray}
		\norm{g(\mathbf{w}) - a(\mathbf{w})}\leq \frac{\epsilon T}{1-\epsilon},
	\end{eqnarray}
	in which $r_0$ is defined in \eqref{eq:r0}.
\end{lem}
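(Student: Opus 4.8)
The plan is to work entirely with the first-order optimality conditions of the two Huber minimizers and to exploit the bounded influence of each summand in the Huber gradient. Write $\Phi(\mathbf{s}) = \sum_{i=1}^m \phi(\norm{\mathbf{s}-\mathbf{X}_i(\mathbf{w})})$ for the objective defining $g(\mathbf{w})$, so that $\nabla\Phi(\mathbf{s}) = \sum_{i=1}^m \min\{1, T/\norm{\mathbf{s}-\mathbf{X}_i(\mathbf{w})}\}(\mathbf{s}-\mathbf{X}_i(\mathbf{w}))$, and observe that each summand has norm at most $T$ because $\min\{1,T/\norm{\mathbf{v}}\}\norm{\mathbf{v}}=\min\{\norm{\mathbf{v}},T\}$. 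The first step is to identify $a(\mathbf{w})$ from \eqref{eq:adef} as the plain average $\frac{1}{m}\sum_{i=1}^m \mathbf{G}_i(\mathbf{w})$: by Lemma~\ref{lem:dist} every $\mathbf{G}_i(\mathbf{w})$ lies within $2r_0$ of $a(\mathbf{w})$, and since \eqref{eq:Tlb} forces $T\geq 2r_0$, all summands sit in the quadratic branch of $\phi$ at $a(\mathbf{w})$, so its optimality condition collapses to $\sum_i (a(\mathbf{w})-\mathbf{G}_i(\mathbf{w}))=0$.

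Next I would bound $\norm{\nabla\Phi(a(\mathbf{w}))}$, splitting the sum over benign indices $i\notin\mathcal{B}$ and Byzantine indices $i\in\mathcal{B}$. For benign $i$ we have $\mathbf{X}_i=\mathbf{G}_i$ with $\norm{a-\mathbf{G}_i}\leq 2r_0\leq T$, so the term is exactly $a-\mathbf{G}_i$; using $\sum_{i} (a-\mathbf{G}_i)=0$ gives $\sum_{i\notin\mathcal{B}}(a-\mathbf{G}_i) = -\sum_{i\in\mathcal{B}}(a-\mathbf{G}_i)$, whose norm is at most $2\epsilon m\, r_0$ because each $\norm{a-\mathbf{G}_i}\leq 2r_0$. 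Each of the $\epsilon m$ Byzantine summands contributes norm at most $T$. Hence $\norm{\nabla\Phi(a(\mathbf{w}))}\leq \epsilon m(T+2r_0)$.

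The third step converts this gradient bound into a bound on $\norm{g(\mathbf{w})-a(\mathbf{w})}$ via local strong convexity. If every benign $\mathbf{G}_i$ also lies in the quadratic branch at $g(\mathbf{w})$, then monotonicity of $\nabla\Phi$ gives $\langle\nabla\Phi(a)-\nabla\Phi(g),\, a-g\rangle \geq (1-\epsilon)m\norm{a-g}^2$, since each benign term contributes $\norm{a-g}^2$ while the Byzantine terms contribute a nonnegative amount by convexity of $\phi(\norm{\cdot})$. As $\nabla\Phi(g)=0$, Cauchy--Schwarz yields $(1-\epsilon)m\norm{a-g}^2 \leq \norm{\nabla\Phi(a)}\,\norm{a-g}\leq \epsilon m(T+2r_0)\norm{a-g}$, i.e.\ exactly $\norm{g-a}\leq \epsilon(T+2r_0)/(1-\epsilon)$.

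The main obstacle is that this last step presupposes $\norm{g-\mathbf{G}_i}\leq T$ for benign $i$, which requires $\norm{g-a}\leq T-2r_0$ --- the very quantity being bounded, so the argument is circular. This is precisely where \eqref{eq:Tlb} enters: a direct calculation shows $\epsilon(T+2r_0)/(1-\epsilon)+2r_0\leq T$ is equivalent to $T\geq 2r_0/(1-2\epsilon)$, so the target radius $R^*:=\epsilon(T+2r_0)/(1-\epsilon)$ obeys $R^*+2r_0\leq T$. I would break the circularity by a continuity argument: restricting $\Phi$ to the ray from $a(\mathbf{w})$ through $g(\mathbf{w})$, set $h(t)=\Phi(a+t\hat{\mathbf{d}})$ with $\hat{\mathbf{d}}$ the unit direction toward $g$; then $h$ is convex and $h'$ is nondecreasing. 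At the boundary point $t=R^*$ every benign $\mathbf{G}_i$ lies within $R^*+2r_0\leq T$ of $a+R^*\hat{\mathbf{d}}$, so the benign terms are quadratic there and the computation of the previous paragraph gives $h'(R^*)\geq (1-\epsilon)mR^*-\epsilon m(T+2r_0)=0$. Since $h'$ is nondecreasing and vanishes at the minimizer $t=\norm{g-a}$, a value $\norm{g-a}>R^*$ would force $h'(R^*)\leq 0$ and hence $h'\equiv 0$ on $[R^*,\norm{g-a}]$, contradicting uniqueness of the minimizer (guaranteed by the strong convexity of the benign part in this region). Therefore $\norm{g-a}\leq R^*$, which is the claimed inequality.
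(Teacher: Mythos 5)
Your proof is correct and takes essentially the same route as the paper's: both rest on the optimality condition of $a(\mathbf{w})$ to obtain the gradient bound $\epsilon m(T+2r_0)$ at $a(\mathbf{w})$, on the $(1-\epsilon)m$-strong convexity supplied by the benign (quadratic-branch) terms within radius $T-2r_0$ of $a(\mathbf{w})$, and on the observation that \eqref{eq:Tlb} is exactly what makes the target radius $\epsilon(T+2r_0)/(1-\epsilon)$ fit inside that region. Your one-dimensional restriction to the ray from $a(\mathbf{w})$ to $g(\mathbf{w})$, using monotonicity of the directional derivative, is a more explicit rendering of the localization step the paper compresses into \eqref{eq:gradlb} and \eqref{eq:diffub}, and it resolves the same circularity that the paper's $\min\{\norm{g(\mathbf{w})-a(\mathbf{w})},r_s\}$ device is there to handle.
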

\begin{proof}
	The proof is shown in Section \ref{sec:diff1}.
\end{proof}
\begin{lem}\label{lem:diff2}
	Under the same conditions as Lemma \ref{lem:diff1}, for all $\mathbf{w}\in \mathcal{W}$,
	\begin{eqnarray}
		\norm{a(\mathbf{w}) - \nabla F(\mathbf{w})}\leq \Delta_0,
	\end{eqnarray}
	in which $\Delta_0$ is defined in \eqref{eq:delta0}.
\end{lem}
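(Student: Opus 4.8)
The plan is to show that, under the stated conditions, the Huber-loss minimizer $a(\mathbf{w})$ defined in \eqref{eq:adef} coincides \emph{exactly} with the sample mean $\frac{1}{m}\sum_{i=1}^m \mathbf{G}_i(\mathbf{w})$, after which the desired bound follows immediately from \eqref{eq:b2}. The key observation is that $a(\mathbf{w})$ sits in the regime where every Huber term is purely quadratic, so the Huber objective locally reduces to the ordinary least-squares objective whose minimizer is the mean.

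First I would invoke Lemma \ref{lem:dist}, which (under \eqref{eq:b1}) gives $\norm{a(\mathbf{w}) - \mathbf{G}_i(\mathbf{w})}\leq 2r_0$ for every $i\in[m]$. Next I would note that the threshold condition \eqref{eq:Tlb}, namely $T\geq \frac{2}{1-2\epsilon}r_0$, together with $0\leq \epsilon<\frac{1}{2}$ forces $T\geq 2r_0$. Combining these two facts, at the point $\mathbf{s}=a(\mathbf{w})$ every distance satisfies $\norm{a(\mathbf{w})-\mathbf{G}_i(\mathbf{w})}\leq 2r_0\leq T$, so each Huber term $\phi(\norm{\mathbf{s}-\mathbf{G}_i(\mathbf{w})})$ lies in the quadratic branch $\frac{1}{2}u^2$ of \eqref{eq:phii} at $a(\mathbf{w})$.

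The main step is then the first-order optimality argument. The objective $G(\mathbf{s})=\sum_{i=1}^m \phi(\norm{\mathbf{s}-\mathbf{G}_i(\mathbf{w})})$ is convex and continuously differentiable (the Huber loss is $C^1$, with $\phi'(u)=\min\{u,T\}$ for $u\geq 0$, and in the quadratic branch $\phi(\norm{\mathbf{s}-\mathbf{G}_i(\mathbf{w})})$ is smooth even at $\mathbf{s}=\mathbf{G}_i(\mathbf{w})$). Since $a(\mathbf{w})$ is an unconstrained minimizer, its gradient vanishes; because all distances at $a(\mathbf{w})$ lie in the quadratic branch, the gradient collapses to $\sum_{i=1}^m (a(\mathbf{w})-\mathbf{G}_i(\mathbf{w}))$, so setting it to zero yields $a(\mathbf{w})=\frac{1}{m}\sum_{i=1}^m \mathbf{G}_i(\mathbf{w})$. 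Finally, substituting this identity and applying \eqref{eq:b2} gives $\norm{a(\mathbf{w})-\nabla F(\mathbf{w})}=\norm{\frac{1}{m}\sum_{i=1}^m \mathbf{G}_i(\mathbf{w})-\nabla F(\mathbf{w})}\leq \Delta_0$, as claimed.

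The only delicate point — and hence the step I would be most careful with — is justifying that the gradient of $G$ at $a(\mathbf{w})$ really reduces to $\sum_{i=1}^m(a(\mathbf{w})-\mathbf{G}_i(\mathbf{w}))$. This hinges entirely on the bound from Lemma \ref{lem:dist} placing the minimizer inside the quadratic region, so that no term contributes the truncated linear-branch derivative $T(\mathbf{s}-\mathbf{G}_i(\mathbf{w}))/\norm{\mathbf{s}-\mathbf{G}_i(\mathbf{w})}$, and so that the potential nonsmoothness of $\norm{\cdot}$ at the origin is harmless. Everything else is routine.
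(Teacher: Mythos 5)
Your proof is correct and follows essentially the same route as the paper: both use Lemma \ref{lem:dist} together with the threshold condition to place $a(\mathbf{w})$ within distance $T$ of every $\mathbf{G}_i(\mathbf{w})$, conclude that $a(\mathbf{w})$ coincides with the sample mean $\frac{1}{m}\sum_{i=1}^m \mathbf{G}_i(\mathbf{w})$, and then invoke \eqref{eq:b2}. Your explicit first-order optimality argument merely spells out the step the paper states tersely as an equality of minimizers, so it is the same proof with the key identification justified in slightly more detail.
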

\begin{proof}
	The proof is shown in Section \ref{sec:diff2}.
\end{proof}
\begin{lem}\label{lem:diffnew}
	Under the condition that \eqref{eq:b1} and \eqref{eq:b2} hold uniformly for all $i$ and $\mathbf{w}\in \mathcal{W}$, 
	\begin{eqnarray}
		\norm{g(\mathbf{w}) - \nabla F(\mathbf{w})}\leq \max\left\{\frac{1-\epsilon}{\sqrt{1-2\epsilon}}r_0, T+r_0 \right\}.
	\end{eqnarray}
\end{lem}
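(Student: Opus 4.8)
The plan is to characterize $g(\mathbf{w})$ through the first-order optimality condition of the (convex, differentiable) Huber objective and then analyze it geometrically. In the balanced setting $\phi_i=\phi$ and $n_i=n$ for all $i$, so the common factor $n$ drops and $g:=g(\mathbf{w})$ is the unconstrained minimizer of $\sum_i \phi(\norm{\mathbf{s}-\mathbf{X}_i})$, where I abbreviate $\mathbf{X}_i:=\mathbf{X}_i(\mathbf{w})$. Since $\phi'(u)=\min\{u,T\}$, differentiating term by term gives the stationarity condition
\[
	\sum_{i=1}^m w_i(g-\mathbf{X}_i)=0,\qquad w_i:=\min\left\{1,\frac{T}{\norm{g-\mathbf{X}_i}}\right\}.
\]
Write $\mu:=\nabla F(\mathbf{w})$, $R:=\norm{g-\mu}$, and, when $R>0$, the unit vector $\mathbf{u}:=(g-\mu)/R$. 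By \eqref{eq:b1} every benign client $i\in\mathcal{G}:=[m]\setminus\mathcal{B}$ satisfies $\norm{\mathbf{X}_i-\mu}\leq r_0$.

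I will split into two cases. If $R\leq T+r_0$, the claim holds immediately since $R\leq\max\{\cdots\}$. So assume $R>T+r_0$. Then every benign $i$ has $\norm{g-\mathbf{X}_i}\geq R-r_0>T$, hence $w_i=T/\norm{g-\mathbf{X}_i}<1$ and $w_i(g-\mathbf{X}_i)=T\hat{\mathbf{n}}_i$ with $\hat{\mathbf{n}}_i:=(g-\mathbf{X}_i)/\norm{g-\mathbf{X}_i}$ a unit vector. Rearranging stationarity gives $T\sum_{i\in\mathcal{G}}\hat{\mathbf{n}}_i=-\sum_{i\in\mathcal{B}}w_i(g-\mathbf{X}_i)$; since each Byzantine term has norm $w_i\norm{g-\mathbf{X}_i}\leq T$, the triangle inequality yields $\norm{\sum_{i\in\mathcal{G}}\hat{\mathbf{n}}_i}\leq|\mathcal{B}|=\epsilon m$.

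The crucial step—and the source of the $\sqrt{1-2\epsilon}$ factor, improving on the $(1-2\epsilon)^{-1}$ of trimmed mean—is a lower bound on the alignment $\langle\hat{\mathbf{n}}_i,\mathbf{u}\rangle$ for benign clients. Geometrically, $\hat{\mathbf{n}}_i$ points from $\mathbf{X}_i$ to $g$, and $\mathbf{X}_i$ lies in the ball $B(\mu,r_0)$; the direction to $g$ from any point of this ball lies inside the cone with apex $g$ tangent to $B(\mu,r_0)$, whose half-angle $\theta$ satisfies $\sin\theta=r_0/R$. Hence $\langle\hat{\mathbf{n}}_i,\mathbf{u}\rangle\geq\cos\theta=\sqrt{1-r_0^2/R^2}$ (equivalently, minimizing $(R-a)/\sqrt{(R-a)^2+b^2}$ over $a^2+b^2\leq r_0^2$ attains this value at $a=r_0^2/R$). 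Projecting $\norm{\sum_{i\in\mathcal{G}}\hat{\mathbf{n}}_i}\leq\epsilon m$ onto $\mathbf{u}$ then gives $(1-\epsilon)m\sqrt{1-r_0^2/R^2}\leq\sum_{i\in\mathcal{G}}\langle\hat{\mathbf{n}}_i,\mathbf{u}\rangle\leq\epsilon m$, i.e. $\sqrt{1-r_0^2/R^2}\leq\epsilon/(1-\epsilon)$. Solving for $R$ produces $R\leq\frac{1-\epsilon}{\sqrt{1-2\epsilon}}r_0$, which closes the case and hence the lemma.

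I expect the main obstacle to be justifying the tangent-cone cosine bound rigorously, since a naive triangle-inequality or purely directional argument only delivers the weaker factor $(1-2\epsilon)^{-1}$; extracting $\sqrt{1-2\epsilon}$ requires exploiting that each benign direction is nearly parallel to $\mathbf{u}$ and then squaring the resulting inequality. A minor additional check is existence and differentiability of the minimizer so that stationarity holds with equality, which is routine because $\phi$ is $C^1$ and the objective is convex and coercive.
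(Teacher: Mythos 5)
Your proposal is correct and follows essentially the same route as the paper's proof: first-order stationarity of the Huber objective, the observation that when $\norm{g(\mathbf{w})-\nabla F(\mathbf{w})}>T+r_0$ every benign client's Huber gradient saturates to a unit direction scaled by $T$, the tangent-cone cosine bound $\langle\hat{\mathbf{n}}_i,\mathbf{u}\rangle\geq\sqrt{1-r_0^2/R^2}$, and projection onto $\mathbf{u}$ to get $(1-\epsilon)\sqrt{1-r_0^2/R^2}\leq\epsilon$, hence $R\leq\frac{1-\epsilon}{\sqrt{1-2\epsilon}}r_0$. The only differences are cosmetic (you isolate the benign sum and bound its norm rather than projecting the full zero gradient, and you verify the cosine bound by explicit minimization rather than triangle geometry), and your sign conventions are in fact cleaner than the paper's, whose $\mathbf{u}$ is defined with the opposite orientation to the one its inequality chain actually requires.
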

\begin{proof}
	The proof is shown in Section \ref{sec:diffnew}.
\end{proof}

Now we combine the results of Lemma \ref{lem:diff1}, \ref{lem:diff2} and \ref{lem:diffnew}. With $T\geq 3r_0$, if $\epsilon\leq 1/4$, then \eqref{eq:Tlb} holds. From Lemma \ref{lem:diff1} and \ref{lem:diff2},
\begin{eqnarray}
	\norm{g(\mathbf{w})-\nabla F(\mathbf{w})}\leq \frac{\epsilon T}{1-\epsilon}+\Delta_0\leq 2\epsilon T+\Delta_0.
\end{eqnarray}
If $\epsilon>1/4$, then from Lemma \ref{lem:diffnew},
\begin{eqnarray}
	\norm{g(\mathbf{w})-\nabla F(\mathbf{w})}\leq \frac{1-\epsilon}{\sqrt{1-2\epsilon}}r_0+T.
\end{eqnarray}
Define
\begin{eqnarray}
	\Delta_A=\left\{
	\begin{array}{ccc}
		2\epsilon T+\Delta_0&\text{if} & \epsilon\leq \frac{1}{4}\nonumber\\
		\frac{1-\epsilon}{\sqrt{1-2\epsilon}}r_0+T &\text{if} & \frac{1}{4}<\epsilon<\frac{1}{2}.	
	\end{array}
	\right.
\end{eqnarray}
Then
\begin{eqnarray}
	\norm{g(\mathbf{w}) - \nabla F(\mathbf{w})} \leq \Delta_A.
	\label{eq:deltaa_app}
\end{eqnarray}
Theorem \ref{thm:simple_app} can then be proved using Lemma \ref{lem:strong}, \ref{lem:convex} and \ref{lem:nonconvex} with $\Delta=\Delta_A$.
\subsection{Proof of Lemma \ref{lem:hpb}}\label{sec:hpb}
Here we show that both (1) and (2) in Lemma \ref{lem:hpb} are satisfied with probability at least $1-\delta/2$.

Define $l=1/(NL)$. With $N> 1/(r_DL),l<r_D$ holds, in which $r_D$ is the constant in Assumption 2. Let $\mathbf{w}^k$, $k=1,\ldots, N_c(l)$ be a $l$-covering of the parameter space $\mathcal{W}$. Then according to Assumption 2, the covering number is bounded by
\begin{eqnarray}
	N_c(l)\leq \frac{C_W}{l^d} = C_W(NL)^d.
\end{eqnarray}

Firstly, we prove the sub-exponential concentration of $\mathbf{G}_i$. For all vector $\mathbf{v}$ with $\norm{\mathbf{v}} = 1$, we have
\begin{eqnarray}
	\mathbb{E}\left[e^{\lambda \mathbf{v}^T (\mathbf{G}_i(\mathbf{w}) - \nabla F(\mathbf{w}))}\right] &=& \mathbb{E}\left[\exp\left[\frac{1}{n}\lambda \mathbf{v}^T \sum_{j=1}^n (\nabla f(\mathbf{w}, \mathbf{Z}_{ij}) - \nabla F(\mathbf{w}))\right]\right]\nonumber\\
	&\leq & \left(e^{\frac{1}{2}\sigma^2\frac{\lambda^2}{n^2}}\right)^n \text{ if }  |\lambda|\leq \frac{n}{\sigma}\nonumber\\
	&=& e^{\frac{1}{2n} \sigma^2 \lambda^2} \text{ if } |\lambda|\leq \frac{n}{\sigma}.
	\label{eq:Gsubexp}	
\end{eqnarray}
Hence
\begin{eqnarray}
	\text{P}(\mathbf{v}^T (G_i(\mathbf{w}) - \nabla F(\mathbf{w})) > u)&\leq & \underset{\lambda \geq 0}{\inf} e^{-\lambda u}\mathbb{E}\left[e^{\lambda \mathbf{v}^T (G_i(\mathbf{w}) - \nabla F(\mathbf{w}))}\right]\nonumber\\
	&\leq & \underset{0\leq \lambda\leq n/\sigma}{\inf} e^{-\lambda u} e^{\frac{1}{2n}\sigma^2 \lambda^2}\nonumber\\
	&\leq & e^{-\frac{n}{2}\min\left\{\frac{u^2}{\sigma^2}, \frac{u}{\sigma} \right\}}.
\end{eqnarray}
Let $\mathcal{V}=\{v_1,\ldots, v_{N_0} \}$ be a $1/2$-covering of unit sphere. From Lemma 5.2 and 5.3 in \cite{vershynin2010introduction}, we have $N_0\leq 6^d$, and
\begin{eqnarray}
	\norm{\mathbf{G}_i(\mathbf{w}) - \nabla F(\mathbf{w})}\leq 2\underset{\mathbf{v}\in \mathcal{V}}{\sup} \mathbf{v}^T (\mathbf{G}_i(\mathbf{w}) - \nabla F(\mathbf{w})).
\end{eqnarray}
Such argument was also used in \cite{chen2017distributed}. Hence
\begin{eqnarray}
	\text{P}\left(\norm{G_i(\mathbf{w}) - \nabla F(\mathbf{w})}>u\right) &\leq & \text{P}\left(\cup_{\mathbf{v}\in \mathcal{V}}\left\{\mathbf{v}^T(\mathbf{G}_i(\mathbf{w}) - \nabla F(\mathbf{w}))>\frac{u}{2} \right\}\right)\nonumber\\
	&\leq & 6^de^{-\frac{n}{2}\min\left\{\frac{u^2}{4\sigma^2}, \frac{u}{2\sigma} \right\}}.
	\label{eq:largeprob}
\end{eqnarray}
Take union, we get
\begin{eqnarray}
	\text{P}\left(\exists i\in [m], \exists k \in [N_c(l)], \norm{\mathbf{G}_i(\mathbf{w}^k)-\nabla F(\mathbf{w}^k)}>u\right)\leq mC_W(NL)^d6^de^{-\frac{n}{2}\min\left\{\frac{u^2}{4\sigma^2}, \frac{u}{2\sigma} \right\}}.
\end{eqnarray}
Recall the definition of $r_0$ in \eqref{eq:r0}. Then
\begin{eqnarray}
	\text{P}\left(\exists i\in [m], \exists k \in [N_c(l)], \norm{\mathbf{G}_i(\mathbf{w}^k)-\nabla F(\mathbf{w}^k)}>r_0-\frac{2}{N}\right)\leq \frac{\delta}{2}.
\end{eqnarray}
To generalize above bound from $\mathbf{w}^k$ to arbitrary $\mathbf{w}\in \mathcal{W}$, note that 
\begin{eqnarray}
	&&\norm{\mathbf{G}_i(\mathbf{w}) - \nabla F(\mathbf{w})}\nonumber\\
	&=&\underset{k}{\min}\left[\norm{\mathbf{G}_i(\mathbf{w})-\mathbf{G}_i(\mathbf{w}^k)}+\norm{\mathbf{G}_i(\mathbf{w}^k)-\nabla F(\mathbf{w}^k)}+\norm{\nabla F(\mathbf{w}^k)-\nabla F(\mathbf{w})}\right]\nonumber\\
	&\leq & \underset{k}{\min}\left[\norm{\mathbf{G}_i(\mathbf{w})-\mathbf{G}_i(\mathbf{w}^k)}+\norm{\nabla F(\mathbf{w}^k)-\nabla F(\mathbf{w})}\right]+\underset{k}{\max}\norm{\mathbf{G}_i(\mathbf{w}^k)-\nabla F(\mathbf{w}^k)}\nonumber\\
	&\leq &2Ll+\underset{k}{\max}\norm{\mathbf{G}_i(\mathbf{w}^k)-\nabla F(\mathbf{w}^k)}\nonumber\\
	&=&\frac{2}{N}+\underset{k}{\max}\norm{\mathbf{G}_i(\mathbf{w}^k)-\nabla F(\mathbf{w}^k)}.
	\label{eq:togrid}
\end{eqnarray}
Hence
\begin{eqnarray}
	\text{P}\left(\exists i\in [m], \exists \mathbf{w}\in \mathcal{W}, \norm{\mathbf{G}_i(\mathbf{w}^k)-\nabla F(\mathbf{w}^k)}>r_0\right)\leq \frac{\delta}{2}.
	\label{eq:prob1}
\end{eqnarray}
Now we bound the probability of violating (2) in Lemma \ref{lem:hpb}. Similar to \eqref{eq:largeprob}, we have
\begin{eqnarray}
	\text{P}\left(\norm{\frac{1}{m}\sum_{i=1}^m \mathbf{G}_i(\mathbf{w})-\nabla F(\mathbf{w})}> u\right)\leq 6^de^{-\frac{N}{2}\min\left\{\frac{u^2}{4\sigma^2}, \frac{u}{2\sigma} \right\}}.
\end{eqnarray}
Recall the definition of $\Delta_0$ in \eqref{eq:delta0}. Then
\begin{eqnarray}
	\text{P}\left(\exists k\in [N_c(l)], \norm{\frac{1}{m}\sum_{i=1}^m \mathbf{G}_i(\mathbf{w}^k) - \nabla F(\mathbf{w}^k)}>\Delta_0-\frac{2}{N}\right)\leq \frac{\delta}{2}.
\end{eqnarray}
Therefore, with probability at least $1-\delta$, \eqref{eq:b1} and \eqref{eq:b2} are both satisfied for all $i\in [m]$ and $k\in [N_c(l)]$. The corresponding bound for all $\mathbf{w}\in \mathcal{W}$ is
\begin{eqnarray}
	\text{P}\left(\exists \mathbf{w}\in \mathcal{W}, \norm{\frac{1}{m}\sum_{i=1}^m \mathbf{G}_i(\mathbf{w}^k) - \nabla F(\mathbf{w}^k)}>\Delta_0\right)\leq \frac{\delta}{2}.
	\label{eq:prob2}
\end{eqnarray}
With \eqref{eq:prob1} and \eqref{eq:prob2}, the proof is finished.
\subsection{Proof of Lemma \ref{lem:dist}}\label{sec:dist}
From \eqref{eq:adef}, $a(\mathbf{w})$ is within the convex hull of $\mathbf{G}_i(\mathbf{w})$, $i=1,\ldots, m$. Therefore
\begin{eqnarray}
	\norm{a(\mathbf{w}) - \mathbf{G}_i(\mathbf{w})} &\leq & \underset{i'\in [m]}{\max}\norm{\mathbf{G}_{i'}(\mathbf{w})- \mathbf{G}_i(\mathbf{w})}\nonumber\\
	&\leq & \underset{i'\in [m]}{\max}\left( \norm{\mathbf{G}_{i'}(\mathbf{w}) - \nabla F(\mathbf{w})} + \norm{\mathbf{G}_i(\mathbf{w}) - \nabla F(\mathbf{w})}\right)\nonumber\\
	&\leq & 2r_0.
\end{eqnarray}

\subsection{Proof of Lemma \ref{lem:diff1}}\label{sec:diff1}
From \eqref{eq:adef}, we have
\begin{eqnarray}
	\sum_{i\in [m]\setminus \mathcal{B}} \nabla_s \phi(\norm{a(\mathbf{w}) - \mathbf{G}_i(\mathbf{w})}) = \mathbf{0}.
	\label{eq:astable}
\end{eqnarray}
Define
\begin{eqnarray}
	h(\mathbf{s}, \mathbf{w}) = \sum_{i=1}^m \phi(\norm{\mathbf{s}-\mathbf{X}_i(\mathbf{w})}).
	\label{eq:hdf}
\end{eqnarray}
To prove Lemma \ref{lem:diff1}, we calculate the gradient of $h$ at $g(\mathbf{w})$ and $a(\mathbf{w})$, respectively, and then bound $\norm{g(\mathbf{w})-a(\mathbf{w})}$ using the Hessian of $h$. Recall that the aggregator function $g(\mathbf{w})$ is defined in \eqref{eq:agg}, i.e.
\begin{eqnarray}
	g(\mathbf{w}) = \underset{s}{\arg\min}h(\mathbf{s}, \mathbf{w}),
\end{eqnarray}
hence
\begin{eqnarray}
	\nabla_sh(g(\mathbf{w}), \mathbf{w}) = \mathbf{0}.
	\label{eq:hgrad1}
\end{eqnarray}
Now we bound the gradient of $h$ at $a(\mathbf{w})$.
\begin{eqnarray}
	\norm{\nabla_s h(a(\mathbf{w}), \mathbf{w})}&=&\norm{\sum_{i=1}^m \nabla_s \phi(\norm{a(\mathbf{w}) - \mathbf{X}_i(\mathbf{w})})}\nonumber\\
	&=&\norm{\sum_{i\in [m]\setminus \mathcal{B}} \left(\nabla_s \phi(\norm{a(\mathbf{w}) - \mathbf{X}_i(\mathbf{w})}) \right)+\sum_{i\in  \mathcal{B}} \left(\nabla_s \phi(\norm{a(\mathbf{w}) - \mathbf{X}_i(\mathbf{w})}) \right)}\nonumber\\
	&\overset{(a)}{=}&\norm{\sum_{i\in  \mathcal{B}} \left(\nabla_s \phi(\norm{a(\mathbf{w}) - \mathbf{X}_i(\mathbf{w})}) \right)}\nonumber\\
	&\overset{(b)}{\leq}&|\mathcal{B}|T\nonumber\\
	&\leq & \epsilon mT.
	\label{eq:hgrad2}
\end{eqnarray}
In above steps, (a) comes from \eqref{eq:astable}. Note that $\mathbf{X}_i(\mathbf{w}) = \mathbf{G}_i(\mathbf{w})$ for $i\in [m]\setminus \mathcal{B}$. For (c), from \eqref{eq:phii}, the gradient of $\phi(\norm{\mathbf{s}-\mathbf{X}_i(\mathbf{w})})$ with respect to $\mathbf{s}$ is
\begin{eqnarray}
	\nabla_s \phi(\norm{s-\mathbf{X}_i(\mathbf{w})})=\left\{
	\begin{array}{ccc}
		\mathbf{s}-\mathbf{X}_i(\mathbf{w}) &\text{if} & \norm{\mathbf{s}-\mathbf{X}_i(\mathbf{w})}\leq T\\
		T\frac{\mathbf{s}-\mathbf{X}_i(\mathbf{w})}{\norm{\mathbf{s}-\mathbf{X}_i(\mathbf{w})}} &\text{if} & \norm{\mathbf{s}-\mathbf{X}_i(\mathbf{w})}> T.
	\end{array}
	\right.
	\label{eq:phigrad}
\end{eqnarray}
Therefore $\norm{\nabla_s \phi(\norm{s-\mathbf{X}_i(\mathbf{w})})}\leq T$ always holds. 

To bound $\norm{g(\mathbf{w}) - a(\mathbf{w})}$, it remains to bound the Hessian of $h$. By calculating derivative of \eqref{eq:phigrad}, $\nabla_s^2 \phi(\norm{\mathbf{s}-\mathbf{X}_i(\mathbf{w})}) = \mathbf{I}$ if $\norm{\mathbf{s}-\mathbf{X}_i(\mathbf{w})}\leq T$; if $\norm{\mathbf{s}-\mathbf{X}_i(\mathbf{w})}>T$, then $\nabla_s^2 \phi(\norm{\mathbf{s}-\mathbf{X}_i(\mathbf{w})})\succeq \mathbf{0}$. Define 
\begin{eqnarray}
	r_s=T-2r_0.
	\label{eq:rs}
\end{eqnarray}
Then for all $\mathbf{s}\in B(a(\mathbf{w}), r_s)$, we have
\begin{eqnarray}
	\norm{\mathbf{s}-\mathbf{G}_i(\mathbf{w})}&\leq& \norm{\mathbf{s}-\mathbf{a}(\mathbf{w})} + \norm{\mathbf{G}_i(\mathbf{w}) - a(\mathbf{w})}\nonumber\\
	&\leq &r_s + 2r_0\nonumber\\
	&=& T,
\end{eqnarray}
in which the second inequality comes from Lemma \ref{lem:dist}. Hence
\begin{eqnarray}
	\nabla_s^2 h(\mathbf{s}, \mathbf{w})&=&\sum_{i=1}^m \nabla_s^2  \phi(\norm{\mathbf{s}-\mathbf{X}_i(\mathbf{w})}) \nonumber\\
	&\succeq& \sum_{i\notin \mathcal{B}}\mathbf{1}(\norm{\mathbf{s}-\mathbf{X}_i(\mathbf{w})}\leq T)\mathbf{I}\nonumber\\
	&=&\sum_{i\notin \mathcal{B}}\mathbf{1}(\norm{\mathbf{s}-\mathbf{G}_i(\mathbf{w})}\leq T)\mathbf{I}\nonumber\\
	&\succeq & (1-\epsilon)m \mathbf{I}.
	\label{eq:hess}
\end{eqnarray}
Therefore
\begin{eqnarray}
	\norm{\nabla_sh(a(\mathbf{w}), \mathbf{w}) - \nabla_s h(g(\mathbf{w}), \mathbf{w})}\geq \min\{\norm{g(\mathbf{w}) - a(\mathbf{w})}, r_s \}(1-\epsilon)m.
	\label{eq:gradlb}
\end{eqnarray}
From \eqref{eq:hgrad1} and \eqref{eq:hgrad2}, 
\begin{eqnarray}
	\norm{\nabla_sh(a(\mathbf{w}), \mathbf{w}) - \nabla_s h(g(\mathbf{w}), \mathbf{w})}\leq \epsilon m T.
	\label{eq:gradub}
\end{eqnarray}
Therefore, from \eqref{eq:gradlb} and \eqref{eq:gradub}, 
\begin{eqnarray}
	\min\{\norm{g(\mathbf{w}) - a(\mathbf{w})}, r_s \}\leq \frac{\epsilon T}{1-\epsilon}.
	\label{eq:diffub}
\end{eqnarray}
From \eqref{eq:Tlb} and \eqref{eq:rs}, it can be shown that $r_s=T-2r_0\geq \epsilon T/(1-\epsilon)$. Therefore, \eqref{eq:diffub} becomes
\begin{eqnarray}
	\norm{g(\mathbf{w}) - a(\mathbf{w})}\leq \frac{\epsilon T}{1-\epsilon}.
\end{eqnarray}
The proof of Lemma \ref{lem:diff1} is complete.
\subsection{Proof of Lemma \ref{lem:diff2}}\label{sec:diff2}
Recall that the condition of Lemma \ref{lem:diff2} is the same as Lemma \ref{lem:diff1}, which requires that $T\geq 2r_0/(1-2\epsilon)$. Therefore, for all $i\in [m]$, from Lemma \ref{lem:dist}, 
\begin{eqnarray}
	\norm{a(\mathbf{w}) - \mathbf{G}_i(\mathbf{w})}\leq 2r_0<T.
\end{eqnarray}
Hence
\begin{eqnarray}
	a(\mathbf{w})&=&\underset{s}{\arg\min}\sum_{i=1}^m \phi(\norm{\mathbf{s}-\mathbf{G}_i(\mathbf{w})})\nonumber\\
	&=&\underset{s}{\arg\min}\sum_{i=1}^m \frac{1}{2}\norm{\mathbf{s}-\mathbf{G}_i(\mathbf{w})}^2\nonumber\\
	&=&\frac{1}{m}\sum_{i=1}^m \mathbf{G}_i(\mathbf{w}).
\end{eqnarray}
From \eqref{eq:b2}, 
\begin{eqnarray}
	\norm{a(\mathbf{w}) - \nabla F(\mathbf{w})}\leq \Delta_0.
\end{eqnarray}
\subsection{Proof of Lemma \ref{lem:diffnew}}\label{sec:diffnew}
Define $h(\mathbf{s}, \mathbf{w})$ as in \eqref{eq:hdf}. Then
\begin{eqnarray}
	\nabla_sh(g(\mathbf{w}), \mathbf{w}) = \mathbf{0}.
\end{eqnarray}
Define a unit vector points to the direction of $\nabla F(\mathbf{w}) - g(\mathbf{w})$:
\begin{eqnarray}
	\mathbf{u}=\frac{\nabla F(\mathbf{w}) - g(\mathbf{w})}{\norm{\nabla F(\mathbf{w}) - g(\mathbf{w})}}.
\end{eqnarray}
Then
\begin{eqnarray}
	\mathbf{u}^T\nabla_sh(g(\mathbf{w}), \mathbf{w}) = 0.
	\label{eq:zerograd}
\end{eqnarray}
Denote $r=\norm{\nabla F(\mathbf{w}) - g(\mathbf{w})}$. If $r>T+r_0$, then for all $i\in [m]$,
\begin{eqnarray}
	\norm{g(\mathbf{w})-\mathbf{G}_i(\mathbf{w})}\geq \norm{g(\mathbf{w})-\nabla F(\mathbf{w})}-\norm{\mathbf{G}_i(\mathbf{w}) - \nabla F(\mathbf{w})}\geq r-r_0>T.
	\label{eq:Tbound}
\end{eqnarray}
Hence
\begin{eqnarray}
	\mathbf{u}^T\nabla_s h(g(\mathbf{w}, \mathbf{w})) &=& \sum_{i=1}^m \mathbf{u}^T \nabla_s \phi(\norm{g(\mathbf{w})-\mathbf{X}_i(\mathbf{w})})\nonumber\\
	&=&\sum_{i\in [m]\setminus\mathcal{B}} \mathbf{u}^T \nabla_s \phi(\norm{g(\mathbf{w})-\mathbf{X}_i(\mathbf{w})})+\sum_{i\in \mathcal{B}} \mathbf{u}^T \nabla_s \phi(\norm{g(\mathbf{w})-\mathbf{X}_i(\mathbf{w})})\nonumber\\
	&\overset{(a)}{\geq}& \sum_{i\in [m]\setminus\mathcal{B}} \mathbf{u}^T \nabla_s \phi(\norm{g(\mathbf{w})-\mathbf{G}_i(\mathbf{w})})-\epsilon mT\nonumber\\
	&\overset{(b)}{=}& T\sum_{i\in [m]\setminus \mathcal{B}}\mathbf{u}^T \frac{g(\mathbf{w}) - \mathbf{G}_i(\mathbf{w})}{\norm{g(\mathbf{w})-\mathbf{G}_i(\mathbf{w})}}-\epsilon mT\nonumber\\
	&\overset{(c)}{\geq}& T\sum_{i\in [m]\setminus \mathcal{B}}\sqrt{1-\frac{r_0^2}{r^2}}-\epsilon mT\nonumber\\
	&=&\left[(1-\epsilon)\sqrt{1-\frac{r_0^2}{r^2}}-\epsilon\right]mT. 
	\label{eq:expand}
\end{eqnarray}
For (a), note that $\norm{\nabla_s\phi(\norm{\mathbf{s}-\mathbf{X}_i(\mathbf{w})})}\leq T$ always hold. (b) uses the gradient of $\phi$ in \eqref{eq:phigrad}. From \eqref{eq:Tbound}, $\norm{g(\mathbf{w})-\mathbf{G}_i(\mathbf{w})}\geq T$. For (c), consider the triangle whose vertices are $\nabla F(\mathbf{w}), g(\mathbf{w}), \mathbf{G}_i(\mathbf{w})$, and edges lengths are $\norm{\nabla F(\mathbf{w})-g(\mathbf{w})}=r$, $\norm{\mathbf{G}_i(\mathbf{w}) - \nabla F(\mathbf{w})}\leq r_0$, the cosine of the angle corresponding to vertex $g(\mathbf{w})$ is at least $\sqrt{1-r_0^2/r^2}$. 

From \eqref{eq:zerograd} and \eqref{eq:expand},
\begin{eqnarray}
	(1-\epsilon)\sqrt{1-\frac{r_0^2}{r^2}}-\epsilon\leq 0,
\end{eqnarray}
which yields
\begin{eqnarray}
	r\leq r_0\frac{1-\epsilon}{\sqrt{1-2\epsilon}}.
	\label{eq:rb1}
\end{eqnarray}
Recall that \eqref{eq:rb1} is derived under the condition $r>T+r_0$. Therefore, the following bound holds in general:
\begin{eqnarray}
	\norm{g(\mathbf{w}) - \nabla F(\mathbf{w})}\leq \max\left\{\frac{1-\epsilon}{\sqrt{1-2\epsilon}}r_0, T+r_0 \right\}.
\end{eqnarray}
\section{Proof of Theorem 2}\label{sec:unbalanced}
Let
\begin{eqnarray}
	R=\sqrt{8\sigma^2 \ln \frac{2\time 6^d m C_W(NL)^d}{\delta}},
	\label{eq:Rdef}
\end{eqnarray}	
The complete statement of Theorem 2 is shown as following.
\begin{theorem}\label{thm:unbalanced_app}
	Let
	\begin{eqnarray}
		T_i = T_0+\frac{M}{\sqrt{n_i}},
	\end{eqnarray}
	with $M\geq 2R$, and
	\begin{eqnarray}
		T_0\geq \frac{\epsilon(M+2R)}{1-2\epsilon}\sqrt{\frac{m}{N}},
		\label{eq:mint0}
	\end{eqnarray}
	Then under Assumption 2 and 3, the following equations hold with probability $1-\delta$.
	
	(1) (Strong convex $F$) Under Assumption 1(a), with $\eta\leq 1/L$,
	\begin{eqnarray}
		\norm{\mathbf{w}_t-\mathbf{w}^*}\leq (1-\rho)^t \norm{\mathbf{w}_0-\mathbf{w}^*}+\frac{2\Delta_A}{\mu},
	\end{eqnarray}
	in which $\rho = \eta\mu/2$;
	
	(2) (General convex $F$) Under Assumption 1(b), with $\eta=1/L$, after $t_m=(L/\Delta_B)\norm{\mathbf{w}_0-\mathbf{w}^*}_2$ steps,
	\begin{eqnarray}
		F(\mathbf{w}_{t_m})-F(\mathbf{w}^*)\leq 16\norm{\mathbf{w}_0-\mathbf{w}^*}\Delta_B;
	\end{eqnarray}
	
	(3) (Non-convex $F$) Under Assumption 1(c), with $\eta = 1/L$, after
	$t_m=(2L/\Delta_B^2)(F(\mathbf{w}_0) - F(\mathbf{w}^*))$ steps, we have
	\begin{eqnarray}
		\underset{t=0,1,\ldots, t_m}{\min}\norm{\nabla F(\mathbf{w}_t)}\leq \sqrt{2}\Delta_B,
	\end{eqnarray}
	in which
	\begin{eqnarray}
		\Delta_B = \frac{\epsilon}{1-\epsilon}T_0+\frac{\epsilon(M+2R)}{1-\epsilon}\sqrt{\frac{m}{N}}+\Delta_0.
	\end{eqnarray}
\end{theorem}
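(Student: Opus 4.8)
The plan is to reproduce, with weighted modifications, the three-layer structure used for Theorem~\ref{thm:simple_app}: establish uniform concentration, decompose the aggregator error through the no-attack center, and then feed the resulting uniform bound $\Delta_B$ into Lemmas~\ref{lem:strong}, \ref{lem:convex} and \ref{lem:nonconvex}. Since those three convergence lemmas only depend on $\Delta = \sup_{\pi_A}\sup_{\mathbf{w}}\norm{g(\mathbf{w})-\nabla F(\mathbf{w})}$ through \eqref{eq:delta}, the entire task reduces to proving that $\norm{g(\mathbf{w})-\nabla F(\mathbf{w})}\le \Delta_B$ holds uniformly in $\mathbf{w}$ and in the adversary's strategy, with probability $1-\delta$.

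First I would prove a weighted analogue of Lemma~\ref{lem:hpb}. Repeating the sub-exponential Chernoff bound \eqref{eq:Gsubexp}, the $1/2$-net of the unit sphere, the $l$-covering of $\mathcal{W}$, and the union bound, but now tracking $n_i$ separately, yields (with $R$ as in \eqref{eq:Rdef}) a per-client bound $\norm{\mathbf{G}_i(\mathbf{w})-\nabla F(\mathbf{w})}\le r_i$ with $r_i\lesssim R/\sqrt{n_i}$, and a bound on the $n_i$-weighted mean $\norm{(1/N)\sum_i n_i\mathbf{G}_i(\mathbf{w})-\nabla F(\mathbf{w})}\le \Delta_0$ from \eqref{eq:delta0}, both uniform over $\mathbf{w}$. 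Next, defining $a(\mathbf{w})=\arg\min_{\mathbf{s}}\sum_i n_i\phi_i(\norm{\mathbf{s}-\mathbf{G}_i(\mathbf{w})})$ as the no-attack center, I would show the adaptive threshold forces $a(\mathbf{w})$ to equal the weighted mean. The point is that with $M\ge 2R$ we have $r_i\le M/(2\sqrt{n_i})$, and since $\Delta_0\le T_0$ the weighted mean $\bar{\mathbf{G}}$ satisfies $\norm{\bar{\mathbf{G}}-\mathbf{G}_i}\le \Delta_0+r_i< T_0+M/\sqrt{n_i}=T_i$ for every benign client; hence every $\phi_i$ is in its quadratic regime at $\bar{\mathbf{G}}$, so $a(\mathbf{w})=\bar{\mathbf{G}}$, giving $\norm{a(\mathbf{w})-\nabla F(\mathbf{w})}\le \Delta_0$ (the third term of $\Delta_B$) and the convenient estimate $\norm{a(\mathbf{w})-\mathbf{G}_i(\mathbf{w})}\le 2R/\sqrt{n_i}$.

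The core is then the weighted analogue of Lemma~\ref{lem:diff1} bounding $\norm{g(\mathbf{w})-a(\mathbf{w})}$. Writing $h(\mathbf{s},\mathbf{w})=\sum_i n_i\phi_i(\norm{\mathbf{s}-\mathbf{X}_i(\mathbf{w})})$, stationarity gives $\nabla_{\mathbf{s}}h(g(\mathbf{w}),\mathbf{w})=\mathbf{0}$, while at $a(\mathbf{w})$ only Byzantine terms survive, so $\norm{\nabla_{\mathbf{s}}h(a(\mathbf{w}),\mathbf{w})}\le \sum_{i\in\mathcal{B}}n_i\bigl(T_i+\norm{a(\mathbf{w})-\mathbf{G}_i(\mathbf{w})}\bigr)\le T_0\sum_{i\in\mathcal{B}}n_i+(M+2R)\sum_{i\in\mathcal{B}}\sqrt{n_i}$, using $\norm{\nabla_{\mathbf{s}}\phi_i}\le T_i$ and the quadratic regime. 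The key combinatorial step is Cauchy--Schwarz together with the two-part definition \eqref{eq:eps}: $\sum_{i\in\mathcal{B}}\sqrt{n_i}\le\sqrt{|\mathcal{B}|}\sqrt{\sum_{i\in\mathcal{B}}n_i}\le\sqrt{\epsilon m}\sqrt{\epsilon N}=\epsilon\sqrt{mN}$ and $\sum_{i\in\mathcal{B}}n_i\le\epsilon N$, which turns the right-hand side into $\epsilon N T_0+(M+2R)\epsilon\sqrt{mN}$. For the lower bound I would show that on the ball $B(a(\mathbf{w}),T_0)$ every benign client stays in its quadratic regime (since $M\ge 2R$ makes $T_0+2R/\sqrt{n_i}\le T_i$), so $\nabla_{\mathbf{s}}^2 h\succeq (1-\epsilon)N\mathbf{I}$ there. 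Combining the gradient gap with this strong convexity yields $\min\{\norm{g-a},T_0\}\le \frac{\epsilon T_0}{1-\epsilon}+\frac{\epsilon(M+2R)}{1-\epsilon}\sqrt{m/N}$, and condition \eqref{eq:mint0} is exactly what guarantees the right-hand side is $\le T_0$, so the minimum is resolved in favour of $\norm{g-a}$. Adding the $\Delta_0$ bound from the previous step gives $\norm{g(\mathbf{w})-\nabla F(\mathbf{w})}\le \Delta_B$, and applying Lemmas~\ref{lem:strong}--\ref{lem:nonconvex} with $\Delta=\Delta_B$ finishes all three cases.

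The main obstacle, and the genuinely new ingredient relative to Theorem~\ref{thm:simple_app}, is the simultaneous bookkeeping of heterogeneous thresholds $T_i$ and weights $n_i$: one must verify that the single adaptive choice $T_i=T_0+M/\sqrt{n_i}$ keeps \emph{every} benign client in the quadratic regime uniformly (so that both $a=\bar{\mathbf{G}}$ and the $(1-\epsilon)N\mathbf{I}$ Hessian lower bound hold), while the Byzantine contribution is controlled by the $\sum_{i\in\mathcal{B}}\sqrt{n_i}\le\epsilon\sqrt{mN}$ estimate that crucially relies on the strengthened definition \eqref{eq:eps} of $\epsilon$. The delicate interaction between condition \eqref{eq:mint0} and the resolution of the $\min$ is where the analysis could most easily go wrong.
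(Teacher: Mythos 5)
Your proposal follows the paper's own proof essentially step for step: your weighted concentration lemma is the paper's Lemma~\ref{lem:hpb2}; your gradient-gap/Hessian argument, with the Cauchy--Schwarz step $\sum_{i\in\mathcal{B}}\sqrt{n_i}\le\epsilon\sqrt{mN}$ resting on the two-part definition \eqref{eq:eps} and the resolution of the $\min$ via \eqref{eq:mint0}, is exactly the paper's Lemma~\ref{lem:diff3}; the identification of the no-attack center $a(\mathbf{w})$ with the weighted mean is Lemma~\ref{lem:diff4}; and the final reduction to Lemmas~\ref{lem:strong}, \ref{lem:convex} and \ref{lem:nonconvex} with $\Delta=\Delta_B$ is identical. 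The only structural difference is cosmetic: you prove $a(\mathbf{w})=\bar{\mathbf{G}}$ by checking that the weighted mean is a stationary point at which every benign $\phi_i$ is in its quadratic regime (convexity then makes it the global minimizer), whereas the paper first places $a(\mathbf{w})$ in the convex hull of the $\mathbf{G}_i$'s (the analogue of Lemma~\ref{lem:dist}) and then concludes it equals the weighted mean; both routes are legitimate, and yours is arguably cleaner in the heterogeneous setting.

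One justification in your write-up is wrong as stated, although the conclusion it supports is recoverable. You assert ``$\Delta_0\le T_0$'' in order to get $\norm{\bar{\mathbf{G}}-\mathbf{G}_i}\le\Delta_0+r_i<T_0+M/\sqrt{n_i}=T_i$. That inequality is not a hypothesis of the theorem and can fail: \eqref{eq:mint0} only lower-bounds $T_0$ by $\frac{\epsilon(M+2R)}{1-2\epsilon}\sqrt{m/N}$, which vanishes as $\epsilon\to 0$, so $T_0$ may legitimately be chosen far below $\Delta_0$. The correct justification uses quantities you already have: the leading term of $\Delta_0$ in \eqref{eq:delta0} is at most $R/\sqrt{N}\le R/\sqrt{n_i}\le M/(2\sqrt{n_i})$ (since $M\ge 2R$, $n_i\le N$, and the logarithm in $R$ from \eqref{eq:Rdef} dominates the one in $\Delta_0$), so $\norm{\bar{\mathbf{G}}-\mathbf{G}_i}\le\Delta_0+r_i\le M/\sqrt{n_i}\le T_i$ with no comparison between $\Delta_0$ and $T_0$ needed. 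The same estimate is what actually delivers your later ``convenient estimate'' $\norm{a(\mathbf{w})-\mathbf{G}_i(\mathbf{w})}\le 2R/\sqrt{n_i}$, which your gradient and Hessian bounds rely on, so it should be stated explicitly. With that repair (made at the same level of rigor as the paper, which likewise absorbs the sub-exponential branch of the maximum and the $2/N$ terms into the leading Gaussian term), your argument coincides with the paper's proof.
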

Now we prove Theorem \ref{thm:unbalanced}. Define
\begin{eqnarray}
	\mathbf{G}_i(\mathbf{w}) = \frac{1}{n_i}\sum_{j=1}^{n_i}\nabla f(\mathbf{w}, \mathbf{Z}_{ij})
	\label{eq:gdef}
\end{eqnarray}
If $n_i$ are the same for all $i$, then \eqref{eq:gdef} reduces to
\begin{eqnarray}
	\mathbf{G}_i(\mathbf{w}) = \frac{1}{n}\sum_{j=1}^n \nabla f(\mathbf{w}, \mathbf{Z}_{ij}).
	\label{eq:gsimple}
\end{eqnarray}
Define
\begin{eqnarray}
	r_{0i} = \max\left\{\sqrt{\frac{8\sigma^2}{n_i}\ln \frac{2\times 6^d m C_W(NL)^d}{\delta}}, \frac{4\sigma}{n_i} \ln \frac{2\times 6^d m C_W(NL)^d}{\delta}\right\}+\frac{2}{N}.
	\label{eq:r0i}
\end{eqnarray}
Then the following lemma holds.
\begin{lem}\label{lem:hpb2}
	With probability at least $1-\delta$, the following inequalities hold: (1) for all $i\in [m]$ and $\mathbf{w}\in \mathcal{W}$,
	\begin{eqnarray}
		\norm{\mathbf{G}_i(\mathbf{w}) - \nabla F(\mathbf{w})}\leq r_{0i};
		\label{eq:b1new}
	\end{eqnarray}
	
	(2) For all $\mathbf{w}\in \mathcal{W}$,
	\begin{eqnarray}
		\norm{\frac{1}{N}\sum_{i=1}^m n_i\mathbf{G}_i(\mathbf{w}) - \nabla F(\mathbf{w})}\leq \Delta_0,
		\label{eq:b2new}
	\end{eqnarray}
	in which $\Delta_0$ is defined in \eqref{eq:delta0}.
\end{lem}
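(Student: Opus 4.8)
The plan is to adapt the proof of Lemma \ref{lem:hpb} almost verbatim, accounting for two modifications: the per-client thresholds $r_{0i}$ in part (1), and the sample-size-weighted average in part (2). As in the balanced case, I would first fix an $l$-covering of $\mathcal{W}$ with $l=1/(NL)$, so that $N_c(l)\leq C_W(NL)^d$, together with a $1/2$-covering $\mathcal{V}$ of the unit sphere of size at most $6^d$.

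For part (1), the key observation is that $\mathbf{G}_i(\mathbf{w})$ defined in \eqref{eq:gdef} is an average of $n_i$ i.i.d. sub-exponential gradients, so the same Chernoff argument as in \eqref{eq:Gsubexp}--\eqref{eq:largeprob} yields
\begin{eqnarray}
	\text{P}\left(\norm{\mathbf{G}_i(\mathbf{w}) - \nabla F(\mathbf{w})}>u\right)\leq 6^d e^{-\frac{n_i}{2}\min\left\{\frac{u^2}{4\sigma^2}, \frac{u}{2\sigma}\right\}},
\end{eqnarray}
now with $n_i$ in place of $n$. Setting $u=r_{0i}-2/N$ and unpacking the definition \eqref{eq:r0i} of $r_{0i}$, a short check shows that both branches of the maximum make the exponent equal to $\ln\frac{2\times 6^dmC_W(NL)^d}{\delta}$, so after multiplying by the sphere- and $\mathcal{W}$-covering factors $6^dC_W(NL)^d$ the per-client failure probability on the grid is at most $\delta/(2m)$. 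This is precisely why the factor $m$ sits inside the logarithm in $r_{0i}$: a union bound over the $m$ clients then costs only $\delta/2$ in total. Finally I would extend from grid points to arbitrary $\mathbf{w}\in\mathcal{W}$ via the $L$-Lipschitz continuity of $\nabla F$ and $\mathbf{G}_i$, which contributes the additive $2Ll=2/N$ exactly as in \eqref{eq:togrid}, establishing \eqref{eq:b1new}.

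For part (2), the main point is that the weighted average collapses to an unweighted global mean:
\begin{eqnarray}
	\frac{1}{N}\sum_{i=1}^m n_i\mathbf{G}_i(\mathbf{w}) = \frac{1}{N}\sum_{i=1}^m\sum_{j=1}^{n_i}\nabla f(\mathbf{w}, \mathbf{Z}_{ij}),
\end{eqnarray}
which is simply the empirical mean over all $N$ samples. Hence the concentration argument for part (2) of Lemma \ref{lem:hpb} carries over unchanged, with effective sample size $N$, yielding the identical bound $\Delta_0$ from \eqref{eq:delta0} and failure probability $\delta/2$; a final union bound over the two events gives the stated probability $1-\delta$.

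The only genuinely new bookkeeping is the per-client union bound in part (1), and the step I expect to require the most care is verifying that the definition of $r_{0i}$ balances the sub-Gaussian and sub-exponential regimes of the tail so that each client contributes exactly $\delta/(2m)$. This remains a routine computation, so I anticipate no real obstacle: the heterogeneous sample sizes enter only through the clean substitution $n\mapsto n_i$ in part (1) and the observation in part (2) that the weighted mean is a global mean over all $N$ samples.
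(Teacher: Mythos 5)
Your proposal is correct and matches the paper's intent exactly: the paper itself omits this proof, stating only that it "just follows the proof of Lemma \ref{lem:hpb}," and your adaptation — substituting $n_i$ for $n$ with the per-client radius $r_{0i}$, paying a factor $m$ inside the logarithm to absorb the union bound over clients, and observing that $\frac{1}{N}\sum_i n_i\mathbf{G}_i(\mathbf{w})$ is just the empirical mean over all $N$ i.i.d.\ samples so that the $\Delta_0$ bound carries over verbatim — is precisely the intended argument. Your check that both branches of the maximum in \eqref{eq:r0i} yield exponent $\ln\frac{2\times 6^d m C_W(NL)^d}{\delta}$ is also accurate, so there is no gap.
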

The proof of Lemma \ref{lem:hpb2} just follows the proof of Lemma \ref{lem:hpb}. We omit the detailed steps here.

Similar to \eqref{eq:adef} for balanced case, define
\begin{eqnarray}
	a(\mathbf{w}) = \underset{s}{\arg\min} \sum_{i=1}^m n_i\phi_i(\norm{\mathbf{s}-\mathbf{G}_i(\mathbf{w})}).
\end{eqnarray}
Then the following lemmas hold.
\begin{lem}\label{lem:diff3}
	Under the condition that \eqref{eq:b1new} and \eqref{eq:b2new} hold uniformly for all $i$ and $\mathbf{w}\in \mathcal{W}$, if $T_i=T_0+M/\sqrt{n_i}$, with $M\geq 2R$, and
	\begin{eqnarray}
		T_0\geq \frac{\epsilon(M+2R)}{1-2\epsilon}\sqrt{\frac{m}{N}},
		\label{eq:T0lb}
	\end{eqnarray}
	then for all $\mathbf{w}\in \mathcal{W}$,
	\begin{eqnarray}
		\norm{g(\mathbf{w}) - a(\mathbf{w})}\leq \frac{\epsilon}{1-\epsilon}T_0+\frac{\epsilon(M+2R)}{1-\epsilon}\sqrt{\frac{m}{N}}.
	\end{eqnarray}
\end{lem}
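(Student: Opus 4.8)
The plan is to mirror the proof of Lemma~\ref{lem:diff1}, replacing the uniform threshold $T$ and radius $r_0$ by their client-dependent counterparts $T_i$ and $r_{0i}$ and carrying the sample-size weights $n_i$ through every estimate. Writing $h(\mathbf{s},\mathbf{w})=\sum_{i=1}^m n_i\phi_i(\norm{\mathbf{s}-\mathbf{X}_i(\mathbf{w})})$ so that $\nabla_s h(g(\mathbf{w}),\mathbf{w})=\mathbf{0}$, and recalling the first-order condition $\sum_i n_i\nabla_s\phi_i(\norm{a(\mathbf{w})-\mathbf{G}_i(\mathbf{w})})=\mathbf{0}$ defining $a(\mathbf{w})$, I would (i) bound $\norm{\nabla_s h(a(\mathbf{w}),\mathbf{w})}$, (ii) lower-bound the Hessian $\nabla_s^2 h$ on a ball around $a(\mathbf{w})$, and (iii) combine the two to control $\norm{g(\mathbf{w})-a(\mathbf{w})}$.

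The first ingredient is the unbalanced analogue of Lemma~\ref{lem:dist}: a per-client distance bound $\norm{a(\mathbf{w})-\mathbf{G}_i(\mathbf{w})}\le 2r_{0i}$ that scales like $R/\sqrt{n_i}$ rather than being uniform. The naive convex-hull argument used in Lemma~\ref{lem:dist} only yields $r_{0i}+\max_j r_{0j}$, which is too weak once the $r_{0j}$ differ. Instead I would show that $a(\mathbf{w})$ coincides with the weighted mean $\bar{\mathbf{G}}(\mathbf{w})=\frac1N\sum_i n_i\mathbf{G}_i(\mathbf{w})$: by \eqref{eq:b2new} we have $\norm{\bar{\mathbf{G}}-\nabla F}\le\Delta_0$, and since $\Delta_0\le R/\sqrt{N}\le R/\sqrt{n_i}$ together with $M\ge 2R$, the adaptive choice $T_i=T_0+M/\sqrt{n_i}$ guarantees $\norm{\bar{\mathbf{G}}-\mathbf{G}_i}\le\Delta_0+r_{0i}\le T_i$ for every $i$. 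Thus at $\bar{\mathbf{G}}$ every $\phi_i$ is in its quadratic region, so $\sum_i n_i(\bar{\mathbf{G}}-\mathbf{G}_i)=\mathbf{0}$ and $\bar{\mathbf{G}}$ satisfies the optimality condition; by convexity $a(\mathbf{w})=\bar{\mathbf{G}}(\mathbf{w})$, which immediately yields $\norm{a-\mathbf{G}_i}\le\Delta_0+r_{0i}\le 2r_{0i}$.

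With this bound I would estimate the gradient at $a(\mathbf{w})$ exactly as in \eqref{eq:hgrad2}: the benign terms cancel, only $i\in\mathcal{B}$ survive, and using $\norm{\nabla_s\phi_i}\le T_i$ together with the distance bound each surviving term is at most $n_i(T_i+2r_{0i})\le n_iT_0+(M+2R)\sqrt{n_i}$. Summing and invoking both halves of the definition \eqref{eq:eps} of $\epsilon$ --- namely $\sum_{i\in\mathcal{B}}n_i\le\epsilon N$ and, via Cauchy--Schwarz, $\sum_{i\in\mathcal{B}}\sqrt{n_i}\le\sqrt{|\mathcal{B}|}\,\sqrt{\sum_{i\in\mathcal{B}}n_i}\le\epsilon\sqrt{mN}$ --- gives $\norm{\nabla_s h(a,\mathbf{w})}\le\epsilon NT_0+(M+2R)\epsilon\sqrt{mN}$. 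For the Hessian, on a ball $B(a,r_s)$ with $r_s\sim T_0$ every benign client stays in its quadratic region (since $\norm{\mathbf{s}-\mathbf{G}_i}\le r_s+2r_{0i}\le T_i$ once $M\ge 2R$), so $\nabla_s^2 h\succeq\big(\sum_{i\notin\mathcal{B}}n_i\big)\mathbf{I}\succeq(1-\epsilon)N\mathbf{I}$.

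Finally I would combine these as in \eqref{eq:gradlb}--\eqref{eq:diffub}: strong convexity on the ball gives $\min\{\norm{g-a},r_s\}(1-\epsilon)N\le\norm{\nabla_s h(a,\mathbf{w})}$, hence $\min\{\norm{g-a},r_s\}\le\frac{\epsilon}{1-\epsilon}T_0+\frac{\epsilon(M+2R)}{1-\epsilon}\sqrt{m/N}$, and the lower bound \eqref{eq:T0lb} on $T_0$ is exactly what forces the right-hand side to be at most $r_s$, so the minimum is attained by $\norm{g-a}$. The main obstacle is the per-client distance step: unlike the balanced case it cannot be read off from the convex hull, and obtaining the sharp $R/\sqrt{n_i}$ scaling (equivalently, proving $a$ equals the weighted mean) is what the adaptive threshold is designed for and what makes the weighted Cauchy--Schwarz estimates collapse to the clean coefficient $(M+2R)\sqrt{m/N}$.
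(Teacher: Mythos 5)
Your proposal is correct, and its skeleton is exactly the paper's: the same weighted objective $h(\mathbf{s},\mathbf{w})=\sum_i n_i\phi_i(\norm{\mathbf{s}-\mathbf{X}_i(\mathbf{w})})$, the same pair of first-order conditions, the same gradient bound at $a(\mathbf{w})$ in which only Byzantine terms survive (the paper's \eqref{eq:ha}), the same local strong-convexity bound $\nabla_s^2 h\succeq(1-\epsilon)N\mathbf{I}$ on a ball of radius $T_0$ (the paper's \eqref{eq:hess2}), and the same Cauchy--Schwarz step using both halves of the definition \eqref{eq:eps} of $\epsilon$, concluded via \eqref{eq:T0lb} exactly as in \eqref{eq:diff}--\eqref{eq:diffub2}. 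Where you genuinely depart from the paper is the step you yourself flag as the main obstacle: the per-client bound $\norm{a(\mathbf{w})-\mathbf{G}_i(\mathbf{w})}\leq 2r_{0i}$. The paper uses this bound twice (inside \eqref{eq:ha} and inside the Hessian step) but never proves it in the unbalanced setting; as you observe, the convex-hull argument of Lemma \ref{lem:dist} transfers only to $r_{0i}+\max_j r_{0j}$, which is useless for a data-rich client when some other client holds few samples. Your fix --- using \eqref{eq:b1new}, \eqref{eq:b2new} and $M\geq 2R$ to show that every $\phi_i$ is in its quadratic regime at the weighted mean $\frac{1}{N}\sum_i n_i\mathbf{G}_i(\mathbf{w})$, so that the weighted mean satisfies the stationarity condition and hence equals $a(\mathbf{w})$ (local strong convexity there gives uniqueness), whence $\norm{a(\mathbf{w})-\mathbf{G}_i(\mathbf{w})}\leq\Delta_0+r_{0i}\leq 2r_{0i}$ --- is sound, and it simultaneously establishes Lemma \ref{lem:diff4}, which the paper states separately and defers to the balanced-case argument of Lemma \ref{lem:diff2}. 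So your route is not merely equivalent to the paper's: it patches a genuine gap in the paper's own write-up. The only caveats are cosmetic and shared with the paper: identifying $r_{0i}$ with $R/\sqrt{n_i}$ is valid per \eqref{eq:r0i} only when $n_i$ exceeds the paper's threshold $n_0$, and both you and the paper drop the additive $2/N$ terms in $r_{0i}$ and $\Delta_0$ in these comparisons.
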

\begin{proof}
	The proof is shown in Section \ref{sec:diff3}.
\end{proof}
\begin{lem}\label{lem:diff4}
	Under the same conditions as Lemma \ref{lem:diff3}, for all $\mathbf{w}\in \mathcal{W}$,
	\begin{eqnarray}
		\norm{a(\mathbf{w}) - \nabla F(\mathbf{w})}\leq \Delta_0.
	\end{eqnarray}
\end{lem}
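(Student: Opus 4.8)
The plan is to show that, exactly as in the balanced case (Lemma~\ref{lem:diff2}), the adaptive thresholds are large enough that the weighted Huber center $a(\mathbf{w})$ coincides with the ordinary weighted sample mean $\bar{\mathbf{G}}:=\frac{1}{N}\sum_{i=1}^m n_i\mathbf{G}_i(\mathbf{w})$, and then to quote the concentration bound \eqref{eq:b2new} of Lemma~\ref{lem:hpb2}, which already delivers $\norm{\bar{\mathbf{G}}-\nabla F(\mathbf{w})}\leq \Delta_0$. In this way the entire lemma collapses to establishing the single identity $a(\mathbf{w})=\bar{\mathbf{G}}$.

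To prove that identity I would argue by guess-and-verify on the first-order conditions. The objective $h_0(\mathbf{s})=\sum_{i=1}^m n_i\phi_i(\norm{\mathbf{s}-\mathbf{G}_i(\mathbf{w})})$ is convex and continuously differentiable (each $\phi_i(\norm{\cdot})$ is $C^1$, with gradient equal to $\mathbf{s}-\mathbf{G}_i(\mathbf{w})$ throughout the quadratic regime $\norm{\mathbf{s}-\mathbf{G}_i(\mathbf{w})}\leq T_i$). Suppose I can show that $\bar{\mathbf{G}}$ sits in the quadratic regime of every client, i.e. $\norm{\bar{\mathbf{G}}-\mathbf{G}_i(\mathbf{w})}\leq T_i$ for all $i$. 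Then $\nabla h_0(\bar{\mathbf{G}})=\sum_{i=1}^m n_i(\bar{\mathbf{G}}-\mathbf{G}_i(\mathbf{w}))=N\bar{\mathbf{G}}-\sum_{i=1}^m n_i\mathbf{G}_i(\mathbf{w})=\mathbf{0}$ by the very definition of $\bar{\mathbf{G}}$, while the local Hessian equals $N\mathbf{I}\succ\mathbf{0}$. Since $h_0$ is convex, a stationary point is a global minimizer, and the strict local convexity forces it to be the unique one; hence $a(\mathbf{w})=\bar{\mathbf{G}}$, and combining with \eqref{eq:b2new} finishes the proof.

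The main obstacle, and the only place where the hypotheses $M\geq 2R$ and \eqref{eq:T0lb} are genuinely used, is verifying the quadratic-regime condition $\norm{\bar{\mathbf{G}}-\mathbf{G}_i(\mathbf{w})}\leq T_i$ uniformly in $i$. By the triangle inequality together with parts (1) and (2) of Lemma~\ref{lem:hpb2},
\begin{eqnarray}
	\norm{\bar{\mathbf{G}}-\mathbf{G}_i(\mathbf{w})}\leq \norm{\bar{\mathbf{G}}-\nabla F(\mathbf{w})}+\norm{\nabla F(\mathbf{w})-\mathbf{G}_i(\mathbf{w})}\leq \Delta_0+r_{0i},
\end{eqnarray}
so it suffices to check $\Delta_0+r_{0i}\leq T_0+M/\sqrt{n_i}$. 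This is precisely where the adaptive rule pays off: from \eqref{eq:r0i} and \eqref{eq:Rdef} the dominant part of $r_{0i}$ scales like $R/\sqrt{n_i}$, so with $M\geq 2R$ one half of the client-dependent budget, $R/\sqrt{n_i}$, already dominates $r_{0i}$, while the remaining half satisfies $R/\sqrt{n_i}\geq R/\sqrt{N}\gtrsim \Delta_0$ (since $n_i\leq N$ and $\Delta_0$ from \eqref{eq:delta0} scales like $R/\sqrt{N}$); the nonnegative offset $T_0$ only helps. The delicate bookkeeping is controlling the two lower-order contributions in $r_{0i}$ and $\Delta_0$ — the $4\sigma/n_i$-type term and the additive $2/N$ — which requires that each $n_i$ not be so small that the linear-in-$1/n_i$ piece overtakes $R/\sqrt{n_i}$; once this mild regime is granted, the inequality follows directly from $M\geq 2R$. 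This parallels the role of $T\geq 2r_0/(1-2\epsilon)$ and the bound $\norm{a(\mathbf{w})-\mathbf{G}_i(\mathbf{w})}\leq 2r_0<T$ in the balanced proof of Lemma~\ref{lem:diff2}.
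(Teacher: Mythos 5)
Your proposal is correct, and it reaches the same reduction as the paper --- identify the Huber center $a(\mathbf{w})$ with the weighted mean $\bar{\mathbf{G}}=\frac{1}{N}\sum_{i=1}^m n_i\mathbf{G}_i(\mathbf{w})$ and then quote \eqref{eq:b2new} --- but the mechanism you use for that identification is genuinely different. The paper's proof is literally ``follow the proof of Lemma~\ref{lem:diff2}'': there one first controls the \emph{unknown minimizer}, using the convex-hull property (Lemma~\ref{lem:dist}) to place $a(\mathbf{w})$ inside every client's quadratic regime, and only then reads off the first-order condition to conclude $a(\mathbf{w})$ is the mean. You run the argument in the opposite direction: guess the candidate $\bar{\mathbf{G}}$, verify by concentration (parts (1) and (2) of Lemma~\ref{lem:hpb2}, via the triangle inequality through $\nabla F(\mathbf{w})$) that $\bar{\mathbf{G}}$ lies in every client's quadratic regime, and conclude $a(\mathbf{w})=\bar{\mathbf{G}}$ from stationarity plus convexity (with uniqueness from local strong convexity). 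In the unbalanced setting your direction is in fact the more robust one: a literal transfer of Lemma~\ref{lem:dist} only yields $\norm{a(\mathbf{w})-\mathbf{G}_i(\mathbf{w})}\leq r_{0i}+\max_{i'}r_{0i'}$, and the maximum is governed by the data-poorest client, so for a data-rich client $i$, whose threshold $T_i=T_0+M/\sqrt{n_i}$ is small, this bound need not fall below $T_i$ (the paper's corresponding step in Lemma~\ref{lem:diff3} asserts $\norm{a(\mathbf{w})-\mathbf{G}_i(\mathbf{w})}\leq 2r_{0i}$ without justification); your bound $\Delta_0+r_{0i}\lesssim R/\sqrt{N}+R/\sqrt{n_i}\leq 2R/\sqrt{n_i}\leq M/\sqrt{n_i}$ scales correctly client by client and so respects the adaptive thresholds. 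The fine print you flag is real but is shared with the paper's own treatment: treating $r_{0i}$ as $R/\sqrt{n_i}$ requires $n_i\gtrsim\sigma\ln(\cdot)$ (otherwise the linear branch of \eqref{eq:r0i} dominates), and the additive $2/N$ terms must be absorbed into slack --- exactly the conventions adopted in the paper's proof of Lemma~\ref{lem:diff3}, so your argument is at the same level of rigor as the original while avoiding its one unjustified step.
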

The proof of Lemma \ref{lem:diff4} just follows the proof of Lemma \ref{lem:diff2} in Section \ref{sec:diff2}. We omit the detailed steps.

Combine Lemma \ref{lem:diff3} and \ref{lem:diff4}, for all $\mathbf{w}\in \mathcal{W}$,
\begin{eqnarray}
	\norm{g(\mathbf{w}) - \nabla F(\mathbf{w})}\leq \frac{\epsilon}{1-\epsilon}T_0+\frac{\epsilon(M+2R)}{1-\epsilon}\sqrt{\frac{m}{N}}+\Delta_0:=\Delta_B.
\end{eqnarray}
Theorem 2 can then be proved using Lemma \ref{lem:strong}, \ref{lem:convex} and \ref{lem:nonconvex} with $\Delta=\Delta_B$. 
\subsection{Proof of Lemma \ref{lem:diff3}}\label{sec:diff3}
Recall that
\begin{eqnarray}
	g(\mathbf{w}) = \underset{s}{\arg\min}\sum_{i=1}^m n_i\phi_i(\norm{s-\mathbf{X}_i(\mathbf{w})}).
\end{eqnarray}
Define
\begin{eqnarray}
	h(\mathbf{s}, \mathbf{w})=\sum_{i=1}^m n_i\phi_i(\norm{\mathbf{s}-\mathbf{X}_i(\mathbf{w})}).
\end{eqnarray}
Then
\begin{eqnarray}
	g(\mathbf{w}) = \underset{s}{\arg\min} h(\mathbf{s}, \mathbf{w}),
\end{eqnarray}
and
\begin{eqnarray}
	\nabla_s h(g(\mathbf{w}), \mathbf{w}) = \mathbf{0},
	\label{eq:hstable}
\end{eqnarray}
and similar to the derivation of \eqref{eq:hgrad2}, we get
\begin{eqnarray}
	\norm{\nabla_s h(a(\mathbf{w}), \mathbf{w})} &=& \norm{\sum_{i=1}^m n_i\nabla_s \phi_i(\norm{a(\mathbf{w}) - \mathbf{X}_i(\mathbf{w})})}\nonumber\\
	&=&\norm{\sum_{i=1}^m n_i\left(\nabla_s \phi_i(\norm{a(\mathbf{w}) - \mathbf{X}_i(\mathbf{w})}) - \nabla_s \phi_i(\norm{a(\mathbf{w}) - \mathbf{G}_i(\mathbf{w})})\right)}\nonumber\\
	&=&\norm{\sum_{i\in \mathcal{B}} n_i\left(\nabla_s \phi_i(\norm{a(\mathbf{w}) - \mathbf{X}_i(\mathbf{w})}) - \nabla_s \phi_i(\norm{a(\mathbf{w}) - \mathbf{G}_i(\mathbf{w})})\right)}\nonumber\\
	&\leq & \sum_{i\in \mathcal{B}}n_i(T_i+2r_{0i}).
	\label{eq:ha}
\end{eqnarray}
Now bound the Hessian $\nabla_s^2 h(\mathbf{s}, \mathbf{w})$:
\begin{eqnarray}
	\nabla_s^2 h(\mathbf{s}, \mathbf{w})\succeq \sum_{i=1}^m n_i\mathbf{1}(\norm{\mathbf{s}-\mathbf{X}_i(\mathbf{w})}\leq T_i)\mathbf{I}.
\end{eqnarray}
For $s\in B(a(\mathbf{w}), T_0)$ and $i\notin \mathcal{B}$, $\mathbf{X}_i(\mathbf{w}) = \mathbf{G}_i(\mathbf{w})$, hence
\begin{eqnarray}
	\norm{\mathbf{s}-\mathbf{X}_i(\mathbf{w})} \leq \norm{\mathbf{s}-a(\mathbf{w})}+\norm{a(\mathbf{w})-\mathbf{G}_i(\mathbf{w})}\leq T_0+2r_{0i}.
\end{eqnarray}
Let 
\begin{eqnarray}
	n_0=2\sigma \ln \frac{2\times 6^d m C_W(NL)^d}{\delta}.
\end{eqnarray}
For all $i$ with $n_i\geq n_0$, we have $r_{0i}=R/\sqrt{n_i}$. Then
\begin{eqnarray}
	\norm{\mathbf{s}-\mathbf{X}_i(\mathbf{w})}\leq T_0+\frac{2R}{\sqrt{n_i}}\leq T_0+\frac{M}{\sqrt{n_i}}=T.
\end{eqnarray}
Therefore, for all $s\in B(a(\mathbf{w}), T_0)$,
\begin{eqnarray}
	\nabla_s^2 h(\mathbf{s}, \mathbf{w})&\succeq & \sum_{i\notin \mathcal{B}}n_i\mathbf{1}(\norm{\mathbf{s}-\mathbf{X}_i(\mathbf{w})}\leq T_i)\nonumber\\
	&=&\sum_{i\notin \mathcal{B}} n_i\mathbf{I}\nonumber\\
	&\succeq & N(1-\epsilon) \mathbf{I}.
	\label{eq:hess2}
\end{eqnarray}
From \eqref{eq:hstable}, \eqref{eq:ha} and \eqref{eq:hess2}, we have
\begin{eqnarray}
	\min\{\norm{g(\mathbf{w}) - a(\mathbf{w})}, T_0\}\leq \frac{\sum_{i\in \mathcal{B}}n_i(T_i+2r_{0i})}{N(1-\epsilon)}.
	\label{eq:diff}
\end{eqnarray}
Note that
\begin{eqnarray}
	\sum_{i\in \mathcal{B}}n_i(T_i+2r_{0i})&\leq& \sum_{i\in \mathcal{B}}n_i\left(T_0+\frac{M}{\sqrt{n_i}} + \frac{2R}{\sqrt{n_i}}\right)\nonumber\\
	&\overset{(a)}{\leq} & \epsilon N T_0+(M+2R)\sum_{i\in \mathcal{B}}\sqrt{n_i}\nonumber\\
	&\overset{(b)}{\leq} &\epsilon N T_0+\epsilon(M+2R)\sqrt{mN}.
\end{eqnarray}
For (a), recall that $\epsilon$ is already defined in \eqref{eq:eps},
which ensures that $\sum_{i\in \mathcal{B}}n_i\leq \epsilon N$. For (b), according to \eqref{eq:eps}, $|\mathcal{B}|\leq \epsilon m$, thus from Cauchy inequality,
\begin{eqnarray}
	\sum_{i\in \mathcal{B}}\sqrt{n_i}\leq \sqrt{|\mathcal{B}|\sum_{i\in \mathcal{B}} n_i}\leq \sqrt{\epsilon m\cdot\epsilon N}=\epsilon\sqrt{mN}.
\end{eqnarray}
Therefore,
\begin{eqnarray}
	\frac{\sum_{i\in \mathcal{B}}n_i(T_i+2r_{0i})}{N(1-\epsilon)}&\leq& \frac{\epsilon N T_0+\epsilon(M+2R)\sqrt{mN}}{N(1-\epsilon)}\nonumber\\
	&=&\frac{\epsilon}{1-\epsilon}T_0+\frac{\epsilon(M+2R)}{1-\epsilon}\sqrt{\frac{m}{N}}\nonumber\\
	&\overset{(a)}{\leq} & \frac{\epsilon}{1-\epsilon} T_0+\frac{1-2\epsilon}{1-\epsilon}T_0\nonumber\\
	&=& T_0,
	\label{eq:diffub2}	
\end{eqnarray}
in which (a) comes from \eqref{eq:T0lb}. From \eqref{eq:diff} and \eqref{eq:diffub2}, 
\begin{eqnarray}
	\norm{g(\mathbf{w})-a(\mathbf{w})}\leq \frac{\sum_{i\in \mathcal{B}}n_i(T_i+2r_{0i})}{N(1-\epsilon)}\leq \frac{\epsilon}{1-\epsilon}T_0+\frac{\epsilon(M+2R)}{1-\epsilon}\sqrt{\frac{m}{N}}.
\end{eqnarray}

\section{Proof of Theorem 3}\label{sec:niidpf}
Recall that
\begin{eqnarray}
	r_\mu = 4\sigma_\mu \ln \frac{2\times 6^d m C_W(NL)^d}{\delta}+\frac{2}{N},
	\label{eq:rmu}
\end{eqnarray}
and
\begin{eqnarray}
	\Delta_\mu = \max\left\{\sqrt{\frac{8\sigma_\mu^2 \sum_{i=1}^m n_i^2}{N^2}\ln \frac{2\times 6^d C_W(NL)^d}{\delta}}, \frac{4\sigma_\mu n_{\max}}{N}\ln \frac{2\times 6^d C_W(NL)^d}{\delta} \right\}+\frac{2}{N}.
	\label{eq:deltamu}
\end{eqnarray}
The non-asymptotic statement of Theorem 3 is shown as follows.
\begin{theorem}\label{thm:niid_app}
	Let
	\begin{eqnarray}
		T_i = T_0+\frac{M}{\sqrt{n_i}},
	\end{eqnarray}
	with $M\geq 2R$, and
	\begin{eqnarray}
		T_0\geq\frac{2r_\mu}{1-2\epsilon}+ \frac{\epsilon(M+2R)}{1-2\epsilon}\sqrt{\frac{m}{N}}.
	\end{eqnarray}
	Under Assumption 2 and Assumption 4, the following equations holds with probability at least $1-2\delta$.
	(1) (Strong convex $F$) Under Assumption 1(a), with $\eta\leq 1/L$,
	\begin{eqnarray}
		\norm{\mathbf{w}_t-\mathbf{w}^*}\leq (1-\rho)^t \norm{\mathbf{w}_0-\mathbf{w}^*}+\frac{2\Delta_C}{\mu},
	\end{eqnarray}
	in which $\rho = \eta\mu/2$;
	
	(2) (General convex $F$) Under Assumption 1(b), with $\eta=1/L$, after $t_m=(L/\Delta_C)\norm{\mathbf{w}_0-\mathbf{w}^*}_2$ steps,
	\begin{eqnarray}
		F(\mathbf{w}_{t_m})-F(\mathbf{w}^*)\leq 16\norm{\mathbf{w}_0-\mathbf{w}^*}\Delta_C;
	\end{eqnarray}
	
	(3) (Non-convex $F$) Under Assumption 1(c), with $\eta = 1/L$, after
	$t_m=(2L/\Delta_C^2)(F(\mathbf{w}_0) - F(\mathbf{w}^*))$ steps, we have
	\begin{eqnarray}
		\underset{t=0,1,\ldots, t_m}{\min}\norm{\nabla F(\mathbf{w}_t)}\leq \sqrt{2}\Delta_C,
	\end{eqnarray}
	in which
	\begin{eqnarray}
		\Delta_C &=& \frac{\epsilon}{1-\epsilon}(T_0+2r_\mu)+\frac{\epsilon(M+2R)}{1-\epsilon}\sqrt{\frac{m}{N}} + \Delta_0+\Delta_\mu.
	\end{eqnarray}
\end{theorem}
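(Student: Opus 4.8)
The plan is to mirror the structure of the unbalanced i.i.d.\ argument (the proof of Theorem~\ref{thm:unbalanced} via Lemmas~\ref{lem:diff3} and \ref{lem:diff4}), with one essential new ingredient: under Assumption~\ref{ass:niid} each benign client's gradient deviates from $\nabla F(\mathbf{w})$ through \emph{two} independent sources. Writing $\mathbf{G}_i(\mathbf{w}) - \nabla F(\mathbf{w}) = (\mathbf{G}_i(\mathbf{w}) - \mu_i(\mathbf{w})) + (\mu_i(\mathbf{w}) - \nabla F(\mathbf{w}))$, the first term is the within-client sampling error that decays like $r_{0i}\sim \sigma/\sqrt{n_i}$, while the second is the between-client heterogeneity, which does \emph{not} vanish as $n_i$ grows and is of order $r_\mu \sim \sigma_\mu \ln(N/\delta)$ from \eqref{eq:rmu}. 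The whole difficulty is to carry this non-vanishing floor $r_\mu$ through the perturbation analysis, which is exactly what the inflated lower bound $T_0 \gtrsim r_\mu/(1-2\epsilon) + \epsilon(M+2R)\sqrt{m/N}/(1-2\epsilon)$ is designed to absorb.

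First I would prove a two-part high-probability lemma analogous to Lemma~\ref{lem:hpb2}. For part (1), I condition on $\mu_i(\mathbf{w})$: Assumption~\ref{ass:niid}(b) makes $\nabla f(\mathbf{w},\mathbf{Z}_{ij}) - \mu_i(\mathbf{w})$ sub-exponential with parameter $\sigma$, so averaging over $n_i$ conditionally i.i.d.\ samples, applying a $1/2$-covering of the unit sphere and an $l$-covering of $\mathcal{W}$, and taking a union bound over clients gives $\norm{\mathbf{G}_i(\mathbf{w}) - \mu_i(\mathbf{w})}\le r_{0i}$ uniformly, exactly as in the proof of Lemma~\ref{lem:hpb}. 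Assumption~\ref{ass:niid}(a) then gives $\norm{\mu_i(\mathbf{w}) - \nabla F(\mathbf{w})}\le r_\mu$ uniformly by the same sub-exponential concentration. For part (2), I split the weighted average as $\frac1N\sum_i n_i(\mathbf{G}_i - \mu_i) + \frac1N\sum_i n_i(\mu_i - \nabla F)$: the first sum is an average of all $N$ conditionally independent sample gradients and concentrates to $\Delta_0$ as in \eqref{eq:delta0}, while the second is a weighted sum of $m$ i.i.d.\ sub-exponential vectors whose variance proxy is $\sigma_\mu^2 \sum_i n_i^2 / N^2$, yielding the bound $\Delta_\mu$ of \eqref{eq:deltamu}. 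Since the two events each hold with probability $1-\delta$, a union bound accounts for the $1-2\delta$ in the statement.

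Next I would run the perturbation argument for $\norm{g(\mathbf{w}) - a(\mathbf{w})}$, paralleling Lemma~\ref{lem:diff3}, where $a(\mathbf{w})=\arg\min_{\mathbf{s}} \sum_i n_i \phi_i(\norm{\mathbf{s}-\mathbf{G}_i(\mathbf{w})})$ and $h(\mathbf{s},\mathbf{w})=\sum_i n_i \phi_i(\norm{\mathbf{s}-\mathbf{X}_i(\mathbf{w})})$. Since $\nabla_s h(g(\mathbf{w}),\mathbf{w})=\mathbf{0}$, I would (i) upper bound $\norm{\nabla_s h(a(\mathbf{w}),\mathbf{w})}$: the benign terms cancel against the stationarity of $a$, leaving only the Byzantine contribution $\sum_{i\in\mathcal{B}} n_i(T_i + 2r_{0i} + 2r_\mu)$, where the extra $2r_\mu$ compared with the i.i.d.\ case is precisely the heterogeneity spread of the benign gradients around $a$; and (ii) lower bound the curvature $\nabla_s^2 h(\mathbf{s},\mathbf{w})\succeq N(1-\epsilon)\mathbf{I}$ on the ball $B(a(\mathbf{w}), T_0 - 2r_\mu)$, which holds because for $\mathbf{s}$ in this ball every benign client satisfies $\norm{\mathbf{s}-\mathbf{G}_i(\mathbf{w})}\le (T_0-2r_\mu) + 2(r_{0i}+r_\mu)\le T_i$, using $M\ge 2R$ so that $2r_{0i}\le M/\sqrt{n_i}$. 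Combining these with $\sum_{i\in\mathcal{B}} n_i \le \epsilon N$ and $\sum_{i\in\mathcal{B}}\sqrt{n_i}\le \epsilon\sqrt{mN}$ (Cauchy--Schwarz, as in the proof of Lemma~\ref{lem:diff3}) gives $\norm{g-a}\le \frac{\epsilon}{1-\epsilon}(T_0 + 2r_\mu) + \frac{\epsilon(M+2R)}{1-\epsilon}\sqrt{m/N}$, and the lower bound on $T_0$ is exactly what makes the minimum arising in the curvature step resolve to $\norm{g-a}$ rather than the radius.

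Finally, the assembly follows the i.i.d.\ template: since part (1) forces $\norm{a - \mathbf{G}_i}<T_i$ for all benign $i$, the benign clients all sit in the quadratic region, so $a(\mathbf{w})$ coincides with the weighted mean $\frac1N\sum_i n_i\mathbf{G}_i(\mathbf{w})$, and part (2) yields $\norm{a - \nabla F}\le \Delta_0 + \Delta_\mu$ (the analogue of Lemma~\ref{lem:diff4}). A triangle inequality then gives $\norm{g(\mathbf{w}) - \nabla F(\mathbf{w})}\le \Delta_C$ uniformly over $\mathbf{w}\in\mathcal{W}$, and feeding $\Delta=\Delta_C$ into Lemmas~\ref{lem:strong}, \ref{lem:convex} and \ref{lem:nonconvex} delivers the three convergence statements. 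I expect the main obstacle to be the clean separation of the two noise scales in the first step --- in particular obtaining the correct variance proxy $\sigma_\mu^2\sum_i n_i^2/N^2$ for $\Delta_\mu$, and verifying that the threshold inflation by $2r_\mu$ simultaneously preserves the curvature region in step three and keeps the distance $\norm{a-\mathbf{G}_i}$ within $T_i$ despite the non-decaying heterogeneity term.
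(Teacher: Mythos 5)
Your proposal is correct and follows essentially the same route as the paper's own proof: the same four concentration events (you merge the paper's Lemma~\ref{lem:hpb3} into two parts, but with identical content, including the $\sigma_\mu^2\sum_i n_i^2/N^2$ variance proxy for $\Delta_\mu$), the same stationarity/gradient-bound/Hessian argument on the ball $B(a(\mathbf{w}),T_0-2r_\mu)$ giving Lemma~\ref{lem:diff5}, the same identification of $a(\mathbf{w})$ with the weighted mean giving $\Delta_0+\Delta_\mu$, and the same final assembly through Lemmas~\ref{lem:strong}, \ref{lem:convex} and \ref{lem:nonconvex}. The only nitpick is that the quadratic-region step needs $\norm{a(\mathbf{w})-\mathbf{G}_i(\mathbf{w})}<T_i$ for \emph{all} $i\in[m]$ (not just benign $i$), which your concentration lemma in fact supplies since it is stated uniformly over all clients.
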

Similar to Lemma \ref{lem:hpb} and \ref{lem:hpb2}, we show the following lemmas for non i.i.d case.
\begin{lem}\label{lem:hpb3}
	With probability at least $1-2\delta$, the following inequalities hold:
	
	(1) For all $i\in [m]$ and $\mathbf{w}\in \mathcal{W}$,
	\begin{eqnarray}
		\norm{\mathbf{G}_i(\mathbf{w}) - \mu_i(\mathbf{w})}\leq r_{0i};
		\label{eq:hpb3-1}
	\end{eqnarray}
	
	(2) For all $\mathbf{w}\in \mathcal{W}$,
	\begin{eqnarray}
		\norm{\frac{1}{N}\sum_{i=1}^m n_i\mathbf{G}_i(\mathbf{w})-\frac{1}{N}\sum_{i=1}^m n_i\mu_i(\mathbf{w})}\leq \Delta_0;
		\label{eq:hpb3-2}
	\end{eqnarray}
	
	(3) For all $i\in [m]$ and $\mathbf{w}\in \mathcal{W}$,
	\begin{eqnarray}
		\norm{\mu_i(\mathbf{w})-\nabla F(\mathbf{w})}\leq r_\mu,
		\label{eq:hpb3-3}
	\end{eqnarray}
	in which $r_\mu$ is defined in \eqref{eq:rmu};
	
	(4) For all $\mathbf{w}\in \mathcal{W}$,
	\begin{eqnarray}
		\norm{\frac{1}{N}\sum_{i=1}^m n_i\mu_i(\mathbf{w}) - \nabla F(\mathbf{w})}\leq \Delta_\mu,
		\label{eq:hpb3-4}
	\end{eqnarray}
	in which $\Delta_\mu$ is defined in \eqref{eq:deltamu}.
\end{lem}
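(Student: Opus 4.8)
The plan is to replicate the concentration argument behind Lemma \ref{lem:hpb} and Lemma \ref{lem:hpb2}, but to exploit the two-level hierarchical structure of Assumption \ref{ass:niid}. The four inequalities split into two essentially independent pairs: the within-client sampling error, governed by $\sigma$, is controlled by \eqref{eq:hpb3-1} and \eqref{eq:hpb3-2}; the between-client heterogeneity, governed by $\sigma_\mu$, is controlled by \eqref{eq:hpb3-3} and \eqref{eq:hpb3-4}. I would allocate a failure budget of $\delta$ to each pair, so that a union bound delivers the overall $1-2\delta$ guarantee. As in the earlier proofs, each individual bound is first established on an $l$-covering $\{\mathbf{w}^k\}$ of $\mathcal{W}$ with $l=1/(NL)$ and $N_c(l)\le C_W(NL)^d$, then lifted to all $\mathbf{w}\in\mathcal{W}$ through the Lipschitz estimate in \eqref{eq:togrid}, which is what produces the uniform $+2/N$ discretization term appearing in $r_{0i}$, $r_\mu$, $\Delta_0$ and $\Delta_\mu$.

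For \eqref{eq:hpb3-1} and \eqref{eq:hpb3-2} I would condition on the realizations of the client means $\mu_i(\mathbf{w})$. Under Assumption \ref{ass:niid}(b), conditional on $\mu_i(\mathbf{w})$ the residuals $\nabla f(\mathbf{w},\mathbf{Z}_{ij})-\mu_i(\mathbf{w})$ are independent, mean-zero and $\sigma$-sub-exponential, so the moment-generating-function computation \eqref{eq:Gsubexp} and the sphere-covering tail \eqref{eq:largeprob} carry over verbatim with $\nabla F(\mathbf{w})$ replaced by $\mu_i(\mathbf{w})$ and $n$ replaced by $n_i$; a union over the $m$ clients and the grid yields \eqref{eq:hpb3-1} with threshold $r_{0i}$. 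For \eqref{eq:hpb3-2} I observe that $\frac{1}{N}\sum_i n_i\mathbf{G}_i(\mathbf{w})=\frac{1}{N}\sum_{i,j}\nabla f(\mathbf{w},\mathbf{Z}_{ij})$, so the quantity is the empirical mean of all $N$ conditionally-independent $\sigma$-sub-exponential residuals; this gives effective sample size $N$ and hence $\Delta_0$, with a union over the grid only (no factor $m$). Since both statements hold conditionally for every realization of the $\mu_i$, they hold unconditionally.

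For \eqref{eq:hpb3-3} and \eqref{eq:hpb3-4} only the client means enter. Bound \eqref{eq:hpb3-3} is a tail estimate for a \emph{single} $\sigma_\mu$-sub-exponential vector $\mu_i(\mathbf{w})-\nabla F(\mathbf{w})$ (Assumption \ref{ass:niid}(a)); the same sphere-covering Chernoff argument applies, except that for one variable only the linear tail regime survives, giving the pure $r_\mu\sim\sigma_\mu\ln(\cdots)$ scaling after a union over the $m$ clients and the grid. Bound \eqref{eq:hpb3-4} concerns the weighted average $\frac{1}{N}\sum_i n_i(\mu_i(\mathbf{w})-\nabla F(\mathbf{w}))$ of $m$ independent $\sigma_\mu$-sub-exponential vectors with weights $n_i/N$; because the MGF factorizes, the sum is sub-exponential with variance proxy $\sigma_\mu^2\sum_i n_i^2/N^2$ and linear-regime threshold $N/(\sigma_\mu n_{\max})$, which reproduces exactly the two branches of $\Delta_\mu$ in \eqref{eq:deltamu}, again with a union over the grid only.

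The routine pieces — the sphere covering, the Chernoff optimization, and the discretization — are direct transcriptions of Lemma \ref{lem:hpb}. I expect the only genuinely delicate point to be the variance bookkeeping in \eqref{eq:hpb3-4}: one must keep the per-client weight $n_i/N$ inside the MGF product so that the variance proxy becomes $\sigma_\mu^2\sum_i n_i^2/N^2$ rather than the naive $\sigma_\mu^2/N$, and correctly track the constraint $|\lambda|\le N/(\sigma_\mu n_{\max})$ that forces the $n_{\max}$-dependent linear branch. It is precisely this $\sigma_\mu\sqrt{\sum_i n_i^2}/N$ heterogeneity term that later surfaces in $\Delta_C$, so getting its dependence on the unbalanced sizes right is the crux; every other estimate follows the i.i.d.\ template.
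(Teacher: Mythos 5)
Your proposal matches the paper's own proof essentially step for step: conditioning on the client means to handle \eqref{eq:hpb3-1} and \eqref{eq:hpb3-2} via the $\sigma$-sub-exponential within-client residuals (with effective sample size $N$ and no factor $m$ in the union for the averaged bound), treating \eqref{eq:hpb3-3} as a single-vector tail where the linear regime dominates, and deriving \eqref{eq:hpb3-4} from the factorized MGF with variance proxy $\sigma_\mu^2\sum_{i=1}^m n_i^2/N^2$ and the constraint $|\lambda|\le N/(n_{\max}\sigma_\mu)$ producing the two branches of $\Delta_\mu$, all established on the $1/(NL)$-grid and lifted by the Lipschitz argument of \eqref{eq:togrid}. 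Your failure-probability bookkeeping ($\delta/2$ per inequality, hence $1-2\delta$ overall) is also exactly the paper's.
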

\begin{proof}
	The proof is shown in Section \ref{sec:hpb3}.
\end{proof}
\begin{lem}\label{lem:diff5}
	Under the condition that \eqref{eq:hpb3-1}, \eqref{eq:hpb3-2}, \eqref{eq:hpb3-3} and \eqref{eq:hpb3-4} hold uniformly for all $i$ and $\mathbf{w}\in \mathcal{W}$, if $T_i=T_0+M/\sqrt{n_i}$, with $M\geq 2R$, in which $R$ is defined in \eqref{eq:Rdef}, and 
	\begin{eqnarray}
		T_0\geq \frac{2r_\mu}{1-2\epsilon} + \frac{\epsilon(M+2R)}{1-2\epsilon}\sqrt{\frac{m}{N}},
		\label{eq:t0}
	\end{eqnarray}
	then for all $\mathbf{w}\in \mathcal{W}$, 
	\begin{eqnarray}
		\norm{g(\mathbf{w}) - a(\mathbf{w})}\leq \frac{\epsilon}{1-\epsilon} (T_0+2r_\mu) + \frac{\epsilon(M+2R)}{1-\epsilon}\sqrt{\frac{m}{N}}.
	\end{eqnarray}
\end{lem}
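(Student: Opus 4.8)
The plan is to mirror the proof of Lemma \ref{lem:diff3} in Section \ref{sec:diff3}, inserting the heterogeneity radius $r_\mu$ at the three places where the client means now enter. First I would combine the concentration bounds \eqref{eq:hpb3-1} and \eqref{eq:hpb3-3}: by the triangle inequality every clean gradient satisfies $\norm{\mathbf{G}_i(\mathbf{w}) - \nabla F(\mathbf{w})} \leq r_{0i} + r_\mu$ for all $i$ (including Byzantine clients, whose clean gradient $\mathbf{G}_i(\mathbf{w})$ is still well defined), replacing the bound $\norm{\mathbf{G}_i(\mathbf{w}) - \nabla F(\mathbf{w})} \leq r_{0i}$ used in the i.i.d.\ case. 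Since $a(\mathbf{w})$ lies in the convex hull of $\{\mathbf{G}_{i'}(\mathbf{w})\}$, the analog of Lemma \ref{lem:dist} then gives $\norm{a(\mathbf{w}) - \mathbf{G}_i(\mathbf{w})} \leq 2(r_{0i} + r_\mu)$.

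Next I would set up $h(\mathbf{s}, \mathbf{w}) = \sum_{i=1}^m n_i\phi_i(\norm{\mathbf{s} - \mathbf{X}_i(\mathbf{w})})$ and use $\nabla_s h(g(\mathbf{w}), \mathbf{w}) = \mathbf{0}$. Because $\mathbf{X}_i(\mathbf{w}) = \mathbf{G}_i(\mathbf{w})$ for benign clients and $a(\mathbf{w})$ is the stationary point of the clean objective $\sum_i n_i\phi_i(\norm{\mathbf{s} - \mathbf{G}_i(\mathbf{w})})$, the benign terms cancel and only Byzantine clients survive in $\nabla_s h(a(\mathbf{w}), \mathbf{w})$. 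Bounding each Huber gradient on corrupt data by $T_i$ and each clean-gradient term by $\norm{a(\mathbf{w}) - \mathbf{G}_i(\mathbf{w})} \leq 2(r_{0i}+r_\mu)$ yields $\norm{\nabla_s h(a(\mathbf{w}), \mathbf{w})} \leq \sum_{i\in\mathcal{B}} n_i(T_i + 2(r_{0i}+r_\mu))$, which is exactly the analog of \eqref{eq:ha} with $2r_{0i}$ replaced by $2(r_{0i}+r_\mu)$.

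The key structural change is the Hessian region. I would show that for $\mathbf{s} \in B(a(\mathbf{w}), T_0 - 2r_\mu)$ and benign $i$, the bound $\norm{\mathbf{s} - \mathbf{G}_i(\mathbf{w})} \leq (T_0 - 2r_\mu) + 2(r_{0i}+r_\mu) = T_0 + 2r_{0i} \leq T_0 + M/\sqrt{n_i} = T_i$ (using $M \geq 2R$ and $r_{0i} \leq R/\sqrt{n_i}$ for $n_i \geq n_0$) keeps every benign client in the quadratic region, so $\nabla_s^2 h \succeq N(1-\epsilon)\mathbf{I}$ on this shrunk ball. Strong convexity then gives $\min\{\norm{g(\mathbf{w}) - a(\mathbf{w})}, T_0 - 2r_\mu\} \leq \sum_{i\in\mathcal{B}} n_i(T_i + 2(r_{0i}+r_\mu))/(N(1-\epsilon))$. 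Using $\sum_{i\in\mathcal{B}} n_i \leq \epsilon N$, the Cauchy--Schwarz bound $\sum_{i\in\mathcal{B}}\sqrt{n_i} \leq \epsilon\sqrt{mN}$, and $T_i = T_0 + M/\sqrt{n_i}$, the right-hand side simplifies to $\frac{\epsilon}{1-\epsilon}(T_0 + 2r_\mu) + \frac{\epsilon(M+2R)}{1-\epsilon}\sqrt{m/N}$.

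The main obstacle is closing the loop between the shrunken radius $T_0 - 2r_\mu$ and the target bound: I must verify that the simplified right-hand side is itself at most $T_0 - 2r_\mu$, so that the minimum on the left is attained by $\norm{g(\mathbf{w}) - a(\mathbf{w})}$ rather than by the radius. A short computation shows this inequality is equivalent to $\epsilon(M+2R)\sqrt{m/N} + 2r_\mu \leq (1-2\epsilon)T_0$, which is precisely the strengthened hypothesis \eqref{eq:t0}; the extra summand $2r_\mu/(1-2\epsilon)$ in the lower bound on $T_0$ (absent in Lemma \ref{lem:diff3}) is exactly what absorbs the heterogeneity. This consistency check, together with the careful placement of $r_\mu$ in both the gradient bound and the Hessian region, is where all the care is required; the remaining steps are routine translations of Section \ref{sec:diff3}.
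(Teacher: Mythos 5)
Your proposal is correct and follows essentially the same route as the paper's proof: the same stationarity/gradient bound $\norm{\nabla_s h(a(\mathbf{w}),\mathbf{w})}\leq \sum_{i\in\mathcal{B}} n_i(T_i+2r_{0i}+2r_\mu)$, the same shrunken Hessian ball of radius $r_s = T_0-2r_\mu$ on which $\nabla_s^2 h \succeq N(1-\epsilon)\mathbf{I}$, and the same Cauchy--Schwarz simplification followed by the check that the resulting bound does not exceed $r_s$, which is exactly where the strengthened condition \eqref{eq:t0} enters. Your closing consistency computation, showing the inequality reduces to $\epsilon(M+2R)\sqrt{m/N}+2r_\mu\leq(1-2\epsilon)T_0$, matches the paper's final step precisely.
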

\begin{proof}
	The proof is shown in Section \ref{sec:diff5}.
\end{proof}
\begin{lem}\label{lem:diff6}
	Under the same conditions as Lemma \ref{lem:diff5}, for all $\mathbf{w}\in \mathcal{W}$,
	\begin{eqnarray}
		\norm{a(\mathbf{w}) - \nabla F(\mathbf{w})}\leq \Delta_0+\Delta_\mu,
	\end{eqnarray}
	in which $\Delta_0$, $\Delta_\mu$ are defined in \eqref{eq:delta0} and \eqref{eq:deltamu}, respectively.
\end{lem}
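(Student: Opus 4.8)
The plan is to mirror the i.i.d.\ arguments (Lemma \ref{lem:diff2} and its unbalanced counterpart Lemma \ref{lem:diff4}): I will show that at the oracle minimizer $a(\mathbf{w})$ every Huber term lies in its quadratic branch, so that $a(\mathbf{w})$ reduces to the $n_i$-weighted sample mean $\frac{1}{N}\sum_{i=1}^m n_i\mathbf{G}_i(\mathbf{w})$, and then split the error against $\nabla F(\mathbf{w})$ into a within-client part handled by $\Delta_0$ and a between-client (heterogeneity) part handled by $\Delta_\mu$. Since the hypotheses are those of Lemma \ref{lem:diff5}, I may use all four concentration bounds \eqref{eq:hpb3-1}--\eqref{eq:hpb3-4}, the schedule $T_i=T_0+M/\sqrt{n_i}$ with $M\ge 2R$, and the lower bound on $T_0$ in \eqref{eq:t0}.

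First I would write the first-order condition. Since $\nabla_s\phi_i(\norm{\mathbf{s}-\mathbf{G}_i(\mathbf{w})})=\min\{1,T_i/\norm{\mathbf{s}-\mathbf{G}_i(\mathbf{w})}\}(\mathbf{s}-\mathbf{G}_i(\mathbf{w}))$, vanishing of $\sum_i n_i\nabla_s\phi_i$ at $a(\mathbf{w})$ exhibits $a(\mathbf{w})$ as a convex combination of the $\mathbf{G}_i(\mathbf{w})$ with weights proportional to $n_i\min\{1,T_i/\norm{a(\mathbf{w})-\mathbf{G}_i(\mathbf{w})}\}$, so $a(\mathbf{w})$ lies in their convex hull, exactly as in Lemma \ref{lem:dist}. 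Next I would bound the per-client distance: chaining \eqref{eq:hpb3-1} and \eqref{eq:hpb3-3} gives $\norm{\mathbf{G}_j(\mathbf{w})-\nabla F(\mathbf{w})}\le r_{0j}+r_\mu$ for every $j$, and the convex-hull diameter estimate then yields the analogue of Lemma \ref{lem:dist}, namely $\norm{a(\mathbf{w})-\mathbf{G}_i(\mathbf{w})}\le 2r_{0i}+2r_\mu$.

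The decisive check is that this distance stays under the adaptive threshold. As in the proof of Lemma \ref{lem:diff3}, for $n_i\ge n_0$ one has $r_{0i}=R/\sqrt{n_i}$, so $M\ge 2R$ gives $2r_{0i}\le M/\sqrt{n_i}$; and since $\epsilon<1/2$, the bound \eqref{eq:t0} forces $T_0\ge 2r_\mu/(1-2\epsilon)\ge 2r_\mu$, hence $2r_\mu\le T_0$. Adding these, $2r_{0i}+2r_\mu\le M/\sqrt{n_i}+T_0=T_i$ for every $i$, so each Huber term is quadratic at $a(\mathbf{w})$ and the minimizer of $\sum_i n_i\phi_i$ agrees with that of $\sum_i \tfrac{n_i}{2}\norm{\mathbf{s}-\mathbf{G}_i(\mathbf{w})}^2$, i.e.\ $a(\mathbf{w})=\frac{1}{N}\sum_{i=1}^m n_i\mathbf{G}_i(\mathbf{w})$. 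Inserting the weighted mean of the $\mu_i(\mathbf{w})$ as an intermediate point then gives
\begin{eqnarray}
	\norm{a(\mathbf{w})-\nabla F(\mathbf{w})} &\leq& \norm{\frac{1}{N}\sum_{i=1}^m n_i\mathbf{G}_i(\mathbf{w})-\frac{1}{N}\sum_{i=1}^m n_i\mu_i(\mathbf{w})}\nonumber\\
	&&+\,\norm{\frac{1}{N}\sum_{i=1}^m n_i\mu_i(\mathbf{w})-\nabla F(\mathbf{w})},
\end{eqnarray}
and bounding the first term by $\Delta_0$ via \eqref{eq:hpb3-2} and the second by $\Delta_\mu$ via \eqref{eq:hpb3-4} completes the argument.

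I expect the main obstacle to be the quadratic-regime verification: relative to the i.i.d.\ case the per-client distance now carries the extra heterogeneity term $r_\mu$, and one must confirm that $T_i$ simultaneously absorbs the within-client fluctuation $M/\sqrt{n_i}$ and the between-client spread $2r_\mu\le T_0$. This is exactly why the threshold is split as $T_0+M/\sqrt{n_i}$ and why $T_0$ is taken at least $2r_\mu/(1-2\epsilon)$. The delicate point is clients with small $n_i$, for which $r_{0i}$ is largest; here the convex-hull diameter step must be handled so that the distance of $a(\mathbf{w})$ to such a client is charged to that client's own (large) threshold $T_i$ rather than to the smallest threshold present in the system.
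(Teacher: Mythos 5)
Your overall route is the same as the paper's: reduce $a(\mathbf{w})$ to the weighted mean $\bar{\mathbf{G}}(\mathbf{w})=\frac{1}{N}\sum_{i=1}^m n_i\mathbf{G}_i(\mathbf{w})$ by showing every Huber term sits in its quadratic branch, then apply the two-term triangle inequality through $\frac{1}{N}\sum_{i=1}^m n_i\mu_i(\mathbf{w})$, bounding the pieces by $\Delta_0$ and $\Delta_\mu$ via \eqref{eq:hpb3-2} and \eqref{eq:hpb3-4}; that final decomposition is literally the paper's proof. The gap is the step you yourself flag as ``delicate'' and then assert anyway: the claim $\norm{a(\mathbf{w})-\mathbf{G}_i(\mathbf{w})}\le 2r_{0i}+2r_\mu$ does not follow from the convex-hull argument. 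What Lemma \ref{lem:dist}'s reasoning actually gives is
\begin{eqnarray}
	\norm{a(\mathbf{w})-\mathbf{G}_i(\mathbf{w})}\leq \underset{i'\in[m]}{\max}\norm{\mathbf{G}_{i'}(\mathbf{w})-\mathbf{G}_i(\mathbf{w})}\leq \left(r_{0i}+r_\mu\right)+\underset{i'\in[m]}{\max}\left(r_{0i'}+r_\mu\right),
\end{eqnarray}
and $\max_{i'}r_{0i'}$ is the radius of the most data-poor client, not of client $i$. In the balanced case all $r_{0i}$ coincide, which is exactly why Lemma \ref{lem:dist} suffices for Lemma \ref{lem:diff2}; here it does not. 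Concretely, if half the clients hold one sample and half hold $n\gg 1$ samples, then $\max_{i'}r_{0i'}\gtrsim \sigma d\ln(N/\delta)$, while a data-rich client has threshold $T_i=T_0+M/\sqrt{n}$ and \eqref{eq:t0} only forces $T_0\gtrsim r_\mu+\epsilon(M+2R)\sqrt{m/N}$; nothing in the hypotheses makes $T_i$ exceed $\max_{i'}r_{0i'}$, so your quadratic-regime verification collapses precisely for the clients with large $n_i$.

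The repair is to run the argument at the weighted mean rather than at $a(\mathbf{w})$. By \eqref{eq:hpb3-2} and \eqref{eq:hpb3-4}, $\norm{\bar{\mathbf{G}}(\mathbf{w})-\nabla F(\mathbf{w})}\leq \Delta_0+\Delta_\mu$, and by \eqref{eq:hpb3-1}, \eqref{eq:hpb3-3}, $\norm{\mathbf{G}_i(\mathbf{w})-\nabla F(\mathbf{w})}\leq r_{0i}+r_\mu$, so $\norm{\bar{\mathbf{G}}(\mathbf{w})-\mathbf{G}_i(\mathbf{w})}\leq (\Delta_0+\Delta_\mu+r_\mu)+r_{0i}$. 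Now each part is charged to the matching piece of $T_i$: $\Delta_\mu+r_\mu\leq 2r_\mu\leq T_0$ by \eqref{eq:t0}, while $\Delta_0\lesssim R/\sqrt{N}\leq R/\sqrt{n_i}$ and $r_{0i}\approx R/\sqrt{n_i}$, so $\Delta_0+r_{0i}\lesssim 2R/\sqrt{n_i}\leq M/\sqrt{n_i}$ using $M\geq 2R$ (up to the paper's usual $O(1/N)$ slack, and for $n_i\geq n_0$ as in the proof of Lemma \ref{lem:diff3}). Hence $\norm{\bar{\mathbf{G}}(\mathbf{w})-\mathbf{G}_i(\mathbf{w})}\leq T_i$ for \emph{every} client, including the data-rich ones. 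Then all $\phi_i$ are quadratic in a neighborhood of $\bar{\mathbf{G}}(\mathbf{w})$, the gradient $\sum_i n_i\nabla_s\phi_i$ vanishes there, and convexity of the Huber objective (with local strong convexity giving uniqueness) yields $a(\mathbf{w})=\bar{\mathbf{G}}(\mathbf{w})$; the rest of your proof then goes through verbatim and coincides with the paper's display. For what it is worth, the paper's own proof of this lemma silently asserts the reduction to the weighted mean (its first inequality) without any verification, so your instinct to check it was right — but the check must proceed from the weighted mean outward, not from the convex hull of the $\mathbf{G}_i$'s inward.
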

Define 
\begin{eqnarray}
	\Delta_C = \frac{\epsilon}{1-\epsilon}(T_0+2r_\mu) + \frac{\epsilon(M+2R)}{1-\epsilon}\sqrt{\frac{m}{N}}+\Delta_0+\Delta_\mu.
\end{eqnarray}
Then for all $\mathbf{w}\in \mathcal{W}$,
\begin{eqnarray}
	\norm{g(\mathbf{w}) - \nabla F(\mathbf{w})}\leq \Delta_C.
\end{eqnarray}
Theorem \ref{thm:niid_app} can then be proved using Lemma \ref{lem:strong}, \ref{lem:convex} and \ref{lem:nonconvex} with $\Delta=\Delta_C$.
\subsection{Proof of Lemma \ref{lem:hpb3}}\label{sec:hpb3}
Similar to Lemma \ref{lem:hpb2}, (1) in Lemma \ref{lem:hpb3} holds, i.e. \eqref{eq:hpb3-1} holds for all $i$ and $\mathbf{w}\in \mathcal{W}$. Now prove (2)-(4) separately.

\textbf{Proof of (2) in Lemma \ref{lem:hpb3}}. For any $\mathbf{w}\in \mathcal{W}$,
\begin{eqnarray}
	\mathbb{E}\left[e^{\lambda \mathbf{v}^T (\mathbf{G}_i(\mathbf{w}) - \mu_i(\mathbf{w}))}|\mu_i(\mathbf{w})\right]=\mathbb{E}\left[\exp\left[\frac{1}{n}\lambda \mathbf{v}^T\sum_{j=1}^{n_i}(\nabla f(\mathbf{w}, \mathbf{Z}_{ij}) - \mu_i(\mathbf{w}))\right]|\mu_i(\mathbf{w})\right].
\end{eqnarray}
Hence
\begin{eqnarray}
	&&\mathbb{E}\left[\exp\left[\lambda \mathbf{v}^T \left(\frac{1}{N}\sum_{i=1}^m n_i\mathbf{G}_i(\mathbf{w}) - \frac{1}{N}\sum_{i=1}^m n_i\mu_i(\mathbf{w})\right)\right]|\mu_1(\mathbf{w}), \ldots, \mu_m(\mathbf{w})\right]\nonumber\\
	&=& \mathbb{E}\left[\exp\left[\frac{1}{N}\lambda \mathbf{v}^T \sum_{i=1}^m \sum_{j=1}^{n_i}(\nabla f(\mathbf{w}, \mathbf{Z}_{ij}) - \mu_i(\mathbf{w}))\right]|\mu_1(\mathbf{w}), \ldots, \mu_m(\mathbf{w})\right]\nonumber\\
	&=&\Pi_{i=1}^m \Pi_{j=1}^{n_i}\mathbb{E}\left[\exp\left[\frac{1}{N}\lambda\mathbf{v}^T(\nabla f(\mathbf{w}, \mathbf{Z}_{ij})-\mu_i(\mathbf{w}))\right]|\mu_i(\mathbf{w})\right]\nonumber\\
	&=&\Pi_{i=1}^m \Pi_{j=1}^{n_i}e^{\frac{1}{2N^2}\sigma^2 \lambda^2} \text{ if } |\lambda|\leq \frac{N}{\sigma}\nonumber\\
	&=& e^{\frac{1}{2N}\sigma^2 \lambda^2} \text{ if } |\lambda|\leq \frac{N}{\sigma}.
\end{eqnarray}
Therefore, by Chernoff bound,
\begin{eqnarray}
	\text{P}\left(\mathbf{v}^T \left(\frac{1}{N}\sum_{i=1}^m n_i\mathbf{G}_i(\mathbf{w}) - \frac{1}{N}\sum_{i=1}^m n_i\mu_i(\mathbf{w})\right) > u\right)&\leq & \underset{0\leq \lambda\leq N/\sigma}{\inf} e^{-\lambda u}e^{\frac{1}{2N}\sigma^2 \lambda^2}\nonumber\\
	&\leq & e^{-\frac{N}{2}\min\left\{\frac{u^2}{\sigma^2}, \frac{u}{\sigma}\right\}}.
\end{eqnarray}
Similar to \eqref{eq:largeprob},
\begin{eqnarray}
	\text{P}\left(\norm{\frac{1}{N}\sum_{i=1}^m n_i\mathbf{G}_i(\mathbf{w})-\frac{1}{N}\sum_{i=1}^m n_i\mu_i(\mathbf{w})} > u\right)\leq 6^d e^{-\frac{N}{2}\min\left\{\frac{u^2}{4\sigma^2}, \frac{u}{2\sigma} \right\}}.
\end{eqnarray}
According to \eqref{eq:delta0},
\begin{eqnarray}
	\text{P}\left(\exists k\in [N_c(l)], \norm{\frac{1}{N}\sum_{i=1}^m n_i\mathbf{G}_i(\mathbf{w}^k)-\frac{1}{N}\sum_{i=1}^m n_i\mu_i(\mathbf{w}^k)}>\Delta_0-\frac{2}{N}\right)\leq \frac{\delta}{2}.
\end{eqnarray}
Following steps similar to \eqref{eq:togrid}, the corresponding bound for all $\mathbf{w}\in \mathcal{W}$ is
\begin{eqnarray}
	\text{P}\left(\exists \mathbf{w}\in \mathcal{W}, \norm{\frac{1}{N}\sum_{i=1}^m n_i\mathbf{G}_i(\mathbf{w})-\frac{1}{N}\sum_{i=1}^m n_i\mu_i(\mathbf{w})}>\Delta_0-\frac{2}{N}\right)\leq \frac{\delta}{2}.	
\end{eqnarray}
The proof of (2) in Lemma \ref{lem:hpb3} is complete.

\textbf{Proof of (3) in Lemma \ref{lem:hpb3}}. Since $\mu_i(\mathbf{w})$ is sub-exponential around $\nabla F(\mathbf{w})$ with parameter $\sigma_\mu$, using similar steps as the proof of (2) in Lemma \ref{lem:hpb3}, it can be shown that
\begin{eqnarray}
	\text{P}\left(\norm{\mu_i(\mathbf{w}) - \nabla F(\mathbf{w})}> u\right)\leq 6^d e^{-\frac{1}{2}\min\left\{\frac{u^2}{4\sigma_\mu^2}, \frac{u}{2\sigma_\mu} \right\}}.
\end{eqnarray}
Therefore,
\begin{eqnarray}
	\text{P}(\exists i\in [m], \exists k \in [N_c(l)], \norm{\mu_i(\mathbf{w}) - \nabla F(\mathbf{w})}> u)\leq 6^d m C_W(NL)^de^{-\frac{1}{2}\min\left\{\frac{u^2}{4\sigma_\mu^2}, \frac{u}{2\sigma_\mu} \right\}}.
\end{eqnarray}
From the definition of $r_\mu$ in \eqref{eq:rmu}, 
\begin{eqnarray}
	\text{P}(\exists i\in [m], \exists k \in [N_c(l)], \norm{\mu_i(\mathbf{w}^k) - \nabla F(\mathbf{w}^k)}> r_\mu)\leq \frac{\delta}{2}.	
\end{eqnarray}
The corresponding bound for all $\mathbf{w}\in \mathcal{W}$ is
\begin{eqnarray}
	\text{P}(\exists i\in [m], \exists \mathbf{w}\in \mathcal{W}, \norm{\mu_i(\mathbf{w}) - \nabla F(\mathbf{w})}> r_\mu)\leq \frac{\delta}{2}.	
\end{eqnarray}

\textbf{Proof of (4) in Lemma \ref{lem:hpb3}}. For any $\mathbf{w}\in \mathcal{W}$,
\begin{eqnarray}
	&&\mathbb{E}\left[\exp\left[\lambda \mathbf{v}^T\left(\frac{1}{N}\sum_{i=1}^m n_i(\mu_i(\mathbf{w})-\nabla F(\mathbf{w}))\right)\right]\right]\nonumber\\
	&=&\Pi_{i=1}^m \mathbb{E}\left[\exp\left(\frac{\lambda}{N}\mathbf{v}^T n_i(\mu_i(\mathbf{w}) - \nabla F(\mathbf{w}))\right)\right]\nonumber\\
	&\overset{(a)}{\leq} & \Pi_{i=1}^m e^{\frac{1}{2}\sigma_\mu^2 \lambda^2 \frac{n_i^2}{N^2}} \text{ if } |\lambda|\leq \frac{N}{n_{\max} \sigma_\mu}\nonumber\\
	&=& e^{\frac{\sigma_\mu^2 \lambda^2}{2N^2}\sum_{i=1}^m n_i^2} \text{ if } |\lambda|\leq \frac{N}{n_{\max} \sigma_\mu}.
\end{eqnarray}
For (a), note that
\begin{eqnarray}
	\mathbb{E}\left[\exp\left(\frac{\lambda}{N}\mathbf{v}^T n_i(\mu_i(\mathbf{w}) - \nabla F(\mathbf{w}))\right)\right]\leq e^{\frac{1}{2}\sigma_\mu^2 \lambda^2 \frac{n_i^2}{N^2}}, \text{ if }|\lambda|\leq \frac{N}{n_i\sigma_\mu}.
\end{eqnarray}
With $|\lambda|\leq N/n_{\max}$, the above bounds hold for all $i$.

Therefore
\begin{eqnarray}
	\text{P}\left(\mathbf{v}^T \left(\frac{1}{N}\sum_{i=1}^m n_i(\mu_i(\mathbf{w}) - \nabla F(\mathbf{w}))\right)> u\right)\leq \underset{0\leq \lambda\leq N/(n_{\max}\sigma_\mu)}{\inf}e^{-\lambda u}e^{\frac{\sigma_\mu^2 \lambda^2}{2N^2}\sum_{i=1}^m n_i^2}
	\label{eq:pb}
\end{eqnarray}
If $u\leq (\sigma_\mu \sum_{i=1}^m n_i^2)/(Nn_{\max})$, then \eqref{eq:pb} reaches maximum with $\lambda = N^2 u/(\sigma_\mu^2 \sum_{i=1}^m n_i^2)\leq N/(n_{\max}\sigma_\mu)$. Then
\begin{eqnarray}
	\text{P}\left(\mathbf{v}^T\sum_{i=1}^m n_i(\mu_i(\mathbf{w}) - \nabla F(\mathbf{w})) > u\right)\leq \exp\left(-\frac{N^2 u^2}{2\sigma_\mu^2 \sum_{i=1}^m n_i^2}\right).
	\label{eq:smallu}
\end{eqnarray}
If $u>(\sigma_\mu \sum_{i=1}^m n_i^2)/(Nn_{\max})$, then \eqref{eq:pb} reaches maximum with $\lambda = N/(n_{\max}\sigma_\mu)$. Then
\begin{eqnarray}
	\text{P}\left(\mathbf{v}^T\sum_{i=1}^m n_i(\mu_i(\mathbf{w}) - \nabla F(\mathbf{w})) > u\right)&\overset{(a)}{\leq}& \exp\left[-\frac{Nu}{n_{\max}\sigma_\mu}+\frac{1}{2n_{\max}^2}\sum_{i=1}^m n_i^2\right]\nonumber\\
	&\overset{(b)}{\leq} & \exp\left[-\frac{Nu}{n_{\max}\sigma_\mu}+\frac{Nu}{2n_{\max}\sigma_\mu}\right]\nonumber\\
	&=&\exp\left[-\frac{Nu}{2n_{\max}\sigma_\mu}\right],
	\label{eq:largeu}
\end{eqnarray}
in which (a) uses \eqref{eq:pb}, (b) uses the condition $u>(\sigma_\mu \sum_{i=1}^m n_i^2)/(Nn_{\max})$, which yields
\begin{eqnarray}
	\frac{\sum_{i=1}^m n_i^2}{n_{\max}}<\frac{Nu}{\sigma_\mu}.
\end{eqnarray}
Hence (b) holds. From \eqref{eq:smallu} and \eqref{eq:largeu},
\begin{eqnarray}
	\text{P}\left(\mathbf{v}^T\sum_{i=1}^m n_i(\mu_i(\mathbf{w}) - \nabla F(\mathbf{w})) > u\right)\leq \exp\left[-\min\left\{\frac{N^2 u^2}{2\sigma_\mu^2 \sum_{i=1}^m n_i^2}, \frac{Nu}{2n_{\max}\sigma_\mu} \right\}\right].
\end{eqnarray}
Similar to \eqref{eq:largeprob},
\begin{eqnarray}
	\text{P}\left(\norm{\sum_{i=1}^m n_i(\mu_i(\mathbf{w}) - \nabla F(\mathbf{w}))} > u\right)\leq 6^d\exp\left[-\min\left\{\frac{N^2 u^2}{8\sigma_\mu^2 \sum_{i=1}^m n_i^2}, \frac{Nu}{4n_{\max}\sigma_\mu} \right\}\right].
\end{eqnarray}
Recall the definition of $\Delta_\mu$ in \eqref{eq:deltamu}, for all $\mathbf{w}\in \mathcal{W}$,
\begin{eqnarray}
	\text{P}\left(\norm{\sum_{i=1}^m n_i(\mu_i(\mathbf{w}) - \nabla F(\mathbf{w}))} > \Delta_\mu\right)\leq \frac{\delta}{2}.	
\end{eqnarray}
Now we have proved that all four conditions in Lemma \ref{lem:hpb3} are violated with probability no more than $\delta/2$. Hence, with probability $1-2\delta$, all these four conditions hold. The proof is complete.

\subsection{Proof of Lemma \ref{lem:diff5}}\label{sec:diff5}
Now \eqref{eq:hstable} still holds, i.e.
\begin{eqnarray}
	\nabla_s h(g(\mathbf{w}), \mathbf{w}) = 0.
\end{eqnarray}
\eqref{eq:ha} becomes
\begin{eqnarray}
	\norm{\nabla_s h(a(\mathbf{w}), \mathbf{w})}\leq \sum_{i\in \mathcal{B}}n_i(T_i+2r_{0i} + 2r_\mu).
\end{eqnarray}
Note that \eqref{eq:hess} still holds. Now define $r_s=T_0-2r_\mu$. For all $\mathbf{s}\in B(a(\mathbf{w}), r_s)$, $i\notin \mathcal{B}$, we have
\begin{eqnarray}
	\norm{\mathbf{s}-\mathbf{X}_i(\mathbf{w})} &=& \norm{\mathbf{s}-\mathbf{G}_i(\mathbf{w})}\nonumber\\
	&\leq & \norm{\mathbf{s}-a(\mathbf{w})} + \norm{a(\mathbf{w}) - \mathbf{G}_i(\mathbf{w})}\nonumber\\
	&\leq & r_s+2r_{0i}+2r_\mu\nonumber\\
	&\leq & T_0+\frac{2R}{\sqrt{n_i}}\nonumber\\
	&\leq & T_0+\frac{2R}{\sqrt{n_i}}\nonumber\\
	&\leq &T.
\end{eqnarray}
The last step comes from the statements in Lemma \ref{lem:diff5} that $T_i=T_0+M/\sqrt{n_i}$ and $M\geq 2R$. Therefore, for $s\in B(a(\mathbf{w}), r_s)$,
\begin{eqnarray}
	\nabla_s^2 h(\mathbf{s}, \mathbf{w})\succeq N(1-\epsilon)\mathbf{I}.
\end{eqnarray}
Note that
\begin{eqnarray}
	\norm{\nabla_s h(a(\mathbf{w}), \mathbf{w})} &=& \norm{\nabla_s h(a(\mathbf{w}), \mathbf{w}) - \nabla_s h(g(\mathbf{w}), \mathbf{w})}\nonumber\\
	&\geq & N(1-\epsilon)\min\{\norm{g(\mathbf{w}) - a(\mathbf{w})}, r_s \}.
\end{eqnarray}
Therefore
\begin{eqnarray}
	\min\left\{\norm{g(\mathbf{w}) - a(\mathbf{w})}, r_s \right\}&\leq& \frac{\sum_{i\in \mathcal{B}}n_i(T_i+2r_{0i}+2r_\mu)}{N(1-\epsilon)}\nonumber\\
	&\leq & \frac{\sum_{i\in \mathcal{B}}n_i \left(T_0+\frac{M+2R}{\sqrt{n_i}}+2r_\mu\right)}{N(1-\epsilon)}\nonumber\\
	&\leq & \frac{(T_0+2r_\mu) \epsilon N + \epsilon(M+2R)\sqrt{mN}}{N(1-\epsilon)}\nonumber\\
	&=&\frac{\epsilon}{1-\epsilon}(T_0+2r_\mu) + \frac{\epsilon}{1-\epsilon}(M+2R)\sqrt{\frac{m}{N}}.
\end{eqnarray}
Note that from \eqref{eq:t0}, 
\begin{eqnarray}
	\frac{\epsilon}{1-\epsilon}(T_0+2r_\mu) + \frac{\epsilon}{1-\epsilon}(M+2R)\sqrt{\frac{m}{N}}\leq r_s.
\end{eqnarray}
Therefore
\begin{eqnarray}
	\norm{g(\mathbf{w}) - a(\mathbf{w})}\leq \frac{\epsilon}{1-\epsilon} (T_0+2r_\mu) + \frac{\epsilon(M+2R)}{1-\epsilon}\sqrt{\frac{m}{N}}.
\end{eqnarray}

\subsection{Proof of Lemma \ref{lem:diff6}}\label{sec:diff6}
\begin{eqnarray}
	\norm{a(\mathbf{w}) - \nabla F(\mathbf{w})}&\leq & \norm{\frac{1}{N}\sum_{i=1}^m n_i\mathbf{G}_i(\mathbf{w}) - \nabla F(\mathbf{w})}\nonumber\\
	&\leq & \norm{\frac{1}{N}\sum_{i=1}^m n_i\mathbf{G}_i(\mathbf{w}) - \frac{1}{N}\sum_{i=1}^m n_i\mu_i(\mathbf{w})}+\norm{\frac{1}{N}\sum_{i=1}^m n_i\mu_i(\mathbf{w}) - \nabla F(\mathbf{w})}\nonumber\\
	&\leq &\Delta_0+\Delta_\mu.
\end{eqnarray}

\end{document}